\def\eqref#1{equation~\ref{#1}}
\def\1{\bm{1}}
\def\va{{\bm{a}}}
\def\vg{{\bm{g}}}
\def\vh{{\bm{h}}}
\def\vv{{\bm{v}}}
\def\eva{{a}}
\def\mA{{\bm{A}}}
\def\mB{{\bm{B}}}
\def\mD{{\bm{D}}}
\def\mI{{\bm{I}}}
\def\mL{{\bm{L}}}
\def\mP{{\bm{P}}}
\def\mQ{{\bm{Q}}}
\def\mW{{\bm{W}}}
\def\mX{{\bm{X}}}
\def\mY{{\bm{Y}}}
\def\mZ{{\bm{Z}}}
\DeclareMathAlphabet{\mathsfit}{\encodingdefault}{\sfdefault}{m}{sl}
\SetMathAlphabet{\mathsfit}{bold}{\encodingdefault}{\sfdefault}{bx}{n}
\newcommand{\tens}[1]{\bm{\mathsfit{#1}}}
\def\tA{{\tens{A}}}
\def\tX{{\tens{X}}}
\def\gG{{\mathcal{G}}}
\def\emA{{A}}
\newcommand{\etens}[1]{\mathsfit{#1}}
\def\etA{{\etens{A}}}
\newcommand{\R}{\mathbb{R}}
\DeclareMathOperator*{\argmin}{arg\,min}
\theoremstyle{plain}
\newtheorem{theorem}{Theorem}[section]
\newtheorem{proposition}[theorem]{Proposition}
\newtheorem{lemma}[theorem]{Lemma}
\theoremstyle{definition}
\newtheorem{definition}[theorem]{Definition}
\theoremstyle{remark}
\newtheorem{remark}[theorem]{Remark}
\crefname{theorem}{Theorem}{Theorems}
\crefname{definition}{Definition}{Definitions}
\newcommand{\mypar}[1]{\textbf{#1}\hspace{4pt}}
\newcommand{\revised}[1]{\textcolor{black}{#1}}
\acrodef{GNN}[GNN]{Graph neural network}
\acrodef{GCN}[GCN]{graph convolutional network}
\acrodef{MLP}[MLP]{multi-layer perceptron}
\acrodef{GFM}[GFM]{graph foundation model}
\acrodef{FG-GFM}[FG-GFM]{feature-generalized GFMs}
\acrodef{UGM}[UGM]{universal graph model}
\acrodef{NLP}[NLP]{natural language processing}
\acrodef{LLM}[LLM]{large language model}
\acrodef{CV}[CV]{computer vision}
\acrodef{RNN}[RNN]{recurrent neural network}
\icmltitlerunning{View Space: Learning Representation across Arbitrary Graphs}
\begin{document}

\twocolumn[
  \icmltitle{View Space: Learning Representation across Arbitrary Graphs}



  \icmlsetsymbol{equal}{*}

  \begin{icmlauthorlist}
    \icmlauthor{Dooho Lee}{kaistee}
    \icmlauthor{Myeong Kong}{kaistee}
    \icmlauthor{Minho Jeong}{snucs}
    \icmlauthor{Jaemin Yoo}{snucs}
  \end{icmlauthorlist}

\icmlaffiliation{kaistee}{School of Electrical Engineering, KAIST, Daejeon, Republic of Korea}
\icmlaffiliation{snucs}{Department of Computer Science and Engineering, Seoul National University, Seoul, Republic of Korea}

\icmlcorrespondingauthor{Jaemin Yoo}{jaeminyoo@snu.ac.kr}
  \icmlkeywords{Machine Learning, ICML}

  \vskip 0.3in
]



\printAffiliationsAndNotice{}  

\begin{abstract}
Generalizing pretrained models to unseen datasets without retraining is a central challenge toward foundation models.
Achieving fully inductive inference on numerical data is particularly difficult due to large variations in feature dimensionality and semantics across datasets.
We observe that, in the presence of graph structure, numerical data admits a distinct structure-induced representational axis beyond the feature space, which we formalize as the \emph{view space}.
This view space enables a unified representation of graphs with heterogeneous features and motivates \emph{Graph View Transformation} (GVT), a class of parametric mappings that can be shared across arbitrary graphs.
We instantiate this framework with Recurrent GVT, an architecture for fully inductive node representation learning in node classification.
Pretrained on OGBN-Arxiv and evaluated on 27 benchmarks, Recurrent GVT outperforms GraphAny, the prior fully inductive graph model, by +8.93\%, and surpasses 12 individually tuned GNNs by at least +3.30\%.
These results establish the view space as a principled and practical foundation for learning across graphs with heterogeneous feature spaces.
Code and checkpoints are available
in \href{https://github.com/dooho00/graph-view-space}{\textcolor{Blue}{{this link}}}.
\end{abstract}

\section{Introduction}

Foundation models in \ac{NLP} and \ac{CV} have transformed how models are built and shared across datasets \citep{brown2020language, kolesnikov2020big, radford2021clip, touvron2023llama}.
The common approach is to pretrain a backbone encoder on large-scale corpora and then adapt it to smaller datasets with lightweight, per-dataset predictors \citep{he2016deep, devlin2019bert}.
This strategy enables cross-dataset inference without retraining the backbone, reducing costly per-dataset training and optimization while leveraging knowledge transferred from pretraining.

This paradigm is possible because models in these domains are trained and inferred on standardized input formats.
In \ac{NLP}, a tokenization maps arbitrary text into embeddings from a shared vocabulary \citep{sennrich2015neural, vaswani2017attention}. 
In \ac{CV}, images can be resized or patchified to a fixed resolution without semantic loss \citep{krizhevsky2012imagenet, dosovitskiy2021an}. 
These conventions create a common representation space across datasets---tokens or pixels---enabling trained models to perform inference on unseen datasets directly, without any training.

Extending this paradigm to graph-structured data is substantially more challenging.
Graphs resist standardization: they vary in size, topology, and sparsity, while node attributes differ widely in dimensionality and semantics across datasets \citep{ribeiro2017struc2vec, Pei2020Geom-GCN:, platonov2023a}.
\acp{GNN} partially address variation in graph size by operating on node sets with connectivity and performing message passing \citep{kipf2017semisupervised, hamilton2017inductive}.
Building on this, a large body of work further improves structural generalization across connectivity regimes, such as differences in connectivity patterns, sparsity levels, and homophily \citep{zeng2019graphsaint, qu2022neural, zhao2023graphglow, canturk2024graph}.

Despite these advances, many methods remain limited to per-dataset settings or families of graph datasets that share the same feature semantics~\citep{liu2024one, chen2024llaga, chen2024text}.
This limitation stems from feature transformations intrinsic to most \acp{GNN}, which are tightly coupled to the feature spaces observed during training, with restricted reuse across different feature spaces.
Several recent works address this \emph{feature heterogeneity} problem through SVD-based dimension matching and projection \citep{yu2025samgpt, zhao2024gcope} or learnable patching \citep{sun2025patchnet}, but they still require additional per-dataset fine-tuning to operate on unseen datasets.
As a result, performing inference on truly \emph{arbitrary} graphs without any training remains largely under-explored due to feature heterogeneity.

Recent work, GraphAny \citep{zhaofully}, has taken an important step toward this goal by formalizing learning across arbitrary graphs as \emph{fully inductive} learning and proposing the first model capable of handling arbitrary graph datasets.
GraphAny addresses feature heterogeneity by learning from relative distances between linear predictions rather than directly from node features.
However, it operates by attending over linear predictions rather than learning representations through transformations of the node-feature matrix, which limits its effectiveness as a representation learner.

While feature heterogeneity is a key obstacle, it is a general characteristic of numerical datasets.
Tabular foundation models address this challenge by training on synthetic data to generalize across arbitrary feature spaces and by modeling Bayesian inference from labeled samples~\cite{mullertransformers, hollmanntabpfn, hollmann2025accurate}.
However, these models do not incorporate graph structure and remain limited to high-dimensional raw features, which are common in graph datasets~\cite{shchur2018pitfalls, yang2023pane}.

In this work, we take an orthogonal approach unique to graph-structured data.
While feature spaces vary, all graphs share a universal property: connectivity.
We show that graph structure induces a distinct representational axis beyond features, which we formalize as the \emph{view space}.
By encoding graphs as fixed-size \emph{view vectors}, the view space provides a standardized interface for learning across graphs with heterogeneous features.
Our contributions are as follows:
\begin{enumerate}[leftmargin=2em, topsep=0pt, itemsep=0pt]
    \item We formalize the \emph{fully inductive node representation learning} (FI-NRL) problem, where a model maps arbitrary graphs to node representations (\Cref{sec:ugrl}).
    \item We introduce the \emph{view space}, a novel representational axis for graphs that provides a unified space for learning over arbitrary graphs (\Cref{sec:view_space}).
    \item We propose a parametric mapping \emph{Graph View Transformation} (GVT) and analyze its representational power relative to aggregations in \acp{GNN} (\Cref{sec:understanding}).
    \item We present \emph{Recurrent GVT} (RGVT), a practical realization of FI-NRL for node classification (\Cref{sec:recurrent_gvt}).
\end{enumerate}

For evaluation, we pretrain RGVT on OGBN-Arxiv \citep{hu2020open} and transfer it to 27 node classification benchmarks spanning diverse feature specifications.
Only with a lightweight predictor, RGVT outperforms GraphAny \citep{zhaofully} on 26 out of 27 benchmarks, achieving an average gain of +8.93\%, and surpasses 12 \acp{GNN} by at least +3.30\%, despite each being individually tuned and trained for its target dataset.
These results establish the view space as a principled and practical foundation for learning across graphs with heterogeneous feature spaces.

\section{Preliminaries}

\mypar{Tensor Notations.}
For a vector $\va$, let $\eva_i$ be its $i$-th element.  
For a matrix $\mA$, $\emA_{i,j}$ denotes the $(i,j)$-th entry, while $\mA_{i,:}$ and $\mA_{:,j}$ denote its $i$-th row and $j$-th column, respectively.
For a 3-D tensor $\tA$, $\etA_{i,j,k}$ denotes the element at position $(i,j,k)$, and $\tA_{i,j,:}$ denotes the vector along the third mode.

\mypar{Graph Notations.}
Let $\gG=(V,E)$ be an undirected graph with $N=|V|$ nodes, where $V$ and $E$ denote the node and edge sets, respectively.
The adjacency matrix is denoted as $\mA \in \{0,1\}^{N\times N}$, where $\emA_{ij}=1$ if and only if $(i,j)\in E$.
The node-feature matrix is $\mX \in \R^{N \times F}$, where each row $\mX_{n,:}$ is a feature vector of node $n$, and each column $\mX_{:,f}$ is the $f$-th feature element across nodes.

\mypar{Graph Neural Networks.}
Given a graph $\gG$, a graph neural network (GNN) updates node representations layer by layer from the input node-feature matrix $\mZ^{(0)}=\mX$.
At layer $l$, the node-representation matrix is denoted by $\mZ^{(l)} \in \R^{N \times F_l}$, where $F_l$ is the size of its representation.
Each layer applies two key operations, \emph{aggregation} and \emph{combination} \citep{MessagePassing,  xu2018jknet, pre-trainGNN}:
\begin{equation*}
\mZ^{(l)}=\mathrm{Combine}^{(l)}\big(\mathrm{Aggregate}^{(l)}(\mZ^{(l-1)},\mA),\mZ^{(l-1)}\big).
\label{eq:gnn}
\end{equation*} 
At each layer, the \emph{aggregation} conveys structural information by propagation, multiplying the node-representation matrix $\mZ^{(l)}$ with a matrix $\hat{\mA} \in \R^{N \times N}$ derived from the adjacency matrix $\mA$ either statically~\citep{kipf2017semisupervised, hamilton2017inductive} or dynamically~\citep{velikovi2018graph, chienadaptive}.
The \emph{combination} then integrates aggregated results with the original node representations, typically through a feature transformation parametrized by a learnable weight matrix $\mW^{(l)} \in \R^{F_{l-1} \times F_l}$.

\section{Representation Learning Across Graphs}
\label{sec:ugrl}

We define \revised{\emph{fully inductive node representation learning} (FI-NRL)} as learning a \revised{representation} function $\Psi(\mX,\mA)=\mZ$ that can map any unseen graph to node representations that jointly encode node features and graph structure in a latent space.
A fully inductive function $\Psi$ must satisfy
\[
\Psi: \mathbb{R}^{N\times F} \times \{0,1\}^{N\times N} \to \mathbb{R}^{N\times H}
\quad\text{for all } N,F \ge 1,
\]
ensuring that $\Psi$ operates on graphs with arbitrary numbers of nodes $N$ and arbitrary feature dimensionalities $F$, where the adjacency matrix encodes graph connectivity.

This formulation allows dataset-specific, lightweight predictors to be trained directly on the learned representations $\mZ$, while the representation function $\Psi$ is shared across arbitrary graphs.
We next introduce two requirements that guarantee a function $\Psi$ is fully inductive.


\begin{enumerate}[label=\textbf{R\arabic*.}, leftmargin=2em, topsep=0pt]
    \item The function $\Psi$ must be \textbf{equivariant to node permutations} to ensure it respects graph isomorphisms. For any permutation matrix $\mP \in \{0,1\}^{N\times N}$:
    \[
    \Psi(\mP\mX, \mP\mA\mP^\top) = \mP\,\Psi(\mX,\mA).
    \]
    \item To support heterogenous feature sets across graphs, $\Psi$ must also be \textbf{equivariant to feature permutations}. For any permutation matrix $\mQ \in \{0,1\}^{F \times F}$:
    \[
    \Psi(\mX \mQ, \mA) = \Psi(\mX, \mA)\mQ.
    \]
\end{enumerate}


\begin{lemma}
\label{lem:graph_universality}
A function $\Psi$ that is equivariant to both node permutations (\textbf{R1}) and feature permutations (\textbf{R2}) is well-defined for graphs of arbitrary size $(N,F)$.
\end{lemma}

The formal proof is provided in \Cref{appendix:proof_lemma3.1}. 
Intuitively, permutation equivariance enforces order-agnostic, set-based operations that naturally handle variable-sized inputs (e.g., a node’s neighborhood), making the function independent of the number of elements, namely $N$ and $F$ in our case.

\textbf{R1} is a standard property satisfied by most graph models, including \acp{GNN} \citep{ kipf2017semisupervised, hamilton2017inductive, velikovi2018graph, xupowerful, maron2019provably}, as they are explicitly designed to operate on graphs of arbitrary size. 
In contrast, \textbf{R2} poses a key challenge that most existing \acp{GNN} fail to meet. 
This is because they encode knowledge through feature transformations that rely on learnable weight matrices, e.g., $\mW \in \R^{F \times F^{\prime}}$.
which are tied to the fixed feature dimensionality and its ordering.
As a result, such learned parameters do not generalize to graphs with different feature specification.
Together, R1 and R2 characterize the core challenge of FI-NRL.

\section{The View Space}
\label{sec:view_space}

We introduce the \emph{view space}, a novel representation space in which graphs with arbitrary numbers of nodes, edges, and features are mapped into consistent, fixed-dimensional \emph{view vectors}.
We then propose \emph{Graph View Transformation} (GVT), which is the parametric transformation satisfying dual permutation equivariance stated in \Cref{sec:ugrl} via the view space. 
Formal proofs of \Cref{lem:view_finder} and \Cref{thm:GVT_universal} in this section are provided in \Cref{appendix:proof_section4}.

\subsection{Uncovering the Hidden Space of Graphs}

The node-feature matrix \(\mX \in \R^{N \times F}\) of a graph naturally exhibits two orthogonal spaces.
Each row \(\mX_{n,:}\) lies in the \emph{feature space} $\R^F$, representing the features of node $n$, while each column \(\mX_{:,f}\) lies in the \emph{node space} $\R^N$, representing the feature $f$ across nodes.
However, this two-dimensional representation cannot simultaneously achieve permutation equivariance along both axes: feature-axis transformations violate R2, while node-axis transformations violate R1.

Our key idea to overcome this limitation is to lift the representation into a higher-dimensional space by adding an extra axis, yielding a tensor \(\tX \in \R^{N \times F \times C}\).
Then, the new axis $C$ permits transformations that remain equivariant under both node and feature permutations.

We introduce the \emph{view space}, the third axis of the graph representation.
This axis is formed through a \emph{view stacking} operation, which collects multiple versions of the node-feature matrix and arranges them along this new dimension.
Each version is produced by a \emph{view finder}, with each view finder generating a distinct node-feature matrix that captures a different structural perspective of the graph.

\begin{figure}[t]
\centering
\includegraphics[width=0.90\linewidth]{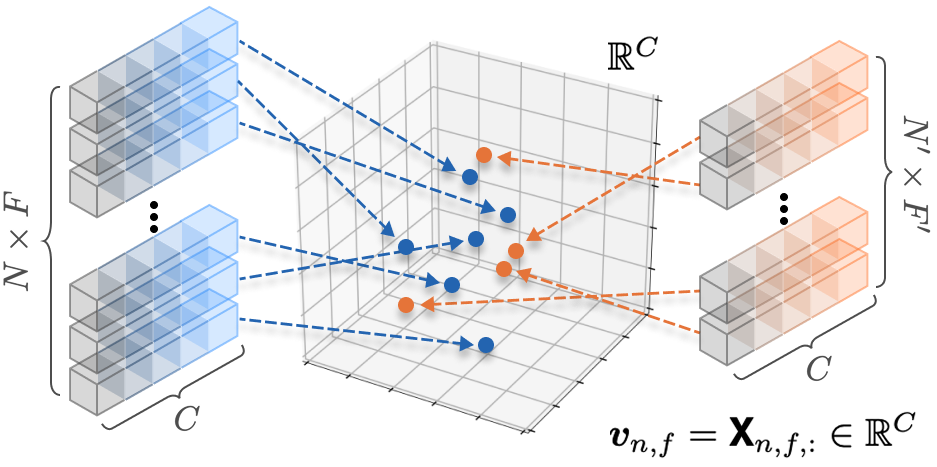}
\vspace{-1mm}
\caption{
Graphs of varying sizes and features can be mapped into the view space as $N \times F$ view vectors $\vv_{n,f} = \tX_{n,f,:}\in \R^C$.
}
\vspace{-2mm}
\label{fig:view_space}
\end{figure}

\begin{definition}[View Finder]
Given an adjacency matrix $\mA \in \R^{N\times N}$, a view finder is a matrix-valued function $\nu:\R^{N\times N}\to\R^{N\times N}$ that is node-permutation equivariant:
\[
\nu(P\mA P^\top) = P\,\nu(\mA)\,P^\top, \quad \forall P \in \{0,1\}^{N\times N}.
\]
Applying $\nu(\mA)$ to $\mX \in \R^{N \times F}$ produces a propagated node-feature matrix $\nu(\mA)\mX \in \R^{N \times F}$.
\end{definition}

\begin{lemma}
\label{lem:view_finder}
Common adjacency preprocessing methods used in \acp{GNN}---including self-augmented adjacency, degree-normalized variants (row-stochastic or symmetric) \citep{kipf2017semisupervised,hamilton2017inductive}, spectral filters \citep{defferrard2016chebnet}, diffusion kernels \citep{gasteiger2018combining}, and their polynomials---are all valid view finders.
\end{lemma}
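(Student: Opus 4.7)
The strategy is to recognize that the collection
$\mathcal{E} := \{\nu : \R^{N\times N} \to \R^{N\times N} \mid \nu(P\mA P^\top) = P\,\nu(\mA)\,P^\top \text{ for all permutations } P\}$
is closed under sums, scalar multiples, matrix products, and composition. I would first record these closure properties (each reduces to inserting a $P^\top P = I$ between factors), and then show that every family listed in the lemma lies in $\mathcal{E}$ by writing it in terms of a short list of primitives whose equivariance is immediate.

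The primitives are: (i) the identity map $\mA \mapsto \mA$; (ii) the constant map $\mA \mapsto I$, equivariant because $PIP^\top = I$; and (iii) the degree map $\mA \mapsto D(\mA) := \mathrm{diag}(\mA \mathbf{1})$. For (iii), $P\mathbf{1}=\mathbf{1}$ gives $D(P\mA P^\top) = \mathrm{diag}(P\mA\mathbf{1})$, and since $\mathrm{diag}(Pv) = P\,\mathrm{diag}(v)P^\top$ for any vector $v$, this equals $P\,D(\mA)\,P^\top$. A short corollary is that any scalar function $\phi$ applied entrywise to the diagonal preserves equivariance, so $D^{-1}$ and $D^{-1/2}$ (with the augmented degree $\tilde D$ when self-loops are present) are in $\mathcal{E}$.

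With these primitives and the closure properties, each case becomes a one-line check. Self-augmented adjacency $\mA + I$ is a sum. Row-stochastic $D^{-1}\mA$ and symmetric $D^{-1/2}\mA\,D^{-1/2}$ are products. The combinatorial Laplacian $L = D - \mA$ and its normalized variants are sums/products. Polynomial extensions $\sum_{k=0}^K c_k \mA^k$ and $\sum_{k=0}^K c_k L^k$ follow by closure under products. The remaining items, spectral filters and diffusion kernels such as personalized PageRank $(1-\alpha)(I - \alpha \mA)^{-1}$ or the heat kernel $e^{-tL}$, require one additional lemma: if $\nu \in \mathcal{E}$ and $f$ is analytic on a neighborhood of the spectrum of $\nu(\mA)$, then $f \circ \nu \in \mathcal{E}$. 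I would prove this in two stages. For the symmetric case, the spectral theorem gives $\nu(\mA) = U\Lambda U^\top$, so $P\nu(\mA)P^\top = (PU)\Lambda(PU)^\top$ has the same spectrum and $f(P\nu(\mA)P^\top) = (PU)f(\Lambda)(PU)^\top = P\,f(\nu(\mA))\,P^\top$. For the general case (e.g.\ non-symmetric $D^{-1}\mA$), I would fall back on the power-series definition, reducing the identity to polynomial closure which was already established.

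The step I expect to be the main obstacle is this last matrix-function argument, since diffusion kernels are defined analytically rather than algebraically and one has to be careful about where the operand is symmetric, whether it is diagonalizable, and when the chosen $f$ is defined on its spectrum. Handling this via the power-series (Neumann and exponential) representation sidesteps those issues uniformly and reduces everything to the algebraic closure of $\mathcal{E}$ already established; the remainder is bookkeeping.
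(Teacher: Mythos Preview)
Your proposal is correct and, in substance, matches the paper's proof: both rely on the conjugation identity $P^\top P = I$ (together with $D(P\mA P^\top)=P\,D(\mA)\,P^\top$) to verify equivariance of each preprocessing family, and both reduce matrix functions such as spectral filters and diffusion kernels to the polynomial/power-series case. Your packaging via closure properties of $\mathcal{E}$ is a bit more systematic than the paper's direct case-by-case verification, and your spectral-theorem route for the symmetric case is an extra alternative the paper does not use, but the underlying arguments are the same.
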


As noted in \Cref{lem:view_finder}, view finders are not new; they correspond to the adjacency preprocessing methods already used in the aggregation of \acp{GNN}.
Prior studies show that different preprocessing choices emphasize different structural aspects of a graph, leading to distinct propagation outcomes and variations in performance across graphs~\citep{mao2023demystifying, subramonian2024theoretical}.

From a set of distinct view finders, we obtain multiple node-feature matrices propagated differently, each capturing a distinct ``view'' of the graph. 
The \emph{view stacking} operation then combines these matrices along an additional axis, producing a node-feature-view tensor $\tX \in \R^{N \times F \times C}$.

\begin{definition}[View Stacking]
\label{def:view_stacking}
Given an adjacency matrix $\mA$, a node feature matrix $\mX$, and view finders $\{\nu_c\}_{c=1}^C$, the view stacking operator $\mathcal{V}$ is defined as
\[
\tX := \mathcal{V}(\mX,\mA)
= [\,\nu_c(\mA)\mX\,]_{c=1}^C
\in \R^{N \times F \times C},
\]
where $[\cdot]_{c=1}^C$ denotes the operation that stacks the $C$ propagated node-feature matrices
along a new third dimension, forming the node-feature-view tensor $\tX$.
\end{definition}

Introducing the new axis re-organizes the representation such that each node-feature entry $(n,f)$ is associated with a $C$-dimensional vector capturing its responses across different views.
The graph is therefore represented by $N \times F$ such vectors, all embedded in a shared $C$-dimensional space.
We refer to each such vector as a \emph{view vector}, and to the space they inhabit as the \emph{view space}, defined below:

\begin{definition}[View Vector]
For each node-feature pair $(n,f)$ in a graph, the view vector $\vv_{n,f} \in \mathbb{R}^C$ is the vector obtained by slicing the node-feature-view tensor $\tX$ along its third dimension, i.e., $\vv_{n,f} = \tX_{n,f,:}$. 
\end{definition}

\begin{definition}[View Space]
The view space is the vector space $\mathbb{R}^C$ in which the set of all view vectors $\{\vv_{n,f}\}$ resides. Each coordinate axis corresponds to a specific view finder.
\end{definition}

Since the number $C$ and the ordering of elements in view vectors are determined solely by the predefined set of view finders, any graph---regardless of its number of nodes or features---can be represented in the view space by $N \times F$ view vectors of dimension $C$, as illustrated in \Cref{fig:view_space}. 
This provides graphs a standardized input format with shareable space, analogous to those in \ac{NLP} and \ac{CV}.

\subsection{Learning Representations via View Space}

To leverage this common space for representation learning, we introduce \emph{Graph View Transformation} (GVT), a parametric function $\Psi$ that transforms the node-feature matrix $\mX$ via the view space. 
GVT operates in two steps: (i) lifting the graph into the view space through view stacking, and (ii) applying a learnable mapping $\phi$ to collapse each view vector into a scalar, yielding the node-representation matrix $\mZ$, as illustrated in \Cref{fig:gvt}. We formalize this as follows.

\begin{definition}[Graph View Transformation]
\label{def:graph_view_transformation}
Given an adjacency matrix $\mA$ and node features $\mX$, the \emph{Graph View Transformation} $\Psi$ maps to node representations $\mZ$ as
\begin{equation*}
\begin{split}
&\Psi(\mX, \mA) \;=\; \big[\, \phi(\tX_{n,f,:} \mid \theta) \,\big]_{n \in [N],\, f \in [F]}\in\R^{N\times F},
\end{split}
\end{equation*}
where $\tX=\mathcal{V}(\mX,\mA)$ is the node-feature-view tensor obtained via view stacking (\Cref{def:view_stacking}), and $\phi:\R^{C}\to\R$ is a learnable dimension-collapsing function with parameters $\theta$, shared across all node-feature entries $(n,f)$.
\end{definition}

The mapping $\phi$ takes each view vector $\tX_{n,f,:}$ as input, learning patterns directly in the view space. 
Because $\phi$ is applied independently to each node-feature pair $(n,f)$, it is invariant to both node and feature order. 
Consequently, GVT satisfies dual permutation equivariance:

\begin{theorem}
\label{thm:GVT_universal}
GVT is equivariant to both node permutations (\textbf{R1}) and feature permutations (\textbf{R2}) required for \revised{
fully inductive node representation learning (FI-NRL)}.
\end{theorem}

The dimension-collapsing mapping $\phi$ can take any form---such as linear projections, attention, or \acp{MLP}---\revised{while preserving the guarantees of \Cref{thm:GVT_universal}.}
In this work, we use an MLP for its simplicity.

Thus, GVT is a parametric transformation through the view space that satisfies the requirements of FI-NRL.
It thereby serves as a representation function across arbitrary graphs.

\begin{figure}[t]
\centering
\includegraphics[width=0.85\linewidth]{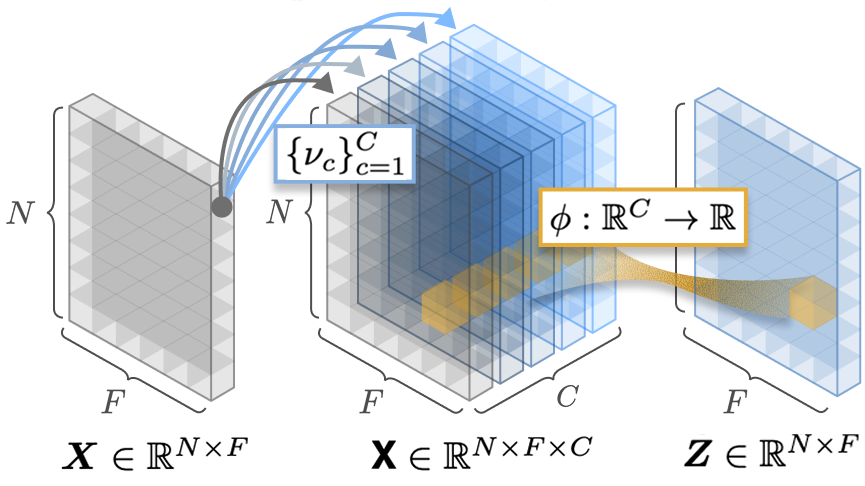}
\vspace{-1mm}
\caption{
Graph View Transformation (GVT) transforms the node-feature matrix $\mX$ into a node-feature-view tensor $\tX$ through view stacking, and then applies $\phi$ to each view vector to produce a scalar, yielding the node-representation matrix $\mZ$.
}
\vspace{-2mm}
\label{fig:gvt}
\end{figure}

\section{Representational Power of GVT}
\label{sec:understanding}

As GVT is provably fully inductive, it may seem unclear how it injects structural information into node representations in unseen graphs.
In this section, we show that nonlinear GVT enables dynamic structural aggregation specific to each node-feature pair, which goes beyond the aggregation abilities of existing \acp{GNN}.
Formal proofs of \Cref{lem:realize_classic_agg} and \Cref{lem:local_linearization} are provided in \Cref{appendix:proof_section4}.

\subsection{As Learnable Aggregation}

We first consider the linear case of GVT which uses an affine function as the dimension-collapsing mapping: $\phi(\vv\mid \vg,b) = \vg^\top \vv + b$, with learnable parameters $\vg \in \R^C$ and $b\in \R$. 
This simplifies GVT to the following formulation:
\begin{equation*}
\begin{split}
\Psi(\mX, \mA)
=\; \big(\Sigma_{c=1}^C g_c \,\nu_c(\mA)\big)\mX +b\mathbf{1}_{N\times F}.
\end{split}
\label{eq:linear_GVT}
\end{equation*}
The resulting equation shows that the linear GVT combines adjacency matrices from different view finders using learned weights ${g_c}$ to form a new adjacency, which is then applied to the node-feature matrix. 
This formulation generalizes the static aggregation operations used in many \acp{GNN}.

\begin{lemma}
\label{lem:realize_classic_agg}
With a set of view finders including the identity and powers of the symmetric and row-normalized adjacency matrices, a linear GVT suffices to produce the aggregation operations of many \acp{GNN}~\citep{kipf2017semisupervised, hamilton2017inductive, gasteiger2018combining, wu2019simplifying, chen2020simple, chienadaptive}.
\end{lemma}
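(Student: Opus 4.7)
The plan is a direct construction: for each aggregation rule cited in the lemma, I exhibit an explicit choice of scalar weights $\{g_c\}_{c=1}^{C}$ and bias $b=0$ in \Cref{eq:linear_GVT} that reproduces it. The hypothesis supplies a view-finder set containing the identity $I$ and a family of powers $\{\symfilter^k\}_{k\ge 0}$ and $\{\rwfilter^k\}_{k\ge 0}$ up to some degree $K$. Since each listed method's aggregation is ultimately a fixed or learnable polynomial in one of these normalized adjacencies, the realization follows by matching coefficients.

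Concretely, I would proceed method by method. For GCN \citep{kipf2017semisupervised} and SGC \citep{wu2019simplifying}, the aggregation matrices are $\symfilter$ and $\symfilter^K$, which are reproduced by setting $g_c = 1$ on the corresponding view finder and zero elsewhere; the GraphSAGE \citep{hamilton2017inductive} mean aggregator is treated analogously via $\rwfilter$. For GPR-GNN \citep{chienadaptive}, whose propagation is by definition $\sum_{k=0}^{K}\gamma_k\symfilter^k\mX$ with learnable $\gamma_k$, one simply identifies $g_k = \gamma_k$. For APPNP \citep{gasteiger2018combining}, the $K$-step recursion $\mZ^{(k)} = (1-\alpha)\symfilter\mZ^{(k-1)} + \alpha\mX$ unfolds to
\[
\mZ^{(K)} = \alpha \sum_{k=0}^{K-1}(1-\alpha)^k\,\symfilter^k\mX + (1-\alpha)^K\symfilter^K\mX,
\]
so the weights $g_k = \alpha(1-\alpha)^k$ and $g_K = (1-\alpha)^K$ suffice. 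Finally, the aggregation fragment of GCNII \citep{chen2020simple}, $(1-\alpha)\symfilter\mZ^{(l)} + \alpha\mZ^{(0)}$, is an affine combination of $\symfilter$ and the identity at the first layer, so weights $\alpha$ on $I$ and $(1-\alpha)$ on $\symfilter$ recover it.

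The main obstacle, modest though it is, is the APPNP case: its defining rule is an iterated affine recursion rather than a single matrix product, so care is required to verify that the finite-horizon unfolding is indeed a polynomial in $\symfilter$ with coefficients summing consistently with the telescoping identity above. All remaining identifications are immediate once each aggregation is written in closed form. Taken together, these constructions demonstrate that the linear GVT functional family of \Cref{eq:linear_GVT} contains the static aggregation operations of the cited GNN families, with $b=0$ throughout since none of these methods inject an additive bias inside aggregation.
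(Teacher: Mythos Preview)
Your proposal is correct and takes essentially the same approach as the paper: a direct, method-by-method construction that exhibits explicit coefficients $\{g_c\}$ realizing each cited aggregation as a special case of \Cref{eq:linear_GVT}. The paper's proof is organized as a table rather than prose, and it uses a different (but equivalent) $\alpha$-convention for APPNP and includes both $I$ and $\rwfilter$ for SAGE-mean (since SAGE concatenates self and neighbor-mean features), but these are cosmetic differences, not substantive ones.
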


That is, static aggregation can be viewed as a special case of linear GVT. 
This explains why static aggregations (e.g., neighborhood averaging) apply consistently across \revised{arbitrary} graphs---they are instances of GVT, \revised{which is fully inductive as shown} in \Cref{sec:view_space}.
Learning a linear GVT can be seen as a data-driven search for an effective aggregation, and \revised{its inference on} another graph amounts to \revised{adopting} the learned knowledge of what constitutes an effective aggregation.

\subsection{As Node-Feature Dynamic Aggregation}

Introducing nonlinearity into GVT, such as using \acp{MLP} in the mapping $\phi$, enhances its expressivity by fundamentally altering the picture, as it can no longer be reduced to a weighted sum of adjacencies.
To understand its behavior, we analyze it via local linear approximations.

\begin{lemma}
\label{lem:local_linearization}
Suppose that $\phi:\R^C \to \R$ is twice differentiable almost everywhere.
For any $\vv_0 \in \R^C$, Taylor’s theorem gives the local affine approximation
\[
\phi(\vv) \;=\; g(\vv_0)^\top \vv + b(\vv_0) + R_2(\vv;\vv_0),
\]
where $g(\vv_0) = \nabla_{\vv}\phi(\vv_0)$ and
$b(\vv_0) = \phi(\vv_0) - g(\vv_0)^\top \vv_0$.
If $|\nabla^2\phi(\vv)|\le M$ near $\vv_0$, then $|R_2(\vv;\vv_0)| \le \tfrac{M}{2}|\vv-\vv_0|^2$,
so $\phi$ is locally affine with quadratic error decay.
\end{lemma}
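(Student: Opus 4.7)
The plan is to derive the stated expansion as a direct application of Taylor's theorem, since the lemma is essentially a repackaging of the first-order Taylor expansion with explicit second-order remainder control. First I would invoke Taylor's theorem at $\vv_0$ to write
\[
\phi(\vv) = \phi(\vv_0) + \nabla\phi(\vv_0)^\top(\vv-\vv_0) + R_2(\vv;\vv_0),
\]
and then regroup: expanding the inner product separates a linear-in-$\vv$ piece $\nabla\phi(\vv_0)^\top\vv$ from the $\vv$-independent piece $\phi(\vv_0)-\nabla\phi(\vv_0)^\top\vv_0$. Identifying $g(\vv_0) := \nabla\phi(\vv_0)$ and $b(\vv_0) := \phi(\vv_0) - g(\vv_0)^\top\vv_0$ then matches the form claimed in the lemma. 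This step is purely algebraic bookkeeping and needs no further work.

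Second I would bound the remainder. The cleanest route is the integral form
\[
R_2(\vv;\vv_0) = \int_0^1 (1-t)\,(\vv-\vv_0)^\top\nabla^2\phi\bigl(\vv_0+t(\vv-\vv_0)\bigr)(\vv-\vv_0)\,dt,
\]
or equivalently the Lagrange mean-value form with some $\xi$ on the segment joining $\vv_0$ and $\vv$. Applying Cauchy--Schwarz together with the operator-norm hypothesis $\|\nabla^2\phi\|\le M$ inside the integrand, and using $\int_0^1(1-t)\,dt = \tfrac{1}{2}$, yields $|R_2(\vv;\vv_0)|\le \tfrac{M}{2}\|\vv-\vv_0\|^2$, which is exactly the stated quadratic decay.

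The only mild obstacle is reconciling the ``twice differentiable almost everywhere'' hypothesis with the pointwise-looking bound ``near $\vv_0$''. The natural reading, which I would adopt, is that $\phi$ is $C^2$ on some open neighborhood $U$ of $\vv_0$ on which $\|\nabla^2\phi\|\le M$; I would then restrict the statement to those $\vv\in U$ for which the line segment from $\vv_0$ to $\vv$ lies in $U$, so that both Taylor's theorem and the remainder integral are unambiguously well-defined. Under this interpretation the argument goes through verbatim, and the real content of the lemma lies not in the mathematics itself but in the viewpoint it unlocks in the next subsection, namely that a nonlinear GVT acts locally as a linear GVT with view weights $g(\vv_0)$ that vary across $(n,f)$ pairs.
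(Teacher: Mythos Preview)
Your proposal is correct and follows essentially the same approach as the paper: invoke the first-order Taylor expansion, regroup to identify $g(\vv_0)$ and $b(\vv_0)$, and bound the remainder via the Hessian hypothesis to obtain $\tfrac{M}{2}\|\vv-\vv_0\|^2$. The only cosmetic difference is that the paper writes the remainder as $\int_0^1[\nabla\phi(\vv_0+t\vh)-\nabla\phi(\vv_0)]^\top\vh\,dt$ and bounds the gradient increment by $Mt\|\vh\|$, whereas you use the equivalent second-order integral form $\int_0^1(1-t)\,\vh^\top\nabla^2\phi(\cdot)\,\vh\,dt$; both integrate to the same $\tfrac{M}{2}$ constant.
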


Thus, a nonlinear GVT behaves locally like a linear GVT around each view vector $\vv_0$. 
In particular, the weights given by the gradient $g(\vv_0) = \nabla_{\vv}\phi(\vv_0)$ depend on $\vv_0$ unlike the static weights of the linear GVT globally applied to all node-feature pairs.
This input dependence means that a nonlinear GVT implements an input-adaptive, or \emph{dynamic}, aggregation in the sense of \acp{GNN}.

Each view vector corresponds to a specific node-feature pair $(n,f)$, allowing the nonlinear GVT to assign distinct aggregation behaviors at this fine-grained level. 
We refer to this property as \emph{node-feature dynamic aggregation}. 
To the best of our knowledge, no existing \acp{GNN} achieve this granularity; prior work has been limited to node-wise~\citep{zhang2021node, zhang2022graph} or edge-wise~\citep{velikovi2018graph, brody2021attentive} dynamic aggregation.
Thus, the nonlinear GVT is not only fully inductive but also unlocks new representational power through fine-grained aggregation.

Consequently, the nonlinear GVT learns from each view vector and determines its own aggregation method. 
Inference \revised{on a different graph} is equivalent to \revised{adopting} the learned mapping from view vectors to aggregation rules.

%

\section{\revised{RGVT: A Fully Inductive Graph Encoder}}
\label{sec:recurrent_gvt}

Building on the \revised{fully inductive capability} (\Cref{sec:view_space}) and expressivity (\Cref{sec:understanding}) of GVT,
we instantiate FI-NRL for node classification by introducing a concrete architecture, RGVT, together with its training strategy.

\mypar{Decoupling Depth and Parameterization.}
Fully inductive capability does not guarantee coverage of high-order graph structures, as many graphs demand broader receptive fields to capture long-range dependencies~\citep{chen2020simple, alon2021on}.
However, simply stacking GVT layers exhibits a key limitation: different graphs require different depths. 
Some tasks rely on local neighborhoods, others on multi-hop information, and the impact of over-smoothing varies across datasets. 
As a result, fixed-depth models often fail to generalize, even across graphs that share the same feature space~\citep{xiao2023spa, liu2024rethinking}. 

To address this limitation, we introduce \emph{Recurrent Graph View Transformation} (RGVT), drawing inspiration from \acp{RNN}~\citep{hochreiter1997long}, which decouple parameterization from sequence length.
Instead of stacking multiple GVT layers with distinct parameters, RGVT applies a single nonlinear GVT $\Psi$ recurrently for $L$ steps using shared parameters $\theta$, yielding
\[
\mZ = \Psi(\cdot, \mA \mid \theta)^{L}(\mX) \in \R^{N\times F}.
\]
By decoupling parameterization from depth, the model can be applied at any $L$, allowing the depth to be selected per dataset without retraining. 
Recurrence expands the receptive field while adapting to dataset-specific requirements, ultimately improving generalization across diverse graphs.

\mypar{Training of RGVT.}
We consider node classification as the downstream task, which is commonly used to evaluate the quality of node representations.
We adopt an encoder--predictor paradigm for RGVT training.
Given a graph dataset $\gG_k$ with node feature matrix $\mX_k$ and adjacency matrix $\mA_k$, RGVT produces node representations by applying the encoder recurrently for $L_k$ steps:
\[
\mZ_k = \Psi(\mX_k, \mA_k \mid \theta)^{L_k}.
\]
To adapt these representations to the dataset-specific label space $\mY_k \in \R^{N_k \times D_k}$, with $D_k$ as the number of classes, we attach a lightweight predictor $f_k$ and optimize
\begin{equation*}
    \argmin\textsubscript{$\hat{\theta}$}\;
    \mathcal{L}\big( f_k(\mZ_k), \mY_k \big),
    \quad
    f_k : \R^{N_k \times F_k} \to \R^{N_k \times D_k}.
\end{equation*}
During pretraining, the optimization target $\hat{\theta}$ includes both the encoder parameters $\theta$ and the predictor $f_k$, whereas during adaptation to a new dataset, only $f_k$ is optimized given representations by frozen $\Psi$.
This enables a pretrained RGVT to produce representations for arbitrary graphs without retraining, while lightweight predictors specialize the representations to each dataset.
With an appropriate choice of recurrent depth $L_k$, such predictors can adapt effectively even in label-scarce or previously unseen graphs.

\begin{table*}[t]
\centering
\caption{
Performance (\%) comparison against predictor architectures and GraphAny across datasets grouped by feature type.
The numbers in parentheses represent the number of datasets in each category; for example, we include 5 graphs with signed dense features.
\textbf{1st} and \underline{2nd} best results are highlighted.
RGVT consistently and significantly outperforms all baselines.
}

\resizebox{\textwidth}{!}{
\begin{tabular}{l|ccccccc|c}
\toprule 
\multirow{2}{*}{\textbf{Method}} & 
OGBN-Arxiv &
Signed &
Unsigned &
Sparse &
Binary &
Binary &
One-hot &
\textbf{Total Avg.}
\\&
(1) &
Dense (5) &
Dense (4) &
(4) &
Dense (3) &
Sparse (8) &
(4) &
\textbf{(28)}
\\
\midrule
\text{Linear}& 
$52.44_{\pm0.04}$ & $53.29_{\pm0.08}$ & $75.67_{\pm0.64}$ & $66.41_{\pm0.57}$ & $72.18_{\pm0.69}$ & $57.11_{\pm0.77}$ & $38.86_{\pm2.52}$ & $59.41_{\pm0.84}$
\\
\text{MLP}& $53.80_{\pm0.14}$ & $55.08_{\pm0.31}$ & $75.86_{\pm0.68}$ & $69.02_{\pm0.77}$ & $72.88_{\pm0.64}$ & $57.65_{\pm1.84}$ & $39.34_{\pm2.43}$ & $60.43_{\pm1.20}$
\\
\midrule
\text{GraphAny (Wisconsin)} & $57.77_{\pm0.45}$&$59.12_{\pm0.46}$&$71.78_{\pm1.22}$&$81.61_{\pm0.20}$&$83.44_{\pm0.24}$&$55.25_{\pm2.62}$&$52.68_{\pm0.40}$&$64.72_{\pm0.96}$\\
\text{GraphAny (Cora)} & $58.58_{\pm0.10}$&$59.38_{\pm0.42}$&$71.76_{\pm1.60}$&$81.49_{\pm0.10}$&$83.35_{\pm0.09}$&$53.40_{\pm2.44}$&$53.30_{\pm0.93}$&$64.30_{\pm0.94}$\\
\text{GraphAny (Arxiv)} & $58.63_{\pm0.14}$&$59.70_{\pm0.36}$&$72.62_{\pm1.32}$&$81.68_{\pm0.23}$&$83.56_{\pm0.08}$&$54.18_{\pm2.49}$&$53.02_{\pm0.35}$&$64.71_{\pm0.89}$\\
\text{GraphAny (Best)} & 
$58.63_{\pm0.14}$&$59.74_{\pm0.32}$&$72.64_{\pm1.02}$&$81.89_{\pm0.12}$&$83.92_{\pm0.13}$&$55.58_{\pm2.47}$&$53.81_{\pm0.77}$&$65.30_{\pm0.93}$\\
\midrule
\textbf{RGVT}+ Linear (Arxiv) & 
$\underline{\smash{70.14_{\pm0.28}}}$ & $\underline{\smash{64.95_{\pm0.44}}}$ & $\underline{\smash{76.44_{\pm0.39}}}$ & $\mathbf{84.33_{\pm0.45}}$ & $\mathbf{85.11_{\pm0.54}}$ & $\underline{\smash{62.77_{\pm1.93}}}$ & $\underline{\smash{58.85_{\pm3.14}}}$ & $\underline{\smash{70.03_{\pm1.26}}}$\\

\textbf{RGVT}+ MLP (Arxiv) & 
$\mathbf{71.11_{\pm0.28}}$ & $\mathbf{66.37_{\pm0.90}}$ & $\mathbf{77.12_{\pm0.45}}$ & $\underline{\smash{83.98_{\pm0.81}}}$ & $\underline{\smash{84.86_{\pm0.51}}}$ & $\mathbf{63.87_{\pm1.58}}$ & $\mathbf{62.48_{\pm3.95}}$ & $\mathbf{71.13_{\pm1.41}}$\\

\bottomrule
\end{tabular}}
\label{tbl:Graphany_sum}
\vskip -0.1in
\end{table*}

\section{Related Works}


\mypar{Graph Neural Networks.}
\acp{GNN} have achieved notable success in graph learning by capturing structural information, but also face limitations such as over-smoothing, over-squashing, and inconsistent performance across nodes, features, and graphs~\citep{alon2021on, rusch2023survey, mao2023demystifying}.
To mitigate these, prior works explored structure editing~\citep{jin2020graph, ju2023graphpatcher}, feature editing~\citep{liu2021tail, lee2025aggregation}, and dynamic aggregation at the edge~\citep{velikovi2018graph, brody2021attentive} or node level~\citep{zhang2021node, zhang2022graph}.
Yet most approaches remain learning from the feature space, despite the aggregation is a view-space operation (\Cref{sec:understanding}).
In contrast, GVT operates directly in the view space, leveraging its underexplored representational capacity.

\mypar{\revised{Graph Foundation Models.}}
Some existing works named \acp{GFM} seek to improve generalization across different datasets.
One line of \acp{GFM} \citep{liu2024one, chen2024llaga} focuses on text-attributed graphs, inducing a shared feature space via natural language; however, many graphs lack textual attributes, and converting numerical features into text often leads to information loss and degraded performance \citep{chen2024text}. 
Another direction unifies non-textual feature spaces using singular value decomposition with alignment~\citep{yu2025samgpt, zhao2024gcope} or learnable patching~\citep{sun2025patchnet}, but these approaches typically assume per-dataset fine-tuning to handle unseen feature spaces. 
In contrast, our work differs in two key aspects:
(1) we target full \emph{feature heterogeneity} to operate on arbitrary graphs with any feature sets, texts or non-texts;
and (2) we adopt a fully inductive setting that requires no per-dataset fine-tuning during adaptation.

\mypar{GraphAny.}
GraphAny~\citep{zhaofully} encodes knowledge in the relative-distance space of predictions, which is consistently defined across arbitrary graphs. By attending over a set of pseudo-inverse-based linear predictors, it becomes the first model to support fully inductive inference. While our work also targets fully inductive learning, we focus on representation learning (FI-NRL) that maps graphs to node embeddings rather than directly producing predictions. 
This offers two key advantages: (1) representations can be reused across different label sets; and (2) unlike GraphAny’s reliance on linear predictors, representation learning supports a flexible choice of downstream predictors, enabling more expressive and stable label mappings.

\begin{table*}[t]
\centering
\caption{
Performance (\%) comparison against 12 dataset-specialized \acp{GNN} across datasets grouped by feature type. 
\textcolor{Red}{1st}, \textcolor{Orange}{2nd}, \textcolor{Emerald}{3rd} best results are highlighted. 
RGVT achieves the best average performance with the MLP predictor and ranks 2nd with the linear predictor.
}

\resizebox{\textwidth}{!}{
\begin{tabular}{l|ccccccc|c}
\toprule 
\multirow{2}{*}{\textbf{Method}} & 
OGBN-Arxiv &
Signed &
Unsigned &
Sparse &
Binary &
Binary &
One-hot &
\textbf{Total Avg.}
\\&
(1) &
Dense (5) &
Dense (4) &
(4) &
Dense (3) &
Sparse (8) &
(4) &
\textbf{(28)}
\\
\midrule
\text{GIN}& 
$65.77_{\pm1.08}$ & $55.57_{\pm3.94}$ & $67.95_{\pm3.90}$ & $71.93_{\pm4.66}$ & $44.83_{\pm4.72}$ & $43.30_{\pm2.49}$ & $55.31_{\pm3.88}$ & $54.98_{\pm3.70}$
\\
\text{GAT}& 
\textcolor{Emerald}{$\mathbf{71.89_{\pm0.24}}$} & $60.89_{\pm0.31}$ & $73.24_{\pm0.62}$ & $82.50_{\pm1.26}$ & $83.36_{\pm0.90}$ & $49.20_{\pm2.58}$ & $54.39_{\pm4.98}$ & $63.88_{\pm1.87}$
\\
\text{GATv2}& 
\textcolor{Red}{$\mathbf{72.13_{\pm0.11}}$} & $64.45_{\pm0.45}$ & $72.00_{\pm1.22}$ & $81.68_{\pm1.44}$ & $83.20_{\pm0.80}$ & $49.61_{\pm2.80}$ & $56.12_{\pm3.41}$ & $64.57_{\pm1.83}$
\\
\text{S$^2$GC}& 
$69.17_{\pm0.02}$ & $59.94_{\pm0.08}$ & $75.28_{\pm0.35}$ & $82.04_{\pm0.29}$ & $84.48_{\pm0.14}$ & $52.28_{\pm1.15}$ & $56.97_{\pm1.35}$ & $65.31_{\pm0.64}$
\\
\text{SGC}& 
$68.69_{\pm0.03}$ & $58.95_{\pm0.05}$ & $75.18_{\pm0.26}$ & $82.36_{\pm0.37}$ & $84.73_{\pm0.10}$ & $51.15_{\pm0.85}$ & \textcolor{Red}{$\mathbf{62.54_{\pm3.24}}$} & $65.66_{\pm0.82}$
\\
\text{JKNet}& 
$71.73_{\pm0.24}$ & $63.18_{\pm1.19}$ & $76.12_{\pm0.42}$ & $83.30_{\pm0.71}$ & $84.02_{\pm0.49}$ & $51.72_{\pm1.60}$ & $58.46_{\pm4.93}$ & $66.19_{\pm1.59}$
\\
\text{APPNP}& 
$70.85_{\pm0.15}$ & $64.86_{\pm0.21}$ & $76.27_{\pm0.36}$ & $83.66_{\pm0.38}$ & $84.11_{\pm0.21}$ & $55.21_{\pm1.47}$ & $49.29_{\pm1.26}$ & $66.26_{\pm0.77}$
\\
\text{GCN}& 
$71.19_{\pm0.30}$ & $61.49_{\pm0.27}$ & $76.00_{\pm0.38}$ & $83.48_{\pm0.43}$ & $84.43_{\pm0.26}$ & $53.03_{\pm1.49}$ & \textcolor{Emerald}{$\mathbf{59.51_{\pm1.43}}$} & $66.46_{\pm0.82}$
\\
\text{GCNII}& 
\textcolor{Orange}{$\mathbf{72.05_{\pm0.08}}$} & $67.11_{\pm0.25}$ & \textcolor{Red}{$\mathbf{77.54_{\pm0.45}}$} & $81.00_{\pm0.97}$ & $83.65_{\pm1.15}$ & $54.90_{\pm1.73}$ & $56.11_{\pm4.25}$ & $67.30_{\pm1.47}$
\\
\text{SAGE}& 
$71.22_{\pm0.24}$ & \textcolor{Orange}{$\mathbf{67.98_{\pm0.34}}$} & $76.63_{\pm0.65}$ & $82.14_{\pm0.90}$ & $83.69_{\pm0.51}$ & $60.07_{\pm2.57}$ & $51.84_{\pm3.39}$ & $68.36_{\pm1.55}$
\\
\text{GPRGNN}& 
$69.45_{\pm0.41}$ & \textcolor{Emerald}{$\mathbf{67.24_{\pm0.28}}$} & $76.78_{\pm0.18}$ & \textcolor{Red}{$\mathbf{84.35_{\pm0.40}}$} & \textcolor{Red}{$\mathbf{85.26_{\pm0.31}}$} & \textcolor{Emerald}{$\mathbf{60.61_{\pm1.89}}$} & $50.40_{\pm1.65}$ & $68.68_{\pm0.94}$
\\
\text{UniMP}& 
$71.78_{\pm0.12}$ & \textcolor{Red}{$\mathbf{69.09_{\pm0.34}}$} & \textcolor{Emerald}{$\mathbf{76.97_{\pm0.60}}$} & $82.28_{\pm0.62}$ & $84.22_{\pm0.54}$ & $59.15_{\pm3.46}$ & $54.94_{\pm3.61}$ & \textcolor{Emerald}{$\mathbf{68.86_{\pm1.80}}$}
\\
\midrule
\text{\textbf{RGVT} + Linear} &$70.14_{\pm0.28}$ & $64.95_{\pm0.44}$ & $76.44_{\pm0.39}$ & \textcolor{Orange}{$\mathbf{84.33_{\pm0.45}}$} & \textcolor{Orange}{$\mathbf{85.11_{\pm0.54}}$} & \textcolor{Orange}{$\mathbf{62.77_{\pm1.93}}$} & $58.85_{\pm3.14}$ & \textcolor{Orange}{$\mathbf{70.03_{\pm1.26}}$}\\
\text{\textbf{RGVT} + MLP} &$71.11_{\pm0.28}$ & $66.37_{\pm0.90}$ & \textcolor{Orange}{$\mathbf{77.12_{\pm0.45}}$} & \textcolor{Emerald}{$\mathbf{83.98_{\pm0.81}}$} & \textcolor{Emerald}{$\mathbf{84.86_{\pm0.51}}$} & \textcolor{Red}{$\mathbf{63.87_{\pm1.58}}$} & \textcolor{Orange}{$\mathbf{62.48_{\pm3.95}}$} & \textcolor{Red}{$\mathbf{71.13_{\pm1.41}}$}\\
\bottomrule
\end{tabular}}
\label{tbl:GNN_summary}
\vskip -0.1in
\end{table*}

\section{Experiments}

Our experiments evaluate how well view-space knowledge learned by RGVT transfers across arbitrary graphs. 
After pretraining, frozen RGVT is applied to unseen downstream graphs, training only a lightweight predictor, with recurrent depth $L$ selected by validation accuracy. We use two predictors: a linear classifier and a one-hidden-layer \ac{MLP}.

\mypar{Datasets}
We largely follow prior work~\citep{zhaofully} for dataset selection and their splitting.
Pretraining is conducted on OGBN-Arxiv~\citep{hu2020open} using its public split, and downstream evaluation is conducted on 27 node-classification benchmarks with diverse graph structures and features. 
We use public splits when available; otherwise, we sample 20 nodes per class for training, split the remainder evenly for validation and testing.
Due to the large number of benchmarks, results are grouped by feature type in the main paper; the full dataset list, grouping details and complete results are provided in \Cref{appendix:dataset} and \Cref{tbl:graphany_full}.

\mypar{Selection of View Finders.}
For RGVT, we use $\{\mI\} \cup \{(\mD^{-1}\mA)^k,(\mD^{-\frac{1}{2}}\mA\mD^{-\frac{1}{2}})^k\}_{k=1}^K$ as the set of view finders, where $\mD$ is the degree matrix, and consider $K \in \{1, 2, 3\}$ as a hyperparameter.
This choice is motivated by the fact that many aggregation operations used in popular \acp{GNN} can be expressed as linear combinations of elements in this set, and are therefore realizable by linear GVT, as shown in \Cref{lem:realize_classic_agg}.
We further present empirical analyses of the influence of the view-finder set on performance, along with guidelines for selecting candidate sets, in \Cref{appendix:view_finder_selection}.

\mypar{Implementation Details.}
The search space includes the number of \ac{MLP} layers in the mapping $\phi$, the recurrent depth $L$ during pretraining, the maximum view-finder order $K$ and the learning rate.
Hyperparameters are chosen using validation accuracy on the pretraining dataset (OGBN-Arxiv) and then fixed to ensure downstream graphs remain unseen. 
All experiments are repeated five times with independent seeds, and we report the mean and standard deviation.
Detailed information is given in \Cref{appendix:hyperparamters}.

\mypar{Baselines.}
We evaluate three categories of baselines. (i) A linear classifier and a one-hidden-layer \ac{MLP}, which serve as RGVT’s predictors. (ii) GraphAny~\citep{zhaofully}, the \revised{prior fully inductive graph model}, with three variants pretrained on the Wisconsin, Cora, and OGBN-Arxiv datasets, respectively.
(iii) Twelve supervised \acp{GNN} spanning diverse aggregation operations. Unlike RGVT and GraphAny, these models are trained and extensively tuned separately on each dataset, making them highly specialized. Further details of all baselines are provided in \Cref{appendix:baselines_details}.

\subsection{Overall Performance}

\mypar{Validation of \revised{FI-NRL}.}
In \Cref{tbl:Graphany_sum}, we first validate whether \revised{fully inductive node representations produced} by RGVT remain effective when adapted to graphs with unseen graph structures and features.
Across all datasets, RGVT achieves substantial gains over its predictor models, outperforming the linear classifier by +17.71\% and \ac{MLP} by +17.88\% on average.
This demonstrates that the pretrained RGVT effectively integrates structural information into node representations, while being useful even when transferring from dense word-embeddings (OGBN-Arxiv) to categorical, binary, or one-hot vectors.
Overall, these results indicate that knowledge captured in the view space is not only theoretically but also empirically \revised{fully inductive}.

We then compare RGVT with GraphAny, another \revised{fully inductive graph model}.
As shown in \Cref{tbl:Graphany_sum}, RGVT surpasses GraphAny even in its best-performing variant by +7.24\% with the linear predictor
and +8.93\% with the MLP predictor on average.
These results demonstrate that (1) GVT’s representation-predictor framework is substantially more expressive and robust than GraphAny’s predictor-attention approach, and (2) GVT’s node-feature dynamic aggregation exploits structural information across diverse graphs more effectively than GraphAny, whose linear predictors operate with fixed, static feature propagation.

\begin{table*}[t]
\centering
\caption{
Ablation results (\%) highlighting the contribution of nonlinearity and recurrence in RGVT.
}
\resizebox{\textwidth}{!}{
\begin{tabular}{l|ccccccc|c}
\toprule 
\multirow{2}{*}{\textbf{Method}} & 
OGBN-Arxiv &
Signed &
Unsigned &
Sparse &
Binary &
Binary &
One-hot &
\textbf{Total Avg.}
\\&
(1) &
Dense (5) &
Dense (4) &
(4) &
Dense (3) &
Sparse (8) &
(4) &
\textbf{(28)}
\\
\midrule
\text{\textbf{RGVT} + MLP} &$\mathbf{71.11_{\pm0.28}}$ & $\mathbf{66.37_{\pm0.90}}$ & {$\mathbf{77.12_{\pm0.45}}$} & {$\mathbf{83.98_{\pm0.81}}$} & {$\mathbf{84.86_{\pm0.51}}$} & {$\mathbf{63.87_{\pm1.58}}$} &{$\mathbf{62.48_{\pm3.95}}$} & {$\mathbf{71.13_{\pm1.41}}$}\\
\midrule
\text{w/o non-linearity} & $70.22_{\pm2.55}$ & $64.53_{\pm2.67}$ & $75.89_{\pm1.86}$ & $78.82_{\pm5.99}$ & $84.16_{\pm2.46}$ & $61.12_{\pm5.66}$ & $56.13_{\pm6.38}$ & $68.12_{\pm4.39}$\\
\midrule
\text{w/o recurrence} & $70.91_{\pm0.17}$ & $63.73_{\pm0.43}$ & $73.79_{\pm0.66}$ & $82.61_{\pm0.79}$ & $83.90_{\pm0.63}$ & $53.29_{\pm1.92}$ & $54.53_{\pm2.25}$ & $65.73_{\pm1.22}$ \\
\midrule
\text{w/o both} & $70.53_{\pm0.25}$ & $61.69_{\pm3.57}$ & $75.10_{\pm0.43}$ & $77.52_{\pm7.46}$ & $84.57_{\pm0.53}$ & $53.41_{\pm5.59}$ & $54.73_{\pm1.86}$ & $64.96_{\pm3.68}$\\
\bottomrule
\end{tabular}}
\vskip -0.1in
\label{tbl:ablation}
\end{table*}

\mypar{Power of \revised{FI-NRL}.}
We compare RGVT with 12 different \acp{GNN}.
This comparison favors individual \acp{GNN}, as they are extensively tuned and trained on each downstream graph, whereas RGVT operates \revised{fully} inductively, adapting only the recurrent depth $L$ and a lightweight predictor for each dataset.
This comparison assesses whether view-space knowledge, transferred from graphs with completely different feature spaces, can stand against dataset-specific graph knowledge obtained through conventional \ac{GNN} training.

Remarkably, RGVT with both predictors outperforms all 12 \acp{GNN} on average, as shown in \Cref{tbl:GNN_summary}.
Moreover, RGVT with the MLP predictor surpasses the best-performing \ac{GNN}, UniMP \citep{shi2021masked}, by an average margin of +3.30\%.
This demonstrates that \revised{fully inductive} view-space knowledge can match or even exceed the dataset-specific knowledge of supervised \acp{GNN}.
It suggests that feature transformations are not essential for understanding graphs---\emph{graph view transformation} (GVT) provides a powerful alternative.

Finally, a Wilcoxon signed-rank test (\Cref{sec:wilcoxon_test}) further confirms the statistical significance of RGVT over each \ac{GNN} across 28 datasets ($p < 0.05$), with GPRGNN \citep{chienadaptive} as the only exception ($p = 0.07$).
Complete per-dataset results are provided in \Cref{tbl:graphany_full}.

\subsection{Ablation Studies}

In \Cref{tbl:ablation}, we examine the role of two key components of RGVT: nonlinearity and recurrence.
Removing nonlinearity reduces accuracy to 68.12\%, comparable to GNN baselines.
This is expected because without nonlinearity, GVT collapses into a learnable aggregation (\Cref{sec:understanding}), giving it expressive power fundamentally similar to that of standard GNNs. 
Eliminating recurrence leads to an even larger drop in accuracy, underscoring its role in receptive-field expansion and depth adaptation, both of which are critical for generalization across diverse graphs (\Cref{sec:recurrent_gvt}). Further ablations on recurrent depth
during pretraining and on the choice of pretraining dataset are reported in \Cref{sec:ablation_studies}.

\begin{figure}[h]
\vspace{-1mm}
    \includegraphics[width=\linewidth]{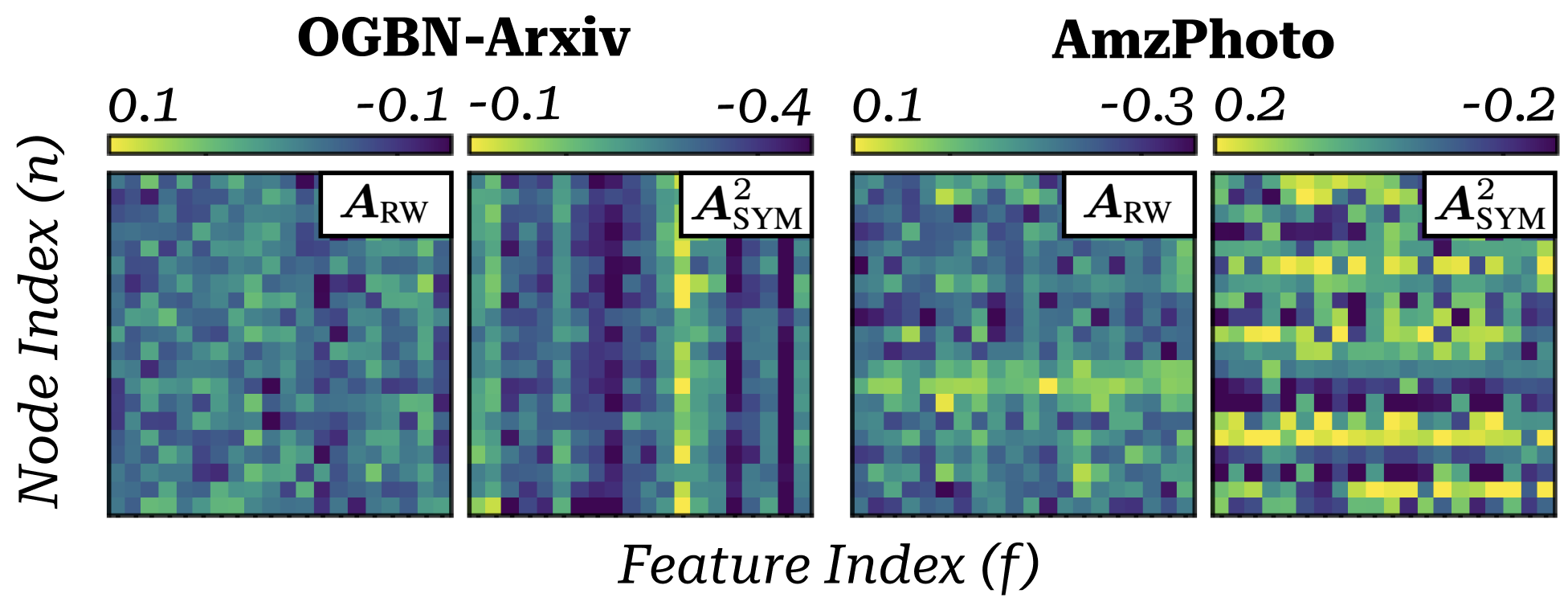}
\vspace{-4mm}
\caption{Node-feature heatmaps of linearly approximated aggregation weights for $\mA_{\text{RW}}$ and $\mA^2_{\text{SYM}}$ at the first layer of RGVT on the pretraining (OGBN-Arxiv) and transferred (AmzPhoto) datasets.}
\vspace{-4mm}
\label{fig:heatmap}
\end{figure}

\subsection{Visualization of Node-feature Dynamic Aggregation}

To empirically verify the node-feature dynamic aggregation of nonlinear GVT (\Cref{sec:understanding}) in practice, we compute the contribution of the $c$-th view to each node-feature pair using local linearization done in \Cref{lem:local_linearization}.
We visualize these contributions as heatmaps over sampled nodes and features (\Cref{fig:heatmap}), where the $x$-axis denotes feature indices, the $y$-axis denotes node indices, and the color intensity represents the corresponding linearized weights.

The distinct weight patterns across views ($\mA_{\text{RW}}$ vs. $\mA^2_{\text{SYM}}$) show that GVT performs adaptive view aggregation. 
Different node-feature pairs assign different weights to each view, confirming dynamic aggregation at the node-feature level. 
This behavior persists from the pretraining graph (OGBN-Arxiv) to a downstream graph (AmzPhoto), suggesting that node-feature dynamic aggregation transfers across datasets. The row- and column-wise patterns in the heatmaps further suggest that view vectors encode shared semantics across nodes and features, which GVT implicitly captures.

\subsection{Additional Analysis}
Due to space limitations, we summarize analyses of computational cost and text-attributed graphs here, with full details in \Cref{appendix:complexity} and \Cref{appendix:tag}, respectively.

\mypar{Computational Cost.}
Theoretically, RGVT scales linearly with the number of nodes $N$ and feature dimensionality $F$. 
Empirically, although RGVT is slower than typical GNNs, pretraining on OGBN-Arxiv takes about 10 minutes and inference across all 28 downstream datasets requires 5 ms on NVIDIA RTX A6000, remaining well within a practical range.
Moreover, by eliminating extensive per-dataset hyperparameter tuning, RGVT offers substantially greater efficiency when adapting to new datasets; each GNN required \emph{at least two days} of tuning in our experiments.

\mypar{Text-attributed Graphs.}
We further evaluate RGVT on seven text-attributed graph datasets using LLM-embedded features from recent work~\citep{wang2025generalization}. While GraphAny exhibits performance degradation relative to the GNNs, RGVT achieves competitive performance and ranks first on two datasets. These results suggest that GraphAny’s linear predictors are limited in exploiting the rich information in LLM-embedded features, whereas RGVT effectively preserves such information in its representations.


\section{Discussion}

In this section, we discuss GVT’s design tradeoffs, current limitations, and future research directions.

\subsection{Design Tradeoffs}

To achieve feature equivariance, GVT learns solely along the view axis and avoids explicit feature mixing. As a result, GVT cannot directly model cross-feature interactions. However, this does not imply that GVT eliminates such interactions, since it preserves the original feature dimensions without compression, allowing downstream predictors to capture them. The t-SNE visualizations of raw features and output representations in \Cref{app:rep_vis} further suggest that GVT preserves the underlying feature structure effectively.

This feature-wise design is also closely tied to scalability. Because GVT operates independently on each feature, its complexity scales linearly with the feature dimension. In contrast, explicitly modeling feature interactions while preserving feature equivariance would generally require feature-wise attention or transformer-style operations, resulting in quadratic complexity with respect to the number of features. Such approaches quickly become impractical for many real-world graph datasets containing thousands of features. We therefore view the absence of explicit feature interaction modeling as an important tradeoff that enables scalable fully inductive learning across heterogeneous feature spaces.

\subsection{Limitations and Future Work}
Despite its strong transferability, GVT still has several limitations and opportunities for future improvement.

\mypar{Predictor training.}
Although GVT removes the need for per-dataset representation learning, it still requires training a lightweight predictor on top of the representations. One possible alternative is to use recent tabular foundation models (TFMs) \citep{hollmanntabpfn, hollmann2025accurate}, which can perform prediction through in-context learning without additional training by directly using GVT representations as input contexts. As shown in \Cref{app:TFM_predictor}, TFMs can further improve performance over standard MLP predictors. However, current TFMs typically have quadratic complexity with respect to feature dimensionality, making them impractical for many graph datasets with thousands of features. We therefore consider lightweight MLP predictors a more practical default choice at present, while more scalable TFMs remain a promising direction for future work.

\mypar{Recurrent-depth selection.}
GVT currently requires selecting the recurrent depth by training $L$ separate MLP predictors. Although this cost is substantially smaller than conventional GNN optimization with extensive hyperparameter tuning, it still introduces additional overhead. Nevertheless, our analysis in \Cref{app:depth_selection} shows that recurrent depths of 4 or 6 generally perform well and remain competitive with the strongest baseline, UniMP, on average. Future work may further eliminate this overhead through depth-adaptive architectures pretrained across diverse graph datasets.

\mypar{Beyond node classification.}
This work primarily focuses on node classification. Extending the view-space framework toward more general graph representation learning that supports edge- and graph-level tasks remains an important direction for future work. Incorporating edge features and broader graph families, such as hypergraphs, may further expand the applicability of view-space learning.
\section{Conclusion}
In this work, we introduce a graph-centric approach to address feature heterogeneity across datasets. 
We propose the \emph{view space}, an adjacency-induced representational axis that enables arbitrary graphs to be represented and processed in a unified manner. 
We further introduce a parametric mapping in this space, \emph{Graph View Transformation} (GVT),  which enables node-feature dynamic aggregation beyond the standard GNN aggregations.
As a practical instantiation for node classification, we propose Recurrent GVT (RGVT) and empirically demonstrate that view-space knowledge transfers effectively across graphs with diverse feature types, outperforming 12 dataset-specialized GNNs. Overall, the view space provides a unified, graph-centric framework for learning over heterogeneous feature spaces, bridging theoretical generality with strong empirical performance.

\section*{Acknowledgements}
This work was partly supported by the National Research Foundation of Korea (NRF) grant funded by the Korea government (MSIT) (RS-2024-00341425 and RS-2024-00406985), ``Advanced GPU Utilization Support Program'' funded by the Government of the Republic of Korea (Ministry of Science and ICT), and the New Faculty Startup Fund from Seoul National University.

\section*{Impact Statement}
In this paper, we aim to advance graph representation learning toward fully inductive applicability across graphs, with the primary goal of scientific research. Our approach is designed to broaden knowledge transfer across graphs, potentially benefiting a wide range of applications. We do not anticipate direct negative societal impacts, but we encourage responsible and ethical use of the work.

\section*{Use of LLMs}
LLMs were used to polish the writing of the paper.




\bibliography{example_paper}
\bibliographystyle{icml2026}

\newpage
\appendix
\onecolumn
\crefalias{section}{appendix}

\section{Wilcoxon Signed Rank Test}
\label{sec:wilcoxon_test}
\begin{figure}[h]
\vspace{-4mm}
\centering
\includegraphics[width=0.9\linewidth]{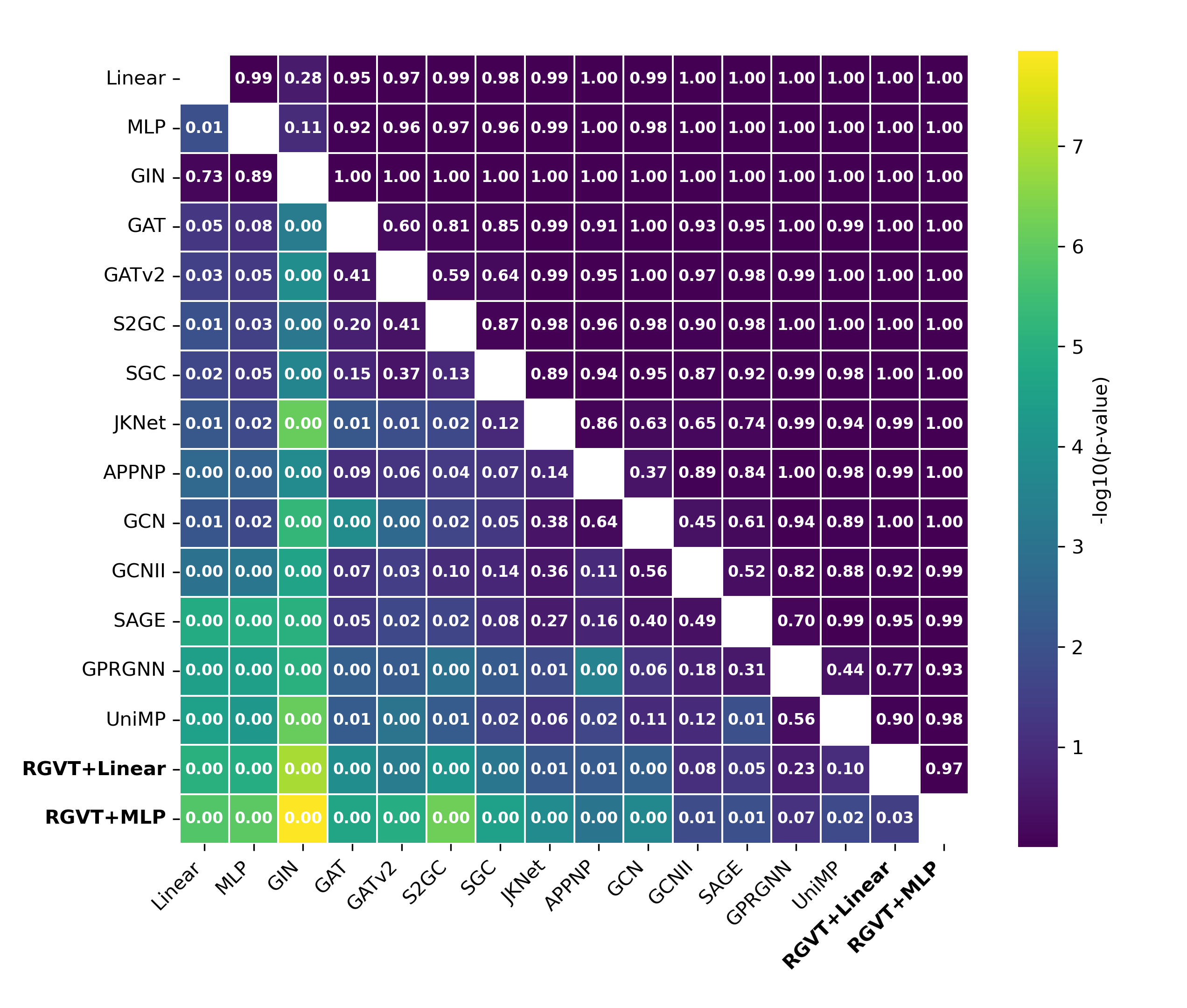}
\vspace{-4mm}
\caption{
Pairwise Wilcoxon signed-rank tests comparing RGVT using either a Linear or \ac{MLP} predictor, against 12 fully tuned \acp{GNN} across 28 datasets. Brighter colors correspond to stronger statistical significance. Remarkably, RGVT with a \ac{MLP} predictor—despite being pretrained on a single dataset—achieves statistically significant gains over all \acp{GNN} at the $p<0.05$ level, with GPRGNN \citep{chienadaptive} as the sole exception ($p=0.07$).
}
\label{fig:wilcoxon_plot}
\end{figure}

\section{Proof of Lemma 3.1}
\label{appendix:proof_lemma3.1}

\begin{lemma}[Permutation equivariance $\Rightarrow$ node-size independence (root-preserving)]
\label{lem:perm_size_agnostic}
For each $N\in\mathbb{N}$ let $\Psi_N:\R^{N\times F}\times\R^{N\times N}\to\R^{N\times G}$ satisfy
\[
\Psi_N(P\mX,\; P\mA P^\top)=P\,\Psi_N(\mX,\mA)\quad\text{for all permutation matrices }P\in\{0,1\}^{N\times N}.
\]
Fix a node $i$. Suppose that whenever two inputs $(\mX,\mA)$ and $(\mX',\mA')$ have identical $i$-rooted neighborhoods as multisets,
\[
\big\{(\mX_j,\mA_{ij})\big\}_{j=1}^N \;=\; \big\{(\mX'_j,\mA'_{ij})\big\}_{j=1}^N,
\]
there exists a permutation $P$ that bijects the two $i$-centered neighborhoods \emph{and fixes the root}, i.e., $P(i)=i$.
Then there exists a function $f$ on finite multisets, independent of $N$, such that
\[
\big(\Psi_N(\mX,\mA)\big)_i
= f\!\Big(\mX_i,\;\big\{(\mX_j,\mA_{ij})\big\}_{j=1}^N\Big).
\]
In particular, the rule and its parameters do not depend on the node size $N$.
\end{lemma}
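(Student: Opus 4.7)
The plan is to turn permutation equivariance into a pointwise statement at the root $i$ and then read off the output row as a function of multiset data. First I would fix $N$ and any two inputs $(\mX,\mA),(\mX',\mA')\in\R^{N\times F}\times\R^{N\times N}$ whose $i$-rooted multisets coincide. The hypothesis supplies a permutation matrix $P$ with $P(i)=i$ together with $\mX'=P\mX$ and $\mA'=P\mA P^\top$. Substituting into the equivariance assumption gives
\[
\Psi_N(\mX',\mA')=\Psi_N(P\mX,\,P\mA P^\top)=P\,\Psi_N(\mX,\mA),
\]
and extracting the $i$-th row while exploiting $P(i)=i$ (so that left multiplication by $P$ leaves row $i$ in place) yields $(\Psi_N(\mX',\mA'))_i=(\Psi_N(\mX,\mA))_i$. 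Thus $(\Psi_N(\mX,\mA))_i$ is determined by the pair $\bigl(\mX_i,\{(\mX_j,\mA_{ij})\}_{j=1}^N\bigr)$ alone.

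With this well-definedness in hand, I would define, for each $N$, a function $f_N$ on size-$N$ multisets by $f_N\bigl(\mX_i,\{(\mX_j,\mA_{ij})\}_j\bigr):=(\Psi_N(\mX,\mA))_i$, and then package the family into a single $f:=\bigsqcup_{N\ge 1} f_N$ over the disjoint collection of finite multisets indexed by cardinality. Because multisets of different cardinalities cannot coincide, no cross-$N$ compatibility conditions arise, and the representation of the input to $f$ carries no explicit record of $N$; hence the same rule $f$ acts uniformly across all graph sizes, which is precisely the intended reading of ``independent of $N$.''

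The step I expect to be the main obstacle is the root-fixing clause $P(i)=i$. Without it, equivariance alone would only identify the $i$-th output of $\Psi_N(\mX',\mA')$ with some (possibly other) row of $\Psi_N(\mX,\mA)$---too weak to yield a row-wise rule. The hypothesis therefore carries the real combinatorial load: it asserts that multiset equality at the root can always be realized by a permutation that additionally preserves the distinguished node. A minor subtlety worth flagging is that $\{(\mX_j,\mA_{ij})\}_{j=1}^N$ is the \emph{indexed} tuple including the self-entry $j=i$ and non-neighbors with zero weight; $P$ must therefore match entries position by position rather than merely on the support, which is exactly the data equivariance constrains.

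Finally, I would note that the feature-size half of \Cref{lem:graph_universality} follows by a structurally identical argument, applied columnwise using feature-permutation equivariance $\Psi(\mX\mQ,\mA)=\Psi(\mX,\mA)\mQ$ with a root-fixing permutation in the feature axis; combining the two halves then yields a rule independent of both $N$ and $F$, which is the complete statement of the lemma.
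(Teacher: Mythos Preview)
Your proposal is correct and follows essentially the same route as the paper: use the root-fixing permutation to write $(\mX',\mA')=(P\mX,P\mA P^\top)$, apply equivariance, read off row $i$ via $P(i)=i$, and then define $f$ on multisets from the resulting well-definedness. Your packaging of $f$ as $\bigsqcup_N f_N$ is slightly more explicit than the paper's ``evaluate on a canonical ordering,'' and your commentary on why the root-fixing clause is essential and on the analogous feature-side argument anticipates exactly the paper's subsequent lemma and remark.
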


\begin{proof}
Let $(\mX,\mA)$ and $(\mX',\mA')$ be as in the statement. By the assumption, there is a permutation $P$ with $P(i)=i$ that maps the two $i$-rooted neighborhoods bijectively. By equivariance,
\[
\begin{aligned}
\big(\Psi_N(\mX',\mA')\big)_i
&=\big(\Psi_N(P\mX,\,P\mA P^\top)\big)_i \\
&=\big(P\,\Psi_N(\mX,\mA)\big)_i \\
&=\big(\Psi_N(\mX,\mA)\big)_{P^{-1}(i)} \\
&=\big(\Psi_N(\mX,\mA)\big)_{i},
\end{aligned}
\]
where the last equality uses $P(i)=i$. Hence the $i$-th output depends only on the multiset
$\{(\mX_j,\mA_{ij})\}_{j=1}^N$, not on labels or $N$. Define $f$ by evaluating $\Psi_N$ on any canonical ordering of that multiset and taking the $i$-th row; $f$ is well-defined and does not involve $N$.
\end{proof}

\begin{lemma}[Feature permutation equivariance $\Rightarrow$ feature-size independence (root-preserving)]
\label{lem:feat_size_agnostic}

For each $F\in\mathbb{N}$ let $\Psi_F:\R^{N\times F}\times\R^{N\times N}\to\R^{N\times F}$ satisfy

\[
\Psi_F(\mX Q,\; \mA)=\Psi_F(\mX,\mA)\,Q
\quad\text{for all feature permutations }Q\in\{0,1\}^{F\times F}.
\]

Fix a feature index $f\in[F]$. Suppose that whenever two inputs $(\mX,\mA)$ and $(\mX',\mA)$
have identical \emph{column multisets} $\{\mX_{:,h}\}_{h=1}^{F}=\{\mX'_{:,h}\}_{h=1}^{F}$, there exists a feature permutation $Q$ that bijects the two multisets and \emph{fixes} $f$, i.e., $Q(f)=f$.

Then there exists a function $g$ on finite multisets, independent of $F$, such that

\[
\big(\Psi_F(\mX,\mA)\big)_{:,f}
= g\!\Big(\mX_{:,f},\;\big\{\mX_{:,h}\big\}_{h=1}^{F},\;\mA\Big).
\]

In particular, the rule and its parameters do not depend on the feature dimension $F$.
\end{lemma}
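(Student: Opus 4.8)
The plan is to mirror the structure of the node-size lemma (\Cref{lem:perm_size_agnostic}), replacing node permutations with feature permutations and keeping $\mA$ fixed throughout. First I would fix a feature index $f$ and take two inputs $(\mX,\mA)$ and $(\mX',\mA)$ whose column multisets coincide, $\{\mX_{:,h}\}_{h=1}^F = \{\mX'_{:,h}\}_{h=1}^F$. By the hypothesis of the lemma there is a feature permutation $Q$ with $Q(f)=f$ that realizes the bijection between these two multisets, so that $\mX' = \mX Q$ (as column-reorderings of the same collection). The key computation is then a two-line chain using \textbf{R2}: $\big(\Psi_F(\mX',\mA)\big)_{:,f} = \big(\Psi_F(\mX Q,\mA)\big)_{:,f} = \big(\Psi_F(\mX,\mA)\,Q\big)_{:,f} = \big(\Psi_F(\mX,\mA)\big)_{:,Q^{-1}(f)} = \big(\Psi_F(\mX,\mA)\big)_{:,f}$, where the last step invokes $Q(f)=f$ (equivalently $Q^{-1}(f)=f$). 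This shows the $f$-th output column depends only on the column multiset $\{\mX_{:,h}\}_{h=1}^F$ together with $\mA$, and not on the ordering of the columns or on $F$ beyond what the multiset encodes.

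Next I would define $g$ explicitly: given a finite multiset $S$ of vectors in $\R^N$, a distinguished element $\vx_{:,f}\in S$, and an adjacency $\mA$, pick any canonical ordering of $S$ that places the distinguished element in, say, the first coordinate, assemble the corresponding matrix, apply $\Psi_F$, and read off the first output column. The computation above certifies that this value is independent of which canonical ordering we chose, so $g$ is well-defined, and its definition makes no reference to $F$. Hence $\big(\Psi_F(\mX,\mA)\big)_{:,f} = g\big(\mX_{:,f},\{\mX_{:,h}\}_{h=1}^F,\mA\big)$, as claimed, and the same $g$ works for every $F$.

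The main obstacle is the bookkeeping around repeated columns and the root-fixing condition: if several columns of $\mX$ are equal to $\mX_{:,f}$, the permutation $Q$ realizing the multiset bijection is not unique, and one must be careful that the hypothesis genuinely supplies a $Q$ fixing the chosen index $f$ (not merely some index carrying the same column value). This is exactly why the lemma states the root-preserving assumption as a hypothesis rather than deriving it — it is the analogue of the $P(i)=i$ clause in \Cref{lem:perm_size_agnostic}. Once that hypothesis is granted, the argument is a direct transcription of the node-size proof with $\mX\mapsto\mX Q$ in place of $P\mX$ and with $\mA$ held fixed; no new ideas beyond \textbf{R2} are needed, and combining this lemma with \Cref{lem:perm_size_agnostic} yields \Cref{lem:graph_universality}.
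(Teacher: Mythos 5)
Your proposal is correct and takes essentially the same approach as the paper: you use the root-preserving hypothesis to obtain $Q$ with $Q(f)=f$, then run the same four-step chain $\big(\Psi_F(\mX',\mA)\big)_{:,f}=\big(\Psi_F(\mX Q,\mA)\big)_{:,f}=\big(\Psi_F(\mX,\mA)Q\big)_{:,f}=\big(\Psi_F(\mX,\mA)\big)_{:,Q^{-1}(f)}=\big(\Psi_F(\mX,\mA)\big)_{:,f}$, and define $g$ by a canonical ordering of the column multiset. Your added remarks about repeated columns and about building $g$ by canonicalization are sound and in fact mirror the paper's Remark~\ref{rem:root_preserving_optional}, which discusses exactly the same canonicalization device; no gap.
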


\begin{proof}

Let $(\mX,\mA)$ and $(\mX',\mA)$ be as in the statement. By assumption, there is $Q$ with $Q(f)=f$ that bijects the two column multisets.

By feature equivariance, 

\[
\begin{aligned}
\big(\Psi_F(\mX',\mA)\big)_{:,f}
&=\big(\Psi_F(\mX Q,\mA)\big)_{:,f} \\
&=\big(\Psi_F(\mX,\mA)\,Q\big)_{:,f} \\
&=\big(\Psi_F(\mX,\mA)\big)_{:,Q^{-1}(f)} \\
&=\big(\Psi_F(\mX,\mA)\big)_{:,f}.
\end{aligned}
\]

where the last step uses $P(i)=i$ and $Q(f)=f$. Thus the $(i,f)$ entry depends only on the unordered
$i$-rooted node multiset and on the unordered column multiset, not on labels, $N$, or $F$. Define $h$
by applying $\Psi_{N,F}$ after canonically ordering both multisets and reading the $(i,f)$ entry.
\end{proof}

\begin{lemma}[\revised{Fully Inductive} Graph Transformation]
A function $\Psi$ that is equivariant to both node permutations (\textbf{R1}) and feature permutations (\textbf{R2}) is well-defined for graphs of arbitrary size $(N,F)$.
\end{lemma}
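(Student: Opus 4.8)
The plan is to establish the claim by combining the two "size-independence" lemmas already proved in this appendix (\Cref{lem:perm_size_agnostic} and \Cref{lem:feat_size_agnostic}) into a single statement about the family $\{\Psi_{N,F}\}_{N,F}$. The informal content of "well-defined for graphs of arbitrary size" is that one underlying rule governs $\Psi$ simultaneously for all $(N,F)$: the output entry at any node–feature position is computed by a function whose domain is a pair of finite multisets (the node-rooted neighborhood multiset and the column multiset together with the adjacency), and which carries no dependence on the ambient dimensions $N$ or $F$. So the first step is to state this precisely — i.e., to assert the existence of a single map $h$ on finite multisets such that $\big(\Psi_{N,F}(\mX,\mA)\big)_{i,f} = h\big(\mX_i, \{(\mX_j,\mA_{ij})\}_j,\ \mX_{:,f}, \{\mX_{:,h}\}_h,\ \mA\big)$ for every $N,F,i,f$.

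Second, I would derive this by applying the two lemmas in sequence. From \Cref{lem:perm_size_agnostic}, for each fixed feature configuration the $i$-th row of $\Psi_N$ factors through the node-rooted multiset via some $f$ independent of $N$; from \Cref{lem:feat_size_agnostic}, for each fixed node configuration the $f$-th column factors through the column multiset via some $g$ independent of $F$. The key step is to observe that node permutations and feature permutations commute (they act on disjoint index sets of $\mX$), so the dual equivariance $\Psi(P\mX Q, P\mA P^\top) = P\,\Psi(\mX,\mA)\,Q$ lets me run both canonicalization arguments at once: pick a canonical ordering of the rows (fixing the root $i$) and a canonical ordering of the columns (fixing the distinguished feature $f$), evaluate $\Psi$ there, and undo the permutations. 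The resulting value depends only on the two unordered multisets and on $\mA$ up to the root-fixing conjugation, which is exactly the data $h$ is fed. Independence of $N$ follows because the neighborhood multiset, not its length, is what enters $h$; independence of $F$ follows symmetrically.

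Third, I would note the conclusion: since $h$ is a fixed function not indexed by $(N,F)$, and any graph of any size presents $\Psi$ with exactly the arguments $h$ accepts, $\Psi$ is indeed well-defined on graphs of arbitrary size — one rule, one parameter set, all $(N,F)$. This is precisely the property required for a fully inductive representation function as set up in \Cref{sec:ugrl}.

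The main obstacle is the commuting-canonicalizers step: \Cref{lem:perm_size_agnostic,lem:feat_size_agnostic} are each stated with a hypothesis that a root-fixing (resp. $f$-fixing) permutation exists matching two multisets, and I must check that these two canonicalizations can be chosen \emph{jointly consistently} — i.e., that ordering rows canonically does not interfere with ordering columns canonically, and that the root-fixing and $f$-fixing conditions can be met simultaneously. Because row permutations touch only the first index and column permutations only the second, the two group actions genuinely commute and the joint canonicalization is just the product of the two; so the obstacle is really only one of careful bookkeeping rather than a substantive difficulty. (The commented-out \texttt{Proposition}~\ref*{prop:size_agn_rule} in the source sketches essentially this joint construction, so the argument is already anticipated.)
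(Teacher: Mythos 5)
Your proposal follows essentially the same three-step structure as the paper's proof: invoke the node-size-independence lemma, then the feature-size-independence lemma, then combine them via dual equivariance using the commutativity of the row and column permutation actions. The canonicalization framing you sketch for the joint step is the one the paper itself presents (in its remark on the root-preserving hypothesis) as an equivalent alternative to the root-fixing argument used in the paper's Step~3, so the two arguments coincide.
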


\begin{proof}
Let $\Psi_{N,F}:\R^{N\times F}\times \R^{N\times N}\to \R^{N\times G}$ satisfy R1 and R2 for every $(N,F)$.

\paragraph{Step 1 (node-size independence).}
Fix any node index $i\in[N]$. By Lemma~\ref{lem:perm_size_agnostic}, there exists a function
$f$ on finite multisets, independent of $N$, such that

\[
\big(\Psi_{N,F}(\mX,\mA)\big)_i
= f\!\Big(\mX_i,\;\big\{(\mX_j,\mA_{ij})\big\}_{j=1}^{N}\Big).
\]
Hence the $i$-th row of $\Psi_{N,F}(\mX,\mA)$ depends only on the $i$-rooted neighborhood multiset and not on $N$.

\paragraph{Step 2 (feature-size independence).}
Fix any feature index $f\in[F]$. By Lemma~\ref{lem:feat_size_agnostic}, there exists a function
$g$ on finite multisets, independent of $F$, such that
\[
\big(\Psi_{N,F}(\mX,\mA)\big)_{:,f}
= g\!\Big(\mX_{:,f},\;\big\{\mX_{:,h}\big\}_{h=1}^{F},\;\mA\Big).
\]
Hence the $f$-th output channel depends only on the unordered multiset of feature columns (together with $\mA$) and not on $F$.


\paragraph{Step 3 (simultaneous independence of $N$ and $F$).}
Let $(\mX,\mA)$ and $(\mX',\mA')$ be two inputs (possibly with different sizes) and fix any pair $(i,f)$.
Assume they have the same $i$-rooted neighborhood multiset and the same column multiset.
By the root-preserving hypotheses in Lemmas~\ref{lem:perm_size_agnostic} and \ref{lem:feat_size_agnostic},
there exist permutations $P,Q$ with $P(i)=i$ and $Q(f)=f$ that biject these multisets.
Using R1 and R2 (dual equivariance),
\[
\begin{aligned}
\big(\Psi_{N',F'}(\mX',\mA')\big)_{i,f}
&=\big(\Psi_{N,F}(P\mX Q,\; P\mA P^\top)\big)_{i,f} \\
&=\big(P\,\Psi_{N,F}(\mX,\mA)\,Q\big)_{i,f} \\
&=\big(\Psi_{N,F}(\mX,\mA)\big)_{P^{-1}(i),\,Q^{-1}(f)} \\
&=\big(\Psi_{N,F}(\mX,\mA)\big)_{i,f}.
\end{aligned}
\]

Therefore, each entry $(i,f)$ depends only on (i) the \emph{unordered} multiset of the 
$i$-rooted neighborhood and (ii) the \emph{unordered} multiset of feature columns; it is independent of node/feature labels and of the sizes $(N,F)$.

From the three steps above, it follows that the entire output of 
$\Psi$ is well defined independently of $(N,F)$. In other words, $\Psi$ can be interpreted as a single rule for graphs of arbitrary size—one that assigns identical outputs to identical multisets—thereby \revised{the function is fully inductive}.

\end{proof}

\begin{remark}[On the root-preserving hypothesis]
\label{rem:root_preserving_optional}
The root-preserving assumption used in Lemmas~\ref{lem:perm_size_agnostic} and~\ref{lem:feat_size_agnostic}
(i.e., choosing permutations $P,Q$ with $P(i)=i$ and $Q(f)=f$) is convenient but not necessary.
One can remove it by \emph{canonicalization}.

\medskip\noindent
\textbf{Node side.}
Fix $i$. Define a deterministic permutation $\kappa_i(\mX,\mA)\in\mathfrak{S}_N$ that
(a) sends $i$ to index $1$ and (b) sorts the $i$-rooted neighborhood multiset
$\{(\mX_j,\mA_{ij})\}_{j=1}^N$ by a fixed total order (e.g., lexicographic in $(\mA_{ij},\mX_j)$
with $j$ as a tie-breaker). Set
\[
f\big(\mX_i,\{(\mX_j,\mA_{ij})\}_j\big)
:=\Big(\Psi_N\big(\kappa_i(\mX,\mA)\,\mX,\ \kappa_i(\mX,\mA)\,\mA\,\kappa_i(\mX,\mA)^\top\big)\Big)_{1}.
\]
If two inputs share the same $i$-rooted multiset, then their canonicalized pairs coincide, and by node
equivariance,
\[
\big(\Psi_N(\mX,\mA)\big)_i
= \Big(\Psi_N\big(\kappa_i\mX,\ \kappa_i\mA\kappa_i^\top\big)\Big)_{1}.
\]
Hence $f$ depends only on the multiset and is independent of $N$.

\medskip\noindent
\textbf{Feature side.}
Fix $f$. Define a deterministic permutation $\tau_f(\mX)\in\mathfrak{S}_F$ that
(a) sends $f$ to column $1$ and (b) sorts the column multiset $\{\mX_{:,h}\}_{h=1}^F$
by a fixed total order (lexicographic in the column vector with $h$ as a tie-breaker). Set
\[
g\big(\mX_{:,f},\{\mX_{:,h}\}_h,\mA\big)
:=\Big(\Psi_F\big(\mX\,\tau_f(\mX),\ \mA\big)\Big)_{:,1}.
\]
If two inputs share the same column multiset, their canonicalized pairs coincide; by feature
equivariance (with the chosen output action, e.g.\ $\rho_F(Q)=Q$ in the strong form),
\[
\big(\Psi_F(\mX,\mA)\big)_{:,f}
= \Big(\Psi_F\big(\mX\,\tau_f(\mX),\ \mA\big)\Big)_{:,1}.
\]
Thus $g$ depends only on the multiset and is independent of $F$.

\medskip
Combining the two sides gives the simultaneous independence of both $N$ and $F$ used in
\Cref{thm:GVT_universal}, without assuming root-preserving permutations.
\end{remark}

\section{\revised{About Feature Permutation Equivariance}}
\label{appendix:permutation}
\subsection{\revised{Permutation Invariance and Equivariance}}

\revised{
In this section, we formalize a fact used in the main text:
any feature-permutation–invariant predictor can be realized as an invariant
readout of some feature-permutation–equivariant representation.}

\begin{proposition}
\label{appendix:permutation_proposition}
\revised{
Fix $N \ge 1$ and let $\mathfrak{S}_F$ be the permutation group of $F$ feature
indices, acting on node-feature matrices $X \in \mathbb{R}^{N \times F}$ by
right-multiplication $X \mapsto X Q$ with permutation matrices
$Q \in \{0,1\}^{F \times F}$.}

\revised{
Let
\[
f : \mathbb{R}^{N \times F} \times \{0,1\}^{N \times N} \to \mathcal{Y}
\]
be \emph{feature-permutation invariant}, i.e.\ it satisfies
\[
f(XQ, A) = f(X, A)
\qquad
\forall\, Q \in \mathfrak{S}_F,\; \forall\, (X,A).
\]
Then there exist
\begin{itemize}
  \item a representation space $\mathcal{Z}$,
  \item a feature-permutation–equivariant map
  \(
    \Phi : \mathbb{R}^{N \times F} \times \{0,1\}^{N \times N} \to \mathcal{Z},
  \)
  \item and a (not necessarily equivariant) post-map $\psi : \mathcal{Z} \to \mathcal{Y}$,
\end{itemize}
such that
\[
f = \psi \circ \Phi.
\]
In particular, every feature-permutation–invariant mapping can be written as
an invariant readout of a feature-permutation–equivariant representation.}
\end{proposition}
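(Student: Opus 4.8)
The plan is to prove the factorization by a direct construction that routes all of the ``forgetting'' performed by $f$ into the readout $\psi$, while keeping $\Phi$ a genuinely equivariant map. The simplest such witness is essentially the identity, so the work is entirely in specifying the group action on the representation space so that equivariance of $\Phi$ is manifest.

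First I would fix $N\ge 1$ and take the representation space to be the product $\mathcal{Z} := \mathbb{R}^{N\times F}\times\{0,1\}^{N\times N}$, equipped with the $\mathfrak{S}_F$-action that permutes only the feature columns of the first factor: for a permutation matrix $Q\in\{0,1\}^{F\times F}$, set $Q\cdot(Z,A):=(ZQ,A)$. Define $\Phi(X,A):=(X,A)$. Then $\Phi(XQ,A)=(XQ,A)=Q\cdot(X,A)=Q\cdot\Phi(X,A)$, so $\Phi$ is feature-permutation equivariant; moreover it is not invariant (in general $Q\cdot(X,A)\neq(X,A)$), so the example is non-degenerate. Next, define $\psi:\mathcal{Z}\to\mathcal{Y}$ by $\psi(Z,A):=f(Z,A)$, i.e.\ $\psi$ is $f$ itself read off the product space. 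By construction $\psi\circ\Phi=f$, which is the claimed factorization. The feature-invariance of $f$ reappears as invariance of $\psi$, namely $\psi(Q\cdot(Z,A))=f(ZQ,A)=f(Z,A)=\psi(Z,A)$; this is permissible precisely because $\psi$ is allowed to be non-equivariant.

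If a less tautological factorization is wanted for exposition, I would instead let $\mathcal{Z}$ be the orbit space $\bigl(\mathbb{R}^{N\times F}/\mathfrak{S}_F\bigr)\times\{0,1\}^{N\times N}$, take $\Phi$ to be the quotient map paired with $A$, and let $\psi$ be the map induced by $f$ on orbits, which is well-defined exactly because $f$ is feature-permutation invariant; alternatively one can use a concrete sorting-based canonicalization that returns the sorted columns together with a restoring permutation, and have $\psi$ discard that permutation. I do not expect a real obstacle along any of these routes. The only point requiring care --- and the one worth stressing in the write-up --- is that the statement would fail if $\psi$ were also required to be equivariant: permitting a non-equivariant readout is both necessary and sufficient here, and this is the substantive content behind the informal claim in the main text that equivariance is strictly more expressive than invariance.
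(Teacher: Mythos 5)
Your proof is correct, and your primary construction is genuinely different from the paper's. You take $\mathcal{Z}$ to be the product space $\mathbb{R}^{N\times F}\times\{0,1\}^{N\times N}$ with the natural (non-trivial) $\mathfrak{S}_F$-action $Q\cdot(Z,A)=(ZQ,A)$, let $\Phi$ be the identity, and fold $f$ into the readout $\psi$. The paper instead takes $\mathcal{Z}$ to be the orbit space $(\mathbb{R}^{N\times F}\times\{0,1\}^{N\times N})/\!\sim$ with the \emph{trivial} action $Q\cdot[(X,A)]:=[(X,A)]$, lets $\Phi$ be the quotient map $\pi$, and takes $\psi$ to be the map $f$ induces on orbits. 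Both are valid, but they realize the statement in opposite ways: your $\Phi$ is equivariant but \emph{not} invariant (as you explicitly note), so you exhibit a genuinely equivariant representation from which $f$ is read off invariantly, which is the non-degenerate form of the claim; the paper's $\Phi$ is in fact invariant and is only ``equivariant'' by decree, because the codomain is given the trivial action, so its equivariance collapses back to invariance. Your construction is therefore the more compelling witness for the main-text claim that invariance is a special case of equivariance-plus-readout, whereas the paper's quotient construction has the different virtue of showing that $\Phi$ can be chosen to retain \emph{exactly} the $\mathfrak{S}_F$-invariant information and nothing more. You also correctly identify the orbit-space route as an alternative, which is precisely the paper's approach, and your closing observation, that the factorization is only possible because $\psi$ need not be equivariant, is the right thing to stress.
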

\begin{proof}
\revised{
We first introduce a simple equivalence relation on pairs $(X,A)$:
\[
(X,A) \sim (X',A')
\quad\Longleftrightarrow\quad
\exists\, Q \in \mathfrak{S}_F \text{ such that }
X' = XQ \text{ and } A' = A.
\]}

\revised{
Thus two graphs are equivalent if they only differ by a permutation of their
feature indices.  Let $(\mathbb{R}^{N \times F} \times \{0,1\}^{N \times N})/\!\sim$
denote the set of equivalence classes, and write $[(X,A)]$ for the class
containing $(X,A)$.}

\revised{
Define the \emph{quotient map}
\[
\pi : \mathbb{R}^{N \times F} \times \{0,1\}^{N \times N}
\to
(\mathbb{R}^{N \times F} \times \{0,1\}^{N \times N})/\!\sim,
\qquad
\pi(X,A) = [(X,A)].
\]
By construction we have
\[
\pi(XQ, A) = [(XQ,A)] = [(X,A)] = \pi(X,A),
\]
so $\pi$ is invariant under feature permutations.}

\revised{
We now equip the quotient space with the trivial action of feature
permutations: for every $Q \in \mathfrak{S}_F$ and every class $[X,A]$,
we set
\[
Q \cdot [(X,A)] := [(X,A)].
\]
With this action, $\pi$ becomes feature-permutation \emph{equivariant} in
the usual sense:
\[
\pi(XQ, A)
= [(XQ,A)]
= [(X,A)]
= Q \cdot [(X,A)]
= Q \cdot \pi(X,A).
\]}

\revised{
Next, since $f$ is invariant, it is constant on each equivalence class:
if $(X',A') \sim (X,A)$, then $X' = XQ$ and $A' = A$ for some $Q$, hence
\[
f(X',A') = f(XQ,A) = f(X,A).
\]
Therefore there exists a unique function
\[
\psi : (\mathbb{R}^{N \times F} \times \{0,1\}^{N \times N})/\!\sim \to \mathcal{Y}
\]
such that
\[
\psi([(X,A)]) = f(X,A),
\]
and by definition we have $f = \psi \circ \pi$.}

\revised{
Finally, set $\mathcal{Z} := (\mathbb{R}^{N \times F} \times \{0,1\}^{N \times N})/\!\sim$
and $\Phi := \pi$.  Then $\Phi$ is feature-permutation equivariant and
$f = \psi \circ \Phi$, which proves the claim.}
\end{proof}

\subsection{\revised{Loss of Information}}

\begin{lemma}[\revised{Permutation Invariance Removes Feature-Wise Structure}]
\label{appendix:lem:loss_feature_structure}
\revised{
Let
\[
g : \mathbb{R}^{N \times F} \times \{0,1\}^{N \times N} \to \mathcal{Y}
\]
be \emph{feature-permutation invariant}, i.e.,
\begin{equation}
    g(XQ, A) = g(X, A)
    \qquad
    \forall\, Q \in \mathfrak{S}_F,\;\forall\, (X,A),
    \label{eq:feature_inv}
\end{equation}
where $\mathfrak{S}_F$ is the permutation group on $F$ feature indices acting
by right-multiplication $X \mapsto XQ$.}

\revised{
Then $g$ cannot represent any dependency or interaction tied to specific
feature coordinates.  In particular, for any indices $i \neq j$, let
$X^{(i \leftrightarrow j)}$ denote the node-feature matrix obtained from $X$
by swapping its $i$-th and $j$-th feature columns.  Then
\begin{equation}
    g(X, A) = g\!\bigl(X^{(i \leftrightarrow j)}, A\bigr)
    \qquad
    \forall\, (X,A).
    \label{eq:swap_columns_equal}
\end{equation}}
\end{lemma}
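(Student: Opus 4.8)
The plan is to recognize that the column-swap operation appearing in the statement is itself a single feature permutation, so that the identity \eqref{eq:swap_columns_equal} becomes an immediate specialization of the invariance hypothesis \eqref{eq:feature_inv}. First I would fix indices $i \neq j$ and introduce the permutation matrix $Q^{(ij)} \in \{0,1\}^{F \times F}$ corresponding to the transposition $(i\;j) \in \mathfrak{S}_F$, namely the $F \times F$ identity with its $i$-th and $j$-th columns interchanged. The one small computation that is genuinely needed is to verify that right-multiplication by $Q^{(ij)}$ realizes exactly the column swap, i.e.\ $X Q^{(ij)} = X^{(i \leftrightarrow j)}$: the $k$-th column of $X Q^{(ij)}$ is $X_{:,\,(ij)(k)}$, which equals $X_{:,j}$ when $k=i$, equals $X_{:,i}$ when $k=j$, and equals $X_{:,k}$ otherwise, matching the definition of $X^{(i \leftrightarrow j)}$.

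With this identification in hand, the proof of \eqref{eq:swap_columns_equal} is immediate: since $Q^{(ij)} \in \mathfrak{S}_F$, instantiating \eqref{eq:feature_inv} with $Q = Q^{(ij)}$ gives $g(X Q^{(ij)}, A) = g(X,A)$ for all $(X,A)$, and substituting $X Q^{(ij)} = X^{(i \leftrightarrow j)}$ yields $g(X^{(i \leftrightarrow j)}, A) = g(X,A)$, as claimed. For the accompanying informal assertion---that $g$ cannot encode any dependency tied to specific feature coordinates---I would then observe that the transpositions $\{(i\;j)\}_{i<j}$ generate $\mathfrak{S}_F$, so \eqref{eq:swap_columns_equal} holding for all pairs is equivalent to full $\mathfrak{S}_F$-invariance; hence $g(X,A)$ depends on $X$ only through the \emph{unordered} multiset of columns $\{X_{:,1},\dots,X_{:,F}\}$, which forgets the assignment of values to coordinates and a fortiori any coordinate-indexed pairwise interaction between channels $i$ and $j$.

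There is essentially no obstacle: the lemma is a one-line corollary of \eqref{eq:feature_inv} applied to a single transposition. The only point requiring the slightest care is the customary left/right (equivalently, $\sigma$ versus $\sigma^{-1}$) bookkeeping when deciding which matrix product realizes a prescribed column permutation; but a transposition is its own inverse, so $Q^{(ij)} = (Q^{(ij)})^{\top} = (Q^{(ij)})^{-1}$ and this ambiguity is vacuous, so no case analysis is needed. If one wished to make the ``generates $\mathfrak{S}_F$'' remark fully rigorous, the standard induction writing an arbitrary permutation as a product of adjacent transpositions would do it, but that is not required for \eqref{eq:swap_columns_equal} itself.
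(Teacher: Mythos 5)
Your proof is correct and follows essentially the same route as the paper's: introduce the transposition matrix $Q^{(ij)}\in\mathfrak{S}_F$, note that right-multiplication by it realizes the column swap $X\mapsto X^{(i\leftrightarrow j)}$, and apply the invariance hypothesis with $Q=Q^{(ij)}$. Your added verification of the column-by-column action and the remark that transpositions generate $\mathfrak{S}_F$ are both harmless elaborations that the paper leaves implicit.
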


\begin{proof}
\revised{
Fix indices $i \neq j$ and let $Q_{(i j)} \in \mathfrak{S}_F$ be the
permutation matrix that swaps feature indices $i$ and $j$ while leaving all
other indices unchanged.  For any $(X,A)$, define
\[
X^{(i \leftrightarrow j)} := X Q_{(i j)}.
\]
By construction, $X^{(i \leftrightarrow j)}$ is exactly $X$ with its $i$-th
and $j$-th feature columns exchanged.}

\revised{
Since $g$ is feature-permutation invariant in the sense of
\eqref{eq:feature_inv}, we have
\[
g\!\bigl(X^{(i \leftrightarrow j)}, A\bigr)
=
g(X Q_{(i j)}, A)
=
g(X, A)
\qquad
\forall\, (X,A),
\]
which proves \eqref{eq:swap_columns_equal}.
Thus $g$ assigns identical outputs to inputs that only differ by swapping
the same pair of feature channels.}
\end{proof}

\section{Proofs of Section 4}
\label{appendix:proof_section4}
\subsection{Proof of Lemma 4.2}
\begin{proof}
Let $\mA \in \R^{N\times N}$ be an adjacency matrix with degree matrix 
$\mD = \mathrm{diag}(\mA\mathbf{1})$, and let $\mP \in \{0,1\}^{N\times N}$ be any permutation matrix. 
Each adjacency preprocessing operator listed in the lemma is a matrix-valued mapping 
$\nu : \R^{N\times N} \to \R^{N\times N}$ defined as follows:
\begin{enumerate}[leftmargin=1.2em]
\item \textbf{Self-augmented:} $\nu(\mA)=\mA+\mI$ (and trivially $\nu(\mA)=\mA$).
\item \textbf{Degree-normalized:} 
$\nu(\mA):=\mD^{-p}\mA\mD^{-q}$
for exponents $p,q\in\R$ (row/symmetric/column cases correspond to $p+q=1$).

\item \textbf{Spectral/Laplacian filters:} $\nu(\mA)=f(\mL)$ with 
$\mL=\mD-\mA$ or $\mL_{\mathrm{sym}}=\mI-\mD^{-1/2}\mA\mD^{-1/2}$, 
for any matrix function $f$ given by a power series convergent on $\mathrm{spec}(\mL)$ 
(e.g., $e^{-t\mL}$).
\item \textbf{Diffusion kernels:} $\nu(\mA)=\alpha\,(\mI-(1-\alpha)\mD^{-1}\mA)^{-1}$, with $\alpha \in (0,1]$.
\item \textbf{Polynomial variants:} $\nu(\mA)=\sum_{k=0}^K c_k\,\nu_0(\mA)^k$ 
for some base operator $\nu_0$ from items 1–4.
\end{enumerate}

We now show that each operator is permutation equivariant.  
Let $\mA\in\R^{N\times N}$ with degree matrix $\mD=\operatorname{diag}(\mA\mathbf{1})$, and let $\mP$ be a permutation matrix.  
For $\mA'=\mP\mA\mP^\top$ we have $\mD'=\mP\mD\mP^\top$.  
We use $\mP^\top=\mP^{-1}$, conjugation rules for sums/products, and $(\mP\mX\mP^\top)^{-1}=\mP\mX^{-1}\mP^\top$.  

\smallskip
\textbf{1. Self-augmented.}
\[
\nu(\mP\mA\mP^\top)=\mP\mA\mP^\top+\mI=\mP(\mA+\mI)\mP^\top=\mP\nu(\mA)\mP^\top.
\]

\textbf{2. Degree-normalized.}



\[
\nu(\mP\mA\mP^\top)
=(\mP\mD\mP^\top)^{-p}\,(\mP\mA\mP^\top)\,(\mP\mD\mP^\top)^{-q}
=\mP\,\mD^{-p}\,\mA\,\mD^{-q}\,\mP^\top
=\mP\,\nu(\mA)\,\mP^\top.
\]

\textbf{3. Spectral/Laplacian filters.}
\[
\mP\mL\mP^\top=\mP\mD\mP^\top-\mP\mA\mP^\top=\mD'-\mA'=\mL',
\]
and for $f(z)=\sum_{k\ge0}a_k z^k$,
\[
f(\mP\mL\mP^\top)=\sum_{k\ge0}a_k(\mP\mL\mP^\top)^k
=\mP\Big(\sum_{k\ge0}a_k\mL^k\Big)\mP^\top
=\mP f(\mL)\mP^\top.
\]
Same for $\mL_{\text{sym}}$.

\textbf{4. Diffusion kernels.}
Let $\mB=\mD^{-1}\mA$, so $\mB'=(\mD')^{-1}\mA'=\mP\mB\mP^\top$. Then
\[
\nu(\mP\mA\mP^\top)=\alpha(\mI-(1-\alpha)\mB')^{-1}
=\alpha\,\mP(\mI-(1-\alpha)\mB)^{-1}\mP^\top
=\mP\nu(\mA)\mP^\top.
\]

\textbf{5. Polynomial variants.}
If $\nu_0(\mP\mA\mP^\top)=\mP\nu_0(\mA)\mP^\top$, then
\[
p(\nu_0(\mP\mA\mP^\top))
=\sum_{k=0}^K c_k(\nu_0(\mP\mA\mP^\top))^k
=\sum_{k=0}^K c_k\,\mP\nu_0(\mA)^k\mP^\top
=\mP p(\nu_0(\mA))\mP^\top.
\]

\smallskip
Thus, all listed preprocessing operators satisfy $\nu(\mP\mA\mP^\top)=\mP\nu(\mA)\mP^\top$ and are permutation equivariant.
\end{proof}
\subsection{Proof of Theorem 4.7.}

\begin{theorem}[\revised{Fully Inductive} GVT]
GVT is equivariant to both node permutations (\textbf{R1}) and feature permutations (\textbf{R2}) required for \revised{
fully inductive node representation learning (FI-NRL)}.
\end{theorem}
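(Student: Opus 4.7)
The plan is to verify the two equivariance properties separately, using \Cref{lem:view_finder} (and its proof in \Cref{appendix:proof_section4}) together with the key structural fact that in \Cref{def:graph_view_transformation} the map $\phi$ is applied \emph{slicewise} to each view vector $\tX_{n,f,:}$, with no dependence on the indices $n$ or $f$. In both cases the task reduces to tracking how the view-stacked tensor $\tX = \mathcal{V}(\mX,\mA\mid\{\nu_c\})$ transforms under the corresponding permutation, and then noting that slicewise application of $\phi$ commutes with any reindexing of the first two axes.

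For \textbf{R1}, I would fix a node permutation matrix $\mP\in\{0,1\}^{N\times N}$ and compute the new view tensor $\tX'$ corresponding to $(\mP\mX,\mP\mA\mP^\top)$. By \Cref{lem:view_finder}, every view finder satisfies $\nu_c(\mP\mA\mP^\top)=\mP\,\nu_c(\mA)\,\mP^\top$, so
\[
\tX'_{:,:,c} \;=\; \nu_c(\mP\mA\mP^\top)\,(\mP\mX) \;=\; \mP\,\nu_c(\mA)\,\mP^\top\mP\,\mX \;=\; \mP\,\tX_{:,:,c},
\]
using $\mP^\top\mP=\mI$. Writing $\pi$ for the permutation represented by $\mP$, this gives $\tX'_{n,f,:} = \tX_{\pi^{-1}(n),f,:}$ entrywise. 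Applying $\phi$ slicewise yields $\Psi(\mP\mX,\mP\mA\mP^\top)_{n,f} = \phi(\tX_{\pi^{-1}(n),f,:}) = \Psi(\mX,\mA)_{\pi^{-1}(n),f} = [\mP\,\Psi(\mX,\mA)]_{n,f}$, which is exactly \textbf{R1}.

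For \textbf{R2}, I would fix a feature permutation matrix $\mQ\in\{0,1\}^{F\times F}$ and analyze $(\mX\mQ,\mA)$. Since the view finders only involve $\mA$, they are unaffected, and
\[
\tX'_{:,:,c} \;=\; \nu_c(\mA)\,(\mX\mQ) \;=\; \bigl(\nu_c(\mA)\mX\bigr)\mQ \;=\; \tX_{:,:,c}\,\mQ.
\]
Denoting by $\sigma$ the permutation associated with $\mQ$, this becomes $\tX'_{n,f,:} = \tX_{n,\sigma^{-1}(f),:}$. Slicewise application of $\phi$ then gives $\Psi(\mX\mQ,\mA)_{n,f} = \phi(\tX_{n,\sigma^{-1}(f),:}) = \Psi(\mX,\mA)_{n,\sigma^{-1}(f)} = [\Psi(\mX,\mA)\mQ]_{n,f}$, establishing \textbf{R2}.

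There is no genuine obstacle in this argument, since both equivariances reduce to elementary index bookkeeping once one observes that (i) view finders carry node permutations through by \Cref{lem:view_finder}, (ii) view finders are independent of feature ordering, and (iii) $\phi$ is index-blind. The only care needed is to keep the two permutation conventions $\mP(\cdot)\mP^\top$ versus $(\cdot)\mQ$ consistent, and to track that the slicewise application of $\phi$ commutes with any permutation of the outer two tensor modes. Combined with \Cref{lem:graph_universality}, this yields that GVT is a valid fully inductive representation function as claimed.
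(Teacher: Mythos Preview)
Your proof is correct and follows essentially the same approach as the paper's own proof: both verify R1 and R2 separately by tracking how the view-stacked tensor transforms under the respective permutation, then use the slicewise, index-blind application of $\phi$ to push the permutation through to the output. The only cosmetic differences are your choice of indexing convention ($\pi^{-1}(n)$ versus the paper's $\pi(n)$, both internally consistent) and that the node-equivariance of $\nu_c$ is strictly part of the \emph{definition} of a view finder rather than a consequence of \Cref{lem:view_finder}; neither affects the argument.
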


\begin{proof}
\textbf{Setup.}
Let $\mX\in\R^{N\times F}$, $\mA\in\R^{N\times N}$, and view finders $\{\nu_c\}_{c=1}^C$.
Define view stacking by
\[
\tX \;=\; \mathcal{V}(\mX,\mA\mid\{\nu_c\}) 
\quad\text{with}\quad 
\tX_{:,:,c} \;=\; \nu_c(\mA)\,\mX \in \R^{N\times F}.
\]
Let the dimension-collapsing map be $\phi:\R^C\to\R$ with shared parameters $\theta$, applied independently at each \((n,f)\):
\[
\mZ_{n,f} \;=\; \phi\!\big(\tX_{n,f,:}\mid\theta\big), 
\qquad \Psi(\mX,\mA)=\mZ.
\]
Each $\nu_c$ is node-permutation equivariant by the definition of view finder: for any permutation matrix $\mP\in\{0,1\}^{N\times N}$, it holds that
\[
\nu_c(\mP\mA\mP^\top) \;=\; \mP\,\nu_c(\mA)\,\mP^\top. \tag{$\star$}
\]

\medskip
\noindent
\textbf{R1: Node-permutation equivariance.}
Let $\mP$ be any node permutation. Consider inputs $(\mP\mX,\mP\mA\mP^\top)$. For each $c$,
\[
\tX'_{:,:,c}
= \nu_c(\mP\mA\mP^\top)\,(\mP\mX)
\overset{(\star)}{=}
\mP\,\nu_c(\mA)\,\mP^\top\,\mP\mX
= \mP\big(\nu_c(\mA)\mX\big)
= \mP\,\tX_{:,:,c}.
\]
Hence the stacked tensor transforms as \(\tX'=\mP\,\tX\) (mode-1 action). Therefore, for every \((n,f)\),
\[
\tX'_{n,f,:} \;=\; \tX_{\pi(n),f,:}
\quad\text{where $\pi$ is the permutation represented by $\mP$.}
\]
Since $\phi$ is applied identically and independently to each \((n,f)\) (shared $\theta$, no cross-index coupling),
\[
\mZ'_{n,f}
= \phi(\tX'_{n,f,:}\mid\theta)
= \phi(\tX_{\pi(n),f,:}\mid\theta)
= \mZ_{\pi(n),f},
\]
i.e., \(\mZ'=\mP\,\mZ\). Thus
\[
\Psi(\mP\mX,\mP\mA\mP^\top)
= \mP\,\Psi(\mX,\mA),
\]
which proves \textbf{R1}.

\medskip
\noindent
\textbf{R2: Feature-permutation equivariance.}
Let $\mQ\in\{0,1\}^{F\times F}$ be a feature permutation. With inputs $(\mX\mQ,\mA)$, for each $c$,
\[
\tX''_{:,:,c}
= \nu_c(\mA)\,(\mX\mQ)
= \big(\nu_c(\mA)\mX\big)\mQ
= \tX_{:,:,c}\,\mQ.
\]
Hence \(\tX''=\tX\,\mQ\) (mode-2 action), so for every \((n,f)\),
\[
\tX''_{n,f,:} \;=\; \tX_{n,\sigma(f),:}
\quad\text{where $\sigma$ is the permutation represented by $\mQ$.}
\]
Applying the same pointwise/shared \(\phi\),
\[
\mZ''_{n,f}
= \phi(\tX''_{n,f,:}\mid\theta)
= \phi(\tX_{n,\sigma(f),:}\mid\theta)
= \mZ_{n,\sigma(f)},
\]
i.e., \(\mZ''=\mZ\,\mQ\). Therefore
\[
\Psi(\mX\mQ,\mA)
= \Psi(\mX,\mA)\,\mQ,
\]
which proves \textbf{R2}.

\medskip
\noindent
\textbf{Conclusion.} Both \textbf{R1} and \textbf{R2} hold; hence the GVT layer is equivariant to node and feature permutations.
\end{proof}

\section{Proofs of Section 5}
\subsection{Proof of Lemma 5.1.}

\begin{proof}
With view finders including $\mI$ and powers of the normalized adjacencies 
$\{\hat{\mA}_{\text{SYM}}^k,\hat{\mA}_{\text{RW}}^k\}$ where $\hat{\mA}_{\text{RW}} = \mD^{-1}\mA$ and $\hat{\mA}_{\text{SYM}} = \mD^{-\frac{1}{2}}\mA\mD^{-\frac{1}{2}}$ with $\mD$ denoting the degree matrix, a linear GVT layer outputs
\[
\mZ=\sum_c g_c\,\nu_c(\mA)\,\mX = p(\mA)\,\mX,
\]
which matches many static aggregation filters $p(\mA)$ through suitable coefficients $\{g_c\}$.  
Representative choices of $\{g_c\}$ for well-known GNNs are summarized in 
Table~\ref{tab:linear_gvt_recovery}.

\begin{table}[t]
\centering
\caption{Static GNN aggregations $p(\mA)$ reproduced by linear GVT through suitable coefficients $g_c$.}
\resizebox{0.8\textwidth}{!}{
\begin{tabular}{l|l|l}
\toprule
\textbf{Model} & \textbf{Aggregation $p(\mA)$} & \textbf{Linear GVT coefficients} \\
\midrule
GCN~\citep{kipf2017semisupervised} 
& $\hat{\mA}_{\text{SYM}}$ 
& $g=1$ on $\hat{\mA}_{\text{SYM}}$ \\
\midrule
SAGE-mean~\citep{hamilton2017inductive} 
& $\mI,\hat{\mA}_{\text{RW}}$ 
& $g=1$ on $\{\mI,\hat{\mA}_{\text{RW}}\}$ \\

\midrule
SGC~\citep{wu2019simplifying} 
& $\hat{\mA}_{\text{SYM}}^{\,K}$ 
& $g=1$ on $\hat{\mA}_{\text{SYM}}^{\,K}$ \\
\midrule
APPNP~\citep{gasteiger2018combining} 
& $(1-\alpha)\sum_{k=0}^K \alpha^k\,\hat{\mA}_{\text{SYM}}^k$ 
& $g_k=(1-\alpha)\alpha^k$ for  $\hat{\mA}_{\text{SYM}}^{\,k}$\\
\midrule
S$^2$GC~\citep{zhu2021simple} 
& $\frac{1}{K}\sum_{k=1}^K \big((1-\alpha)\hat{\mA}_{\text{SYM}}^{\,k}+\alpha\mI\big)$ 
& $g_k=\frac{1-\alpha}{K}$ for $\hat{\mA}_{\text{SYM}}^{\,k}$, $g=1$ on $\mI$ \\
\midrule
GCNII~\citep{chen2020simple} 
& $(1-\alpha)\hat{\mA}_{\text{SYM}}$, $\alpha\mI$ 
& $g=1-\alpha$ on $\hat{\mA}_{\text{SYM}}$, $g=\alpha$ on $\mI$ \\
\midrule
GPRGNN~\citep{chienadaptive} 
& $\sum_{k=0}^K \gamma_k\,\hat{\mA}_{\text{SYM}}^{\,k}$ 
& $g_k=\gamma_k$ for  $\hat{\mA}_{\text{SYM}}^{\,k}$ \\
\bottomrule
\end{tabular}}
\label{tab:linear_gvt_recovery}
\end{table}

Thus, each listed aggregation operator $p(\mA)$ is realized by a linear GVT.
\end{proof}
\subsection{Proof of Theorem 5.2.}

\begin{proof}
Assume $\nabla\phi$ is $M$-Lipschitz on a convex neighborhood of $v_0$ (equivalently,
$\|\nabla^2\phi(w)\|\le M$ for all $w$ on the segment $[v_0,v]$).

Fix $\vv_0\in\R^C$ and let $\vh=\vv-\vv_0$. Consider the scalar function
\[
\psi(t)\;=\;\phi(\vv_0+t\vh),\qquad t\in[0,1].
\]
By the chain rule, $\psi'(t)=\nabla\phi(\vv_0+t\vh)^\top \vh$ and 
$\psi''(t)=\vh^\top \nabla^2\phi(\vv_0+t\vh)\,\vh$ whenever the derivatives exist.
By the fundamental theorem of calculus,
\[
\phi(\vv)-\phi(\vv_0)=\psi(1)-\psi(0)=\int_0^1 \psi'(t)\,dt
= \int_0^1 \nabla\phi(\vv_0+t\vh)^\top \vh \, dt.
\]
Add and subtract $\nabla\phi(\vv_0)^\top\vh$ inside the integral to obtain
\[
\phi(\vv)=\phi(\vv_0)+\nabla\phi(\vv_0)^\top \vh
+ \int_0^1\!\big(\nabla\phi(\vv_0+t\vh)-\nabla\phi(\vv_0)\big)^\top \vh \, dt.
\]
Define the remainder
\[
R_2(\vv;\vv_0)\;=\;\int_0^1\!\big(\nabla\phi(\vv_0+t\vh)-\nabla\phi(\vv_0)\big)^\top \vh \, dt.
\]
Using the mean value (integral) form with the Hessian and the bound on its operator norm,
\[
\big\|\nabla\phi(\vv_0+t\vh)-\nabla\phi(\vv_0)\big\|
\;\le\;\int_0^{t}\!\big\|\nabla^2\phi(\vv_0+s\vh)\big\|\,\|\vh\|\,ds
\;\le\; M t\,\|\vh\|,
\]
for all $t\in[0,1]$ (here $\|\cdot\|$ is the Euclidean norm and $\|\nabla^2\phi\|$ its induced operator norm).
Therefore
\[
|R_2(\vv;\vv_0)|
\;\le\;\int_0^1 M t\,\|\vh\|^2\,dt
\;=\;\frac{M}{2}\,\|\vh\|^2
\;=\;\frac{M}{2}\,\|\vv-\vv_0\|^2.
\]
Finally, set $g(\vv_0)=\nabla\phi(\vv_0)$ and 
$b(\vv_0)=\phi(\vv_0)-g(\vv_0)^\top \vv_0$ to write
\[
\phi(\vv)=g(\vv_0)^\top \vv + b(\vv_0)+R_2(\vv;\vv_0),
\]
with the stated quadratic bound on $R_2$. This shows $\phi$ is locally affine with
quadratic error decay near $\vv_0$.
\end{proof}

\revised{\section{Non-attributed Graphs}}

\begin{table*}[t]
\centering
\caption{
Performance (\%) comparison between GCN, GraphAny, and RGVT on six non-attributed graphs.
\textcolor{Red}{1st} and \textcolor{Orange}{2nd} best results for each dataset are highlighted.
RGVT with DeepWalk embeddings achieves the best performance on 5 out of 6 datasets.
}
\resizebox{\textwidth}{!}{
\begin{tabular}{l|ccc|ccc|ccc}
\toprule

&
\multicolumn{3}{c|}{AirBrazil} &
\multicolumn{3}{c|}{AirEU} &
\multicolumn{3}{c}{AirUSA}\\
\midrule
\textbf{Feature} &
\text{GCN} &
\text{GraphAny} &
\textbf{RGVT} &
\text{GCN} &
\text{GraphAny} &
\textbf{RGVT} &
\text{GCN} &
\text{GraphAny} &
\textbf{RGVT}\\

\midrule
One-hot
&$63.08_{\pm 3.44}$& $42.31_{\pm 0.00}$ & \textcolor{Orange}{$\mathbf{66.15_{\pm 10.67}}$}
&$41.50_{\pm 0.84}$& $43.12_{\pm 0.76}$ &$47.00_{\pm2.70}$
&$53.30_{\pm 1.18}$& $46.31_{\pm 0.49}$ &$56.79_{\pm 2.26}$\\
\midrule
Random
&$39.23_{\pm5.70}$&$37.69_{\pm11.67}$& $62.31_{\pm7.05}$ 
&$41.38_{\pm5.42}$&$36.56_{\pm2.37}$&\textcolor{Orange}{$\mathbf{47.25_{\pm3.98}}$} 
&$49.12_{\pm3.20}$&$38.16_{\pm1.71}$&$54.09_{\pm1.66}$ \\
\midrule
DeepWalk
&$36.15_{\pm2.11}$&$27.14_{\pm0.48}$&\textcolor{Red}{$\mathbf{72.31_{\pm4.49}}$}
&$43.50_{\pm2.88}$&$25.03_{\pm0.07}$&\textcolor{Red}{$\mathbf{48.00_{\pm2.22}}$}
&\textcolor{Orange}{$\mathbf{57.12_{\pm1.29}}$}&$38.11_{\pm1.98}$&\textcolor{Red}{$\mathbf{58.27_{\pm1.10}}$}\\
\bottomrule
\end{tabular}}

\vskip 0.1in

\resizebox{\textwidth}{!}{
\begin{tabular}{l|ccc|ccc|ccc}
\toprule

&
\multicolumn{3}{c|}{Cora} &
\multicolumn{3}{c|}{Citeseer} &
\multicolumn{3}{c}{Pubmed}\\
\midrule
\textbf{Feature} &
\text{GCN} &
\text{GraphAny} &
\textbf{RGVT} &
\text{GCN} &
\text{GraphAny} &
\textbf{RGVT} &
\text{GCN} &
\text{GraphAny} &
\textbf{RGVT}\\

\midrule
One-hot
&$74.24_{\pm2.29}$&$65.40_{\pm0.00}$& $71.80_{\pm1.07}$
&$52.04_{\pm0.85}$&$39.80_{\pm0.00}$& $46.04_{\pm3.80}$
&$70.46_{\pm0.65}$&$61.30_{\pm0.00}$& $49.14_{\pm6.15}$\\
\midrule
Random
&$56.16_{\pm1.51}$&$22.18_{\pm1.11}$& $51.76_{\pm2.98}$
&$36.48_{\pm1.64}$&$19.34_{\pm1.47}$& $33.90_{\pm1.94}$
&$47.52_{\pm2.10}$&$35.88_{\pm1.72}$& $41.42_{\pm2.84}$ \\
\midrule
DeepWalk
&\textcolor{Red}{$\mathbf{76.36_{\pm0.81}}$}&$53.44_{\pm1.94}$& \textcolor{Orange}{$\mathbf{74.96_{\pm0.94}}$}
&\textcolor{Orange}{$\mathbf{52.36_{\pm2.53}}$}&$27.98_{\pm1.79}$& \textcolor{Red}{$\mathbf{53.20_{\pm0.67}}$}
&\textcolor{Orange}{$\mathbf{74.20_{\pm1.73}}$}&$64.12_{\pm1.90}$& \textcolor{Red}{$\mathbf{74.80_{\pm0.83}}$}\\
\bottomrule
\end{tabular}}

\label{tbl:no_feautre}
\end{table*}
\label{appendix:no_feature}
\revised{
In this section, we present additional experiments on non-attributed graphs.
While three non-attributed datasets used in the main experiment (AirBrazil, AirEU, AirUSA) adopt one-hot vectors as node features, since the dimensionality of one-hot features scales with graph size, we further investigate alternative feature constructions to identify scalable options for RGVT.
}

\revised{
\mypar{Datasets}
We evaluate performance on six graphs: three airport traffic networks (AirBrazil, AirEU, AirUSA), which originally have no node features, and three citation networks (Cora, Citeseer, Pubmed), for which we explicitly remove the provided features. 
Inspired by prior work \citep{cui2022positional}, we construct three types of artificial node features:
(1) one-hot vectors based on node indices,
(2) random Gaussian vectors, and
(3) DeepWalk embeddings \citep{grover2016node2vec}.
For DeepWalk, we adopt the hyperparameter settings from \citep{cui2022positional} consistently across datasets. For both the random and DeepWalk features, the feature dimension is fixed at 128. 
}

\revised{
\mypar{Experiment Setting}
The experimental setup follows the main experiment described in \Cref{appendix:hyperparamters}. GCN is tuned and trained separately for each dataset and feature type, allowing it to specialize its optimization to each configuration. In contrast, GraphAny and RGVT use the pretrained weights from the main experiment (trained on OGBN-Arxiv) and are evaluated in a fully inductive manner. For RGVT, we use an MLP predictor in all experiments. All experiments are repeated five times with different random seeds, and we report the mean and standard deviation.}

\revised{
\mypar{Experiment Results}
RGVT consistently outperforms GraphAny in 17 of the 18 dataset–feature configurations, demonstrating strong generalization to unseen, artificially constructed feature spaces.
When equipped with DeepWalk embeddings, RGVT shows the highest accuracy on 5 of the 6 datasets and ranks second on the remaining one.
These results highlight the importance of feature design for non-attributed graphs and show that DeepWalk embeddings, when combined with RGVT, yield strong and reliable performance.
}
\section{Graphs with LLM Embedded Features}
\label{appendix:tag}

In this section, we conduct additional experiments on text-attributed graphs using LLM-embedded node features. The goal is to evaluate whether fully inductive node representation learning via RGVT remains effective in text-attributed graphs, where node features are highly informative due to text embeddings produced by LLMs.

\begin{table*}[t]
\centering
\caption{
Performance (\%) comparison with the \ac{MLP} predictor and GraphAny across graph datasets with LLM-embedded features. \textbf{1st} and \underline{2nd} best results are highlighted. $\Delta$ reports relative improvements (\%) over the baselines. RGVT shows consistent and significant gains across all datasets.
}
\resizebox{0.9\textwidth}{!}{
\begin{tabular}{l|ccccccc}
\toprule 
& 
Cora &
Citeseer &
Pubmed &
History &
Children &
Sportsfit &
WikiCS
\\
\midrule
\text{MLP} &
$78.54_{\pm1.24}$&$72.73_{\pm0.95}$&$\underline{84.75_{\pm1.01}}$&$\underline{71.64_{\pm3.76}}$&$\underline{31.73_{\pm1.24}}$&$75.14_{\pm2.10}$&$\underline{74.69_{\pm0.85}}$\\
\text{GraphAny} &$\underline{79.27_{\pm0.63}}$&$\underline{74.15_{\pm0.35}}$&$78.69_{\pm0.04}$&$45.87_{\pm0.13}$&$25.77_{\pm0.16}$&$\underline{79.27_{\pm0.02}}$&$71.01_{\pm0.24}$ \\
\midrule
\text{\textbf{RGVT} + MLP} &
$\mathbf{84.40_{\pm1.27}}$&$\mathbf{75.22_{\pm1.66}}$&$\mathbf{85.05_{\pm0.86}}$&$\mathbf{75.86_{\pm1.44}}$&$\mathbf{33.74_{\pm0.48}}$&$\mathbf{82.82_{\pm2.23}}$&$\mathbf{78.21_{\pm0.39}}$\\
$\Delta{\text{ MLP}}$ & \textcolor{Red}{$\uparrow\mathbf{7.46\%}$}&\textcolor{Red}{$\uparrow\mathbf{3.42\%}$}&\textcolor{Red}{$\uparrow\mathbf{0.35\%}$}&\textcolor{Red}{$\uparrow\mathbf{5.89\%}$}&\textcolor{Red}{$\uparrow\mathbf{6.34\%}$}&\textcolor{Red}{$\uparrow\mathbf{10.23\%}$}&\textcolor{Red}{$\uparrow\mathbf{4.71\%}$}\\
$\Delta{\text{ GraphAny}}$ & \textcolor{Red}{$\uparrow\mathbf{6.48\%}$}&\textcolor{Red}{$\uparrow\mathbf{1.44\%}$}&\textcolor{Red}{$\uparrow\mathbf{8.08\%}$}&\textcolor{Red}{$\uparrow\mathbf{65.38\%}$}&\textcolor{Red}{$\uparrow\mathbf{30.93\%}$}&\textcolor{Red}{$\uparrow\mathbf{4.49\%}$}&\textcolor{Red}{$\uparrow\mathbf{10.15\%}$}\\
\bottomrule
\end{tabular}}
\label{tbl:llm_feature_baselines}
\end{table*}

\begin{table*}[t]
\centering
\caption{
Performance (\%) comparison with dataset-specialized GCN, GPRGNN, and UniMP on graph datasets with LLM-embedded features.
Best results are highlighted in \textbf{bold}.
RGVT achieves the highest accuracy on two datasets and remains competitive on the others.
}
\resizebox{0.9\textwidth}{!}{
\begin{tabular}{l|ccccccc}
\toprule 
& 
Cora &
Citeseer &
Pubmed &
History &
Children &
Sportsfit &
WikiCS
\\
\midrule
\text{GCN} &$\mathbf{84.44_{\pm0.52}}$&$75.65_{\pm0.99}$&$83.46_{\pm0.33}$&$72.58_{\pm4.88}$&$33.85_{\pm1.84}$&$82.23_{\pm2.47}$&$78.70_{\pm0.91}$ \\
\midrule
\text{GPRGNN} &$84.24_{\pm1.15}$&$\mathbf{76.03_{\pm1.24}}$&$84.59_{\pm1.58}$&$73.38_{\pm4.70}$&$35.70_{\pm0.82}$&$81.92_{\pm3.00}$&$\mathbf{79.83_{\pm1.00}}$ \\
\midrule
\text{UniMP} &$83.97_{\pm0.81}$&$75.71_{\pm1.15}$&$84.71_{\pm1.27}$&$75.05_{\pm2.11}$&$\mathbf{36.23_{\pm3.17}}$&$\mathbf{83.12_{\pm1.71}}$&$78.51_{\pm0.66}$\\
\midrule
\text{\textbf{RGVT} + MLP} &$84.40_{\pm1.27}$&$75.22_{\pm1.66}$&$\mathbf{85.05_{\pm0.86}}$&$\mathbf{75.86_{\pm1.44}}$&$33.74_{\pm0.48}$&$82.82_{\pm2.23}$&$78.21_{\pm0.39}$ \\
\bottomrule
\end{tabular}}
\label{tbl:llm_feature_gnns}
\end{table*}

\begin{table*}[t]
\centering
\caption{
Statistics of the seven graphs with features embedded by the LLM.
}
\resizebox{0.70\textwidth}{!}{
\begin{tabular}{l|ccccccc}
\toprule 
& 
Cora &
Citeseer &
Pubmed &
History &
Children &
Sportsfit &
WikiCS
\\
\midrule
\#Nodes &  2708& 3186&  19717&  41551& 76875& 173055 &  11701\\
\#Edges&  10556& 8450&   88648&  503180& 2325044&  3020134 &  431726\\
\#Classes&  7& 6& 3& 12& 24& 13& 10\\
Homophily Ratio & 0.8100 & 0.7841 & 0.8024 & 0.6398 & 0.4043 & 0.8980& 0.6543\\
Avg. Degree & 3.90 & 2.65 & 4.50 & 12.11 & 30.24 & 17.45& 36.85 \\
\bottomrule
\end{tabular}}
\label{tbl:llm_feature_datasets}
\end{table*}

\revised{
\mypar{Datasets}
We adopt seven text-attributed graphs introduced in recent work \citep{wang2025generalization}, which investigates generalization on text-attributed graphs. 
Each node feature is generated using the \texttt{text-embedding-3-large} model from OpenAI, resulting in a 3704 dimensional embedding. 
The embedded text corresponds to the opening passages of academic papers (Cora, Citeseer, Pubmed), book descriptions or titles (History, Children), product titles in sports and fitness (SportsFit), and Wikipedia entries or article content (WikiCS). Dataset statistics are summarized in \Cref{tbl:llm_feature_datasets}. For each dataset, we sample 20 nodes per class for training, 500 nodes for validation, and use the remaining nodes for testing. All experiments are repeated five times with different data splits.
}

\revised{
\mypar{Experiment Setting} 
The experimental setup follows the main experiment described in \Cref{appendix:hyperparamters}. We select GCN, GPRGNN, and UniMP---three models that perform strongly in our main experiments---along with MLP and GraphAny as baselines. While the \acp{GNN} baselines are tuned and trained separately for each dataset, GraphAny and RGVT use the pretrained weights from the main experiment (trained on OGBN-Arxiv) and are applied directly to these datasets in a fully inductive manner. For RGVT, we use the MLP as the predictor in all experiments.
}

\mypar{Experiment Results}
Unlike in the main experiment, GraphAny fails to outperform the \ac{MLP}.
This indicates that the linear predictor in GraphAny cannot effectively exploit the rich semantic information provided by LLM-embedded features.
The degradation is more severe on the History, Children, and WikiCS datasets, which exhibit heterophilous graph structure.
These results further indicate weak generalization to unseen graph structures, caused by the static aggregation of GraphAny.

\revised{
In contrast, \Cref{tbl:llm_feature_baselines} shows that RGVT consistently outperforms both baselines, yielding average gains of +5.48\% over \ac{MLP} and +18.14\% over GraphAny.
As shown in \Cref{tbl:llm_feature_gnns}, RGVT also performs competitively against dataset-specialized \acp{GNN}, achieving the best accuracy on two datasets and remaining close to the top-performing model on the others.
These results indicate that fully inductive view-space knowledge remains effective even on graphs with rich semantic feature spaces.
This effectiveness stems from RGVT’s feature-permutation-equivariant design, which allows that feature-wise structure is preserved throughout transformation.
As a result, the per-dataset \ac{MLP} predictor can effectively leverage this information through the representations produced by RGVT.
Together, these results highlight GVT as an effective and robust solution for fully inductive node representation learning, without being constrained by the richness of the feature space.
}

\section{\revised{Complexity Analysis}}
\label{appendix:complexity}
\begin{table*}[t]
\centering
\caption{
Theoretical complexity of MLP, Feature Propagation, GCN, GraphAny, and RGVT, with scaling behavior for graph size and feature dimensionality.
All methods scale linearly with graph size ($N, |E|$) and feature dimensionality ($F$). $L$ is the number of layers, $H$ is the hidden size, and $C$ is the number of linear GNNs (GraphAny) or view finders (RGVT).
}

\resizebox{0.78\textwidth}{!}{
\begin{tabular}{l|c|c|c}
\toprule
\textbf{Method} &
\textbf{End-to-end Complexity} &
\textbf{Graph-Scale} &
\textbf{Feature-Scale} \\
\midrule

MLP
& $\mathcal{O}(NFH + (L-1)NH^2)$
& $\mathcal{O}(1)$
& $\mathcal{O}(F)$ \\

\midrule

Feature Propagation
& $\mathcal{O}(|E|F)$
& $\mathcal{O}(|E)|$
& $\mathcal{O}(F)$ \\

\midrule

GCN
& $\mathcal{O}(L|E|H) + \mathcal{O}(NFH + (L-1)NH^2)$
& $\mathcal{O}(|E|)$
& $\mathcal{O}(F)$ \\

\midrule

GraphAny$(^*)$
& $\mathcal{O}(LC|E|F) + \mathcal{O}(NFH) + \mathcal{O}(NC^3)$
& $\mathcal{O}(|E|)$
& $\mathcal{O}(F)$\\

\midrule

RGVT
& $\mathcal{O}(LC|E|F) + \mathcal{O}(LNFC^2)$
& $\mathcal{O}(|E|)$
& $\mathcal{O}(F)$ \\

\bottomrule
\end{tabular}}

\label{tbl:complexity_theory}

\smallskip
\noindent{\footnotesize $(^*)$ GraphAny’s pseudo-inverse step can scale quadratically with the feature dimension, up to $\mathcal{O}(F^2)$.\\
However, we omitted here, as it can be effectively reduced through approximate solvers.}

\end{table*}

\begin{table*}[t]
\centering
\caption{
Time (s) for training on OGBN-Arxiv, adaptation across 27 datasets, and inference over all 28 datasets for RGVT, GraphAny, GCN, and GAT.
Adaptation times for \acp{GNN} include full hyperparameter search under the search space used in the main experiment (see~\Cref{appendix:hyperparamters}), while values in parentheses denote training-only times for a 2-layer architecture.
}

\resizebox{0.85\textwidth}{!}{
\begin{tabular}{l|c|c|c|l}
\toprule

\textbf{Method} &
\textbf{Training (1)} &
\textbf{Adaptation (27)} &
\textbf{Inference (28)} & \textbf{Adaptation Tasks}  \\

\midrule
GCN
& 96.83
& 2day 10hr (55.28)
& 0.002 & Hyperparameter search and training\\

\midrule

GAT
& 158.59
& 3day 23hr (91.30)
& 0.003 & Hyperparameter search and training\\
\midrule
GraphAny
& 598.10
& 55.44
& 0.571 & Computes $C$ linear \acp{GNN} \\

\midrule

\textbf{RGVT} + MLP
& 607.31
& 151.24
& 0.052 & Trains $L$ \ac{MLP} predictors\\

\bottomrule
\end{tabular}}
\label{tbl:empirical_time_measure}
\end{table*}

In this section, we analyze both the theoretical and empirical complexity of RGVT, and discuss the practical efficiency benefits enabled by the fully inductive representation learning (FI-NRL) framework.

\revised{
\mypar{Theoretical Analysis}
RGVT consists of a sequence of view-stacking operations followed by \acp{MLP} that map each view tensor to scalar.
Since view stacking computes and concatenates $C$ propagated node-feature matrices, its cost is $C$ feature-propagation steps, i.e., $\mathcal{O}(C|E|F)$.
The subsequent \ac{MLP} transforms every node--feature pair, treating each view vector of dimension~$C$ as a single input.
Because the transformation operates on $NF$ node--feature pairs with input/hidden dimension~$C$, this step incurs $\mathcal{O}(NFC^{2})$ complexity.
Thus, the total complexity of RGVT with recurrent depth~$L$ is
\[
\mathcal{O}\!\left(LC|E|F \;+\; LNFC^{2}\right),
\]
which scales linearly with both the graph size and the feature dimension.
We provides the summarizes result at \Cref{tbl:complexity_theory} with complexity of GNN and GraphAny.
}

\revised{
\mypar{Empirical Measurements}
We report the computational costs of RGVT with MLP predictor across three stages:
(1) Pre-training: training on OGBN-Arxiv,
(2) Adaptation: training an MLP predictor on the $L$ representations produced at each depth across 27 datasets, and
(3) Inference: inference RGVT to produce representations and performing MLP predictors' inference for all 28 datasets.
For comparison, we also measure the corresponding computational costs for GCN, GAT, and GraphAny. 
All experiments are conducted on an NVIDIA RTX A6000 GPU.
}

\revised{
As shown in \Cref{tbl:empirical_time_measure}, RGVT exhibits higher training and inference time than the GNN baselines due to three main reasons.
First, RGVT uses a deeper architecture: with a recurrent depth of 8 in our configuration, its effective depth is roughly 4× that of 2-layer \acp{GNN}.
Second, view stacking requires $C$ aggregations per layer, whereas standard \acp{GNN} perform only a single aggregation.
Third, the dense transformation in GVT incurs $\mathcal{O}(NFC^{2})$ complexity, compared to $\mathcal{O}(NH^{2})$ in typical \acp{GNN}.
Under our settings, when the feature dimension satisfies $F > 128^{2}/5^{2} \approx 655$, the dense multiplication becomes slower by approximately a factor of $F/655$.
Despite these factors, RGVT remains practical: its pre-training time on OGBN-Arxiv is approximately 10 minutes, and inference across all 28 graphs requires only 0.052 second.
}

\mypar{Practical Advantages}
RGVT offers two significant complexity benefits.
First, its fully inductive nature enables direct operation on unseen datasets without any retraining or fine-tuning.
Only a lightweight predictor needs to be trained $L$ times to select an appropriate recurrent depth. 
In contrast, GNNs require full retraining and extensive per-dataset hyperparameter tuning to achieve competitive performance, and often demanding substantial domain expertise.
Second, once RGVT produces representations, they can be cached and reused.
Second, once RGVT produces node representations, they can be cached and reused across tasks. By comparison, standard GNNs must be retrained, and GraphAny must recompute its linear GNN components whenever the label space changes.
Together, these properties yield substantial efficiency benefits, mirroring the advantages of pretrained encoders in NLP and CV.

\section{\revised{Guidelines for Selecting View Finder Sets}}
\label{appendix:view_finder_selection}
\revised{In this section, we analyze how different view finder choices affect RGVT’s performance. We evaluate several graph convolution filters, varying hop sizes, and their combinations, and we provide practical guidance on how to construct effective candidate view finder sets.}

\mypar{\revised{Experiment Configurations}}
\revised{
We evaluate three families of view finders, each defined as a matrix-valued function of the adjacency matrix.
The random-walk filter (RW) \citep{hamilton2017inductive} and the symmetric normalized filter (SYM) \citep{kipf2017semisupervised} are expressed as
\[
\nu^{(k)}_{\mathrm{rw}}(\mA) = (\mD^{-1}\mA)^{k},\qquad
\nu^{(k)}_{\mathrm{sym}}(\mA) = (\mD^{-1/2}\mA\mD^{-1/2})^{k},
\]
where k denotes the number of propagation hops.
We also include the Chebyshev polynomial basis (Cheb) \citep{defferrard2016chebnet}, defined as
\[
\nu^{(k)}_{\mathrm{cheb}}(\mA) = T_k(\tilde{\mL}),\qquad
\tilde{\mL} = \frac{2\mL}{\lambda_{\max}} - \mI,
\qquad
\mL = \mI - \mD^{-1/2}\mA\mD^{-1/2},
\]
with the standard recurrence
\[
T_0=\mI,\qquad
T_1=\tilde{\mL},\qquad
T_k = 2\tilde{\mL}T_{k-1} - T_{k-2}\quad (k\ge 2).
\]}

\revised{
\mypar{Comparison of Filter Types}
Our first set of experiments evaluates three view-finder sets, each consisting of one filter family applied over multiple hops:
\[
\{\mI\}\ \cup\ \{\nu^{(k)}_{\mathrm{type}}(\mA)\}_{k=1}^{K},
\]
where “type’’ denotes RW, SYM, or Cheb.
As shown in \Cref{fig:view_finder_ablations} (left), both RW and SYM maintain strong performance across hop sizes and consistently outperform GraphAny. While RW alone does not exceed the strongest GNN baseline (UniMP), SYM alone does, especially when multi-hop views are included. This suggests that symmetric normalization provides a more stable and informative view space, and multi-hop views offer a richer set of signals for GVT to operate on.
}

\revised{
In contrast, the Chebyshev filter performs well at $K=1$ but degrades as more hops are added. 
We believe this behavior arises because higher-order Chebyshev polynomials increasingly amplify high-frequency components, introducing more noise into the view space. Without any mechanism to suppress this noise, GVT may overfit these noisy signals during training, leading to reduced performance. 
These observations suggest that, to leverage multi-hop information in the view space, smoother diffusion-style filters (RW, SYM) are more stable than Chebyshev filters.
}

\revised{
\mypar{Comparison of Filter-Type Combinations}
We then study whether mixing different filter families provides benefits.
Each view-finder set includes two filter types (RW-SYM, RW-Cheb, SYM-Cheb), again evaluated across multiple hops:
\[
\{\mI\} \,\cup\,
\{\,\nu^{(k)}_{\mathrm{type1}}(\mA)\,\}_{k=1}^{K}
\,\cup\,
\{\,\nu^{(k)}_{\mathrm{type2}}(\mA)\,\}_{k=1}^{K}.
\]
Mixing RW and SYM filters improves performance for hop sizes 1 through 3 but not for 4 and 5. This suggests that multiple filter types can introduce complementary information into the view space, yielding better performance at moderate hop sizes. However, at larger hops the benefit diminishes, likely due to redundant or overly diffuse information.
}

\revised{
Consistent with the earlier results, mixing Cheb with either RW or SYM does not improve performance beyond $K=1$. 
This suggests that the degradation caused by high-frequency noise cannot be mitigated simply by adding more stable views.}

\revised{
\mypar{Guidelines}
Based on these observations, we recommend constructing view-finder sets by combining diffusion-style filters (SYM, RW) with moderate hop sizes (typically 2–3). Spectral filters such as Chebyshev should be used cautiously, as higher-order variants can amplify high-frequency noise and lead to overfitting in GVT. Finally, because the computational cost of RGVT grows quadratically with the number of view finders (see \Cref{appendix:complexity}), it is important to keep the set compact.
}

\begin{figure}[t]

\centering
\includegraphics[width=0.9\linewidth]{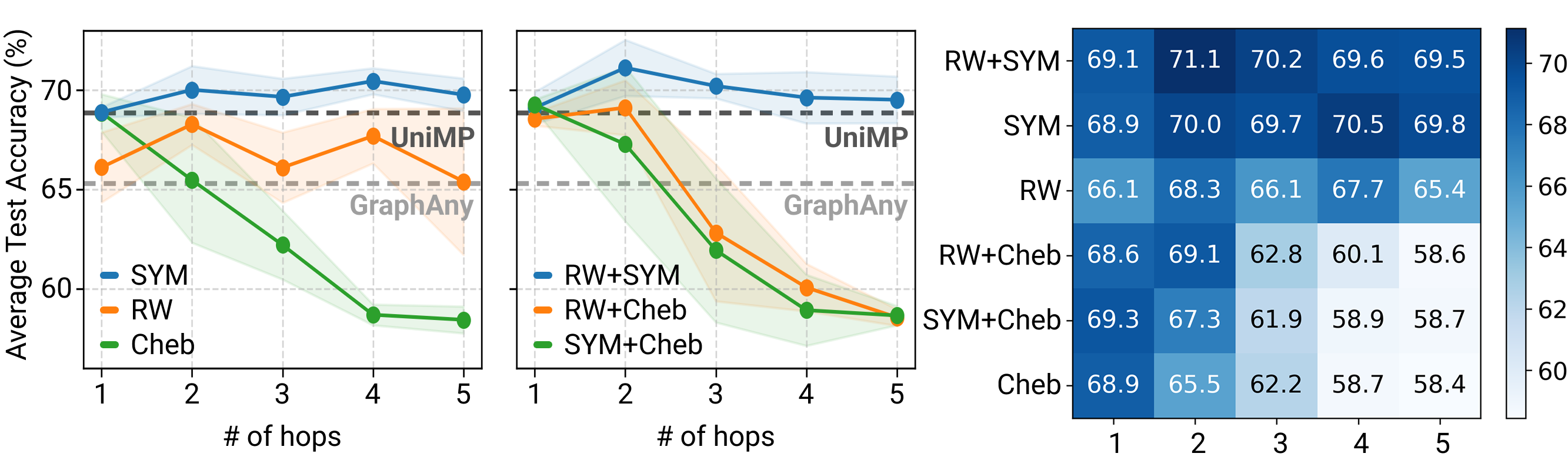}
\caption{
Average test accuracy (\%) across 28 datasets using different view-finder sets for RGVT, shown in the plot (left) and heatmap (right). Symmetric normalization (SYM) and random-walk (RW) filters exhibit robust performance compared to the Chebyshev polynomial basis (Cheb), and their combination (RW+SYM) with moderate multi-hop yields the strongest results.
}
\label{fig:view_finder_ablations}
\end{figure}

\section{Experiment Configurations}
This section provides additional implementation details for RGVT and the baseline methods, including the hyperparameter search spaces, model selection procedure, and training setup used in our experiments.
\label{appendix:hyperparamters}

\subsection{Our Method}
\label{appendix:ours_details}

\revised{For RGVT, we use $\{\mI\} \cup \{(\mD^{-1}\mA)^k,(\mD^{-\frac{1}{2}}\mA\mD^{-\frac{1}{2}})^k\}_{k=1}^K$ as the view-finder set, where $\mD$ is the degree matrix} and consider $K \in \{1, 2, 3\}$ as a hyperparameter.
The search space further includes the number of \ac{MLP} layers in the mapping $\phi$ of GVT $D\in\{1,2,3\}$, the recurrent depth $L\in\{2,4,6,8\}$ during pretraining, and the learning rate $\eta\in\{0.01, 0.05\}$.
We use Gaussian Error Linear Units (GELUs) as the activation function in the nonlinear GVT.

Hyperparameters are selected using only the pretraining dataset.
After pretraining is completed, we reinitialize the predictor, retrain the predictor on the same dataset while keeping RGVT frozen, and evaluate validation accuracy. 
This procedure offers a more reliable basis for selecting a model that generalizes well than relying on the validation accuracy recorded during pretraining.
The selected hyperparameter settings from the search are as follows:
\begin{itemize}
    \item RGVT + Linear : $K=2, D=2, L=8, \eta=0.01$,
    \item RGVT + \ac{MLP} : $K=2, D=2, L=8, \eta=0.005$.
\end{itemize}

\subsection{Baselines}
\label{appendix:baselines_details}

For the GNN baselines\citep{kipf2017semisupervised, hamilton2017inductive, xupowerful, xu2018jknet, velikovi2018graph,  wu2019simplifying, gasteiger2018combining, chen2020simple,  brody2021attentive, zhu2021simple, chienadaptive, shi2021masked} , we conducted a hyperparameter search over hidden dimensions \{64, 128, 256\}, depths \{1,2,3,4,5\}, and learning rates \{0.01, 0.005, 0.001\}, performed separately for each dataset.
For GPRGNN and JKNet, we use the official implementation from the GPRGNN public repository (\url{https://github.com/jianhao2016/GPRGNN}), while all other models are adopted from PyG \citep{fey2019fast} implementation.

Both GNN baselines and RGVT are optimized with the Adam optimizer for up to 2500 epochs, with early stopping if no improvement in validation accuracy is observed for 200 consecutive epochs.
This setup follows prior work \citep{luo2024classic}, which demonstrated that extensive hyperparameter search leads to strong GNN performance.
All hyperparameter searches, both for RGVT and for the GNN baselines are performed over 5 independent runs, and final test results are re-evaluated under the configuration selected based on average validation accuracy.

\begin{table*}[t]
\centering
\caption{
Performance (\%) of RGVT + MLP under different recurrent depths $L$ during pretraining. The \textbf{best} and \underline{second-best} results are highlighted.
}
\resizebox{\textwidth}{!}{
\begin{tabular}{l|ccccccc|c}
\toprule 
\multirow{2}{*}{\textbf{Depths $(L)$}} &
Ogbn-arxiv &
Signed &
Unsigned &
Sparse &
Binary &
Binary &
One-hot &
\textbf{Total Avg.}
\\&
(1) &
Dense (5) &
Dense (4) &
(4) &
Dense (3) &
Sparse (8) &
(4) &
\textbf{(28)}
\\
\midrule
\text{24} & $71.11_{\pm0.03}$ & $\underline{\smash{66.52_{\pm0.87}}}$ & $77.26_{\pm0.43}$ & $\underline{\smash{83.80_{\pm1.62}}}$ & $84.88_{\pm0.41}$ & $63.74_{\pm2.78}$ & $66.15_{\pm5.31}$ & $71.64_{\pm2.05}$\\
\midrule
\text{20} & $\underline{\smash{71.32_{\pm0.10}}}$ & $\mathbf{66.60_{\pm0.64}}$ & $\mathbf{77.64_{\pm0.24}}$ & $\underline{\smash{83.80_{\pm0.54}}}$ & $\mathbf{85.24_{\pm0.47}}$ & $\underline{\smash{64.07_{\pm2.62}}}$ & $\mathbf{68.25_{\pm4.74}}$ & $\mathbf{72.14_{\pm1.70}}$\\
\midrule
\text{16} & $\mathbf{71.38_{\pm0.10}}$ & $66.20_{\pm0.69}$ & $77.35_{\pm0.37}$ & $83.13_{\pm1.62}$ & $84.89_{\pm0.64}$ & $\mathbf{64.88_{\pm2.23}}$ & $\underline{\smash{67.64_{\pm4.49}}}$ & $\underline{\smash{72.04_{\pm1.75}}}$\\
\midrule
\text{12} & $71.28_{\pm0.28}$ & $66.13_{\pm1.03}$ & $\underline{\smash{77.50_{\pm0.57}}}$ & $83.39_{\pm1.18}$ & $\underline{\smash{84.94_{\pm0.57}}}$ & $\underline{\smash{64.07_{\pm1.56}}}$ & $65.55_{\pm3.82}$ & $71.57_{\pm1.49}$\\
\midrule
\text{8} & $71.11_{\pm0.28}$ & $66.37_{\pm0.90}$ & $77.12_{\pm0.45}$ & {$\mathbf{83.98_{\pm0.81}}$} & $84.86_{\pm0.51}$ & $63.87_{\pm1.58}$ &$62.48_{\pm3.95}$ & $71.13_{\pm1.41}$\\
\midrule
\text{4} & $71.26_{\pm0.19}$ & $66.32_{\pm0.63}$ & $76.68_{\pm0.37}$ & $82.85_{\pm2.24}$ & $84.70_{\pm0.55}$ & $62.02_{\pm2.48}$ & $55.93_{\pm3.59}$ & $69.42_{\pm1.77}$\\
\midrule
\text{2} & $71.22_{\pm0.15}$ & $65.09_{\pm2.00}$ & $75.14_{\pm1.65}$ & $83.11_{\pm0.73}$ & $84.51_{\pm0.99}$ & $56.02_{\pm4.70}$ & $56.25_{\pm3.65}$ & $67.32_{\pm2.67}$ \\
\bottomrule
\end{tabular}}
\vskip -0.0in
\label{tbl:depth_ablation}
\end{table*}

\begin{table*}[t]
\centering
\caption{
Performance (\%) of RGVT + MLP under different pretraining datasets. Pretraining on OGBN-Arxiv yields consistent improvements across all feature types and achieves the highest overall performance compared to smaller pretraining datasets (AmzRatings, Cora, Wisconsin).
}
\resizebox{\textwidth}{!}{
\begin{tabular}{l|ccccccc|c}
\toprule 
\multirow{2}{*}{\textbf{Method}} & 
Ogbn-arxiv &
Signed &
Unsigned &
Sparse &
Binary &
Binary &
One-hot &
\textbf{Total Avg.}
\\&
(1) &
Dense (5) &
Dense (4) &
(4) &
Dense (3) &
Sparse (8) &
(4) &
\textbf{(28)}
\\
\midrule
\text{Ogbn-Arxiv} & $\mathbf{71.11_{\pm0.28}}$ & $\mathbf{66.37_{\pm0.90}}$ & {$\mathbf{77.12_{\pm0.45}}$} & {$\mathbf{83.98_{\pm0.81}}$} & {$\mathbf{84.86_{\pm0.51}}$} & {$\mathbf{63.87_{\pm1.58}}$} &{$\mathbf{62.48_{\pm3.95}}$} & {$\mathbf{71.13_{\pm1.41}}$} \\
\midrule
\text{AmzRatings} & $69.09_{\pm0.52}$ & $66.24_{\pm0.69}$ & $76.37_{\pm0.95}$ & $82.90_{\pm1.18}$ & $83.55_{\pm1.54}$ & $57.72_{\pm4.12}$ & $58.23_{\pm7.05}$ & $68.34_{\pm2.78}$ \\
\midrule
\text{Cora} & $68.31_{\pm4.96}$ & $64.94_{\pm3.01}$ & $75.41_{\pm1.31}$ & $79.42_{\pm7.97}$ & $82.93_{\pm3.22}$ & $54.21_{\pm5.47}$ & $61.07_{\pm8.53}$ & $66.81_{\pm4.99}$\\
\midrule
\text{Wisconsin} & $68.06_{\pm3.09}$ & $64.64_{\pm2.08}$ & $75.44_{\pm1.60}$ & $79.78_{\pm4.39}$ & $82.13_{\pm2.88}$ & $57.33_{\pm5.88}$ & $65.46_{\pm7.08}$ & $68.25_{\pm4.23}$ \\
\bottomrule
\end{tabular}}
\vskip -0.0in
\label{tbl:ablation_dataset}
\end{table*}

\section{Additional Ablation Studies}
\label{sec:ablation_studies}
We conducted two additional ablation studies on (i) the recurrent depth of RGVT during pretraining and (ii) the choice of pretraining dataset.
First, as shown in \Cref{tbl:depth_ablation}, RGVT generally exhibits improved performance as the recurrent depth increases.
Although the main experiment’s search space did not include the best-performing depth, this suggests that a more extensive hyperparameter search could further enhance RGVT’s performance.
We attribute these gains to two factors: deeper pretraining allows the model to experience more diverse feature distributions, and during adaptation, a finer-grained choice of depth enables more compatible transfer.

Second, we evaluated three alternative datasets for pretraining. RGVT consistently performed worse on these datasets compared to OGBN-Arxiv, suggesting that it benefits from the larger scale of OGBN-Arxiv. Unlike GraphAny, whose performance remained stable across different datasets, RGVT showed sensitivity to pretraining data choice. 
This highlights the importance of pretraining dataset selection, and we leave systematic exploration of dataset choice and the construction of dedicated pretraining corpora for future work.

\section{Tabular Foundation Models as Predictor}
\label{app:TFM_predictor}
\begin{table*}[t]
\centering
\caption{Performance (\%) comparison of RGVT with an MLP predictor and TabPFN as the classifier on 17 benchmark datasets, restricted to settings within TabPFN’s limits. RGVT+TabPFN enables zero-training prediction on unseen datasets and outperforms the RGVT+MLP in average accuracy. \textbf{Bold} numbers indicate the higher value in each column.}
\resizebox{\linewidth}{!}{%
\begin{tabular}{l|ccccccccc}
\toprule
 & Actor & AirBrazil & AirEU & AirUSA & AmzComp & AmzPhoto & AmzRatings & Cora & Cornell \\
\midrule
RGVT+MLP & $\mathbf{34.00_{\pm 1.14}}$ & $66.15_{\pm 10.67}$ & $47.00_{\pm 2.70}$ & $56.79_{\pm 2.26}$ & $83.72_{\pm 0.46}$ & $\mathbf{92.15_{\pm 0.36}}$ & $48.61_{\pm 1.53}$ & $\mathbf{81.18_{\pm 1.15}}$ & $74.05_{\pm 1.48}$ \\
\midrule
RGVT+TabPFN & $29.20_{\pm 0.97}$ & $\mathbf{71.54_{\pm 5.83}}$ & $\mathbf{54.00_{\pm 2.88}}$ & $\mathbf{63.86_{\pm 1.40}}$ & $\mathbf{85.13_{\pm 0.33}}$ & $91.90_{\pm 0.21}$ & $\mathbf{54.88_{\pm 1.45}}$ & $78.06_{\pm 0.90}$ & $74.05_{\pm 4.91}$ \\
\bottomrule
\end{tabular}
}
\vskip 0.1in
\resizebox{\linewidth}{!}{%
\begin{tabular}{l|cccccccc|c}
\toprule
 & DBLP & Deezer & Minesweeper & Pubmed & Texas & Tolokers & WikiCS & Wisconsin & Total Avg. \\
\midrule
RGVT+MLP & $\mathbf{81.08_{\pm 1.22}}$ & $\mathbf{57.58_{\pm 0.61}}$ & $79.98_{\pm 0.18}$ & $78.72_{\pm 0.70}$ & $\mathbf{78.92_{\pm 1.21}}$ & $79.23_{\pm 0.42}$ & $79.59_{\pm 0.19}$ & $71.76_{\pm 1.75}$ & $70.03_{\pm 1.65}$ \\
\midrule
RGVT+TabPFN & $76.48_{\pm 0.84}$ & $55.85_{\pm 0.55}$ & $\mathbf{80.92_{\pm 0.38}}$ & $\mathbf{79.48_{\pm 1.75}}$ & $70.27_{\pm 3.82}$ & $\mathbf{81.51_{\pm 0.39}}$ & $\mathbf{80.26_{\pm 0.25}}$ & $\mathbf{74.51_{\pm 2.77}}$ & $\mathbf{70.70_{\pm 1.74}}$ \\
\bottomrule
\end{tabular}
}
\label{table:with_PFN}
\end{table*}

Recent tabular foundation models (TFMs) \citep{hollmanntabpfn, hollmann2025accurate} enable prediction through in-context learning during inference, removing the need for explicit predictor training. In this section, we investigate whether GVT representations can be effectively used as inputs to TFMs and how their performance compares with standard lightweight MLP predictors.

Due to current limitations of TFMs on the acceptable number of feature dimensions and classes, we conduct experiments on the 17 benchmark subsets that satisfy these constraints. As shown in \Cref{table:with_PFN}, using TabPFN improves performance on 10 out of 17 benchmarks and achieves better average performance than MLP predictors.

While these results demonstrate the potential of combining GVT with TFMs, current TFMs still suffer from limited scalability with respect to feature dimensionality, making them impractical for many real-world graph datasets with thousands of features. We therefore consider lightweight MLP predictor training as the current default choice in GVT, while more scalable TFMs represent a promising future direction.

\section{Recurrent Depth Selection}
\label{app:depth_selection}
RGVT requires selecting the recurrent depth when transferring to different graph datasets, since different graphs often favor different levels of aggregation depth. In this section, we analyze how sensitive GVT is to the choice of recurrent depth by evaluating each depth across all 28 node classification benchmarks.

As shown in \Cref{fig:acc_across_depth}, the optimal depth varies across datasets, with some favoring shallow models and others deeper ones, making validation-based selection important. However, \Cref{fig:avg_acc_across_depth} shows that fixed depths (e.g., 4 or 6) remain competitive with strong GNN baselines on average. This suggests that fixed depths can serve as a reliable alternative when validation is not feasible. Developing depth-adaptive architectures remains an important direction for further eliminate this dependency.

\newpage
\begin{figure}[t]
\centering
\includegraphics[width=\linewidth]{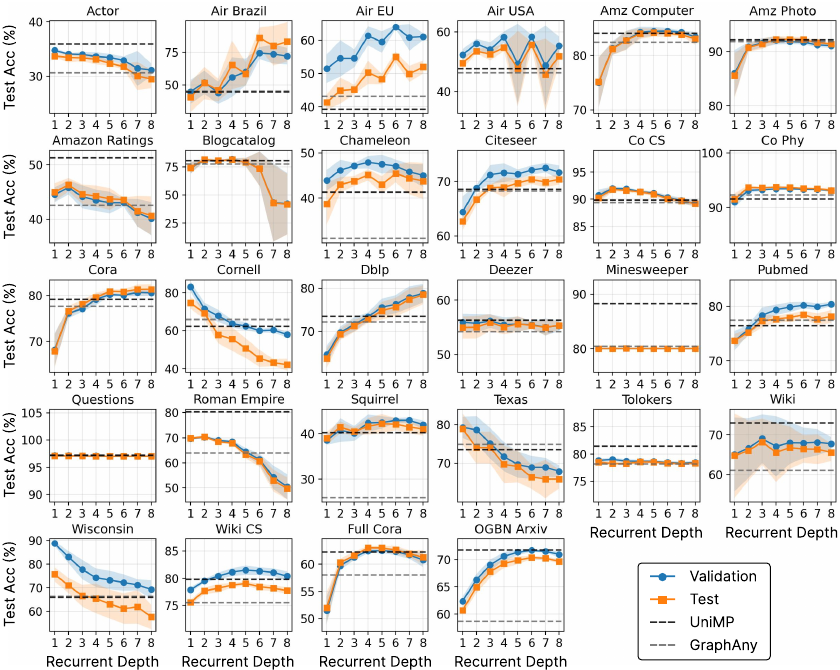}
\vspace{-3mm}
\caption{
Accuracy (\%) of RGVT + MLP across recurrent depths on 28 node classification benchmarks. The orange curve denotes test accuracy, while the blue curve represents validation accuracy. Performance varies with depth, with trends differing significantly across datasets. Selecting the depth based on validation accuracy is important for achieving consistently strong performance.
}
\vspace{-3mm}
\label{fig:acc_across_depth}
\end{figure}
\begin{figure}[h]
\centering
\includegraphics[width=0.50\linewidth]{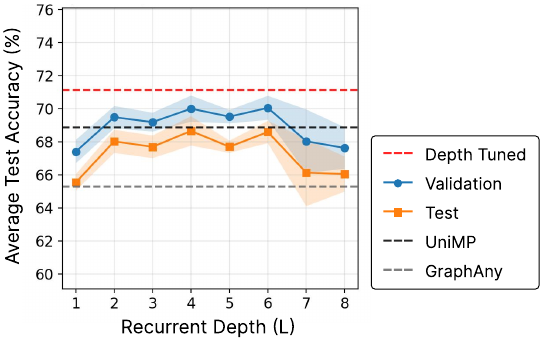}
\caption{
Average accuracy (\%) of RGVT + MLP across recurrent depths. The model remains competitive with UniMP (black dotted line) at depths 4 and 6, and consistently outperforms GraphAny (gray dotted line) across all depths. With per-dataset depth selection via validation accuracy (red dotted line), RGVT surpasses UniMP.
}
\label{fig:avg_acc_across_depth}
\end{figure}
\clearpage

\section{Representation Visualization}
\label{app:rep_vis}
In this section, we visualize the representations produced by GVT on transferred datasets using t-SNE (\Cref{fig:tSNE}).
First, node-degree trends (middle row) emerge in the learned representations even though they are not apparent in the raw features, demonstrating that GVT embeds structural information into the representations. Second, the trends corresponding to the principal components of the raw features (top row) are largely preserved in the learned representations, suggesting that GVT maintains the underlying feature structure. This observation further supports our claim that, although GVT avoids explicit feature mixing, the original feature structure remains preserved, allowing downstream predictors to capture cross-feature interactions from the learned representations.
Finally, the learned representations exhibit clearer class separability than the raw features, which likely contributes to improved downstream classification performance.

\begin{figure}[t]
    \centering
    \includegraphics[width=0.7\linewidth]{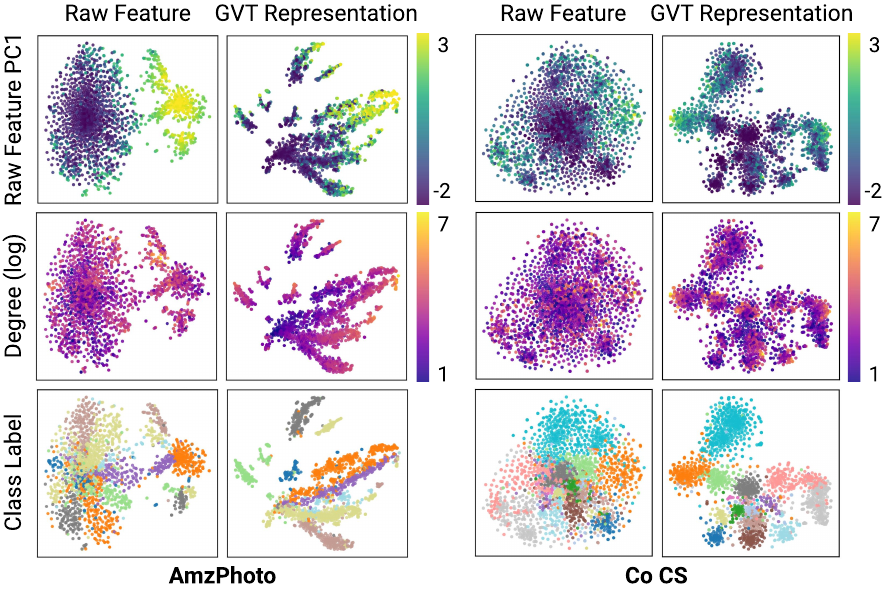}
    \caption{
    t-SNE visualizations of raw features and GVT representations on the transferred datasets: Amazon Photo (left) and Coauthor CS (right). Nodes are colored by the first principal component (PC1) of the raw features (top), node degree (middle), and class label (bottom). The GVT representations capture node degree trends while preserving trends in PC1, demonstrating their ability to encode structural information while maintaining feature semantics. This results in improved class separability and higher classification performance.
    }
    \label{fig:tSNE}
    \vspace{-2mm}
\end{figure}

\section{Comparison with More Recent GNNs}

We further compare RGVT against two groups of recent \acp{GNN} that provide complementary modeling perspectives beyond the baselines considered in the main experiments:
(i) DirGNN~\cite{rossi2024edge} and LSGNN~\cite{chen2023lsgnn}, which are designed to improve performance on heterophilous graphs, and
(ii) NodeFormer~\cite{wu2022nodeformer} and SGFormer~\cite{wu2023sgformer}, graph transformers that incorporate global attention mechanisms.
We provide the full experimental results in \cref{tbl:vs_recent_gnn_full}, and summarize the average accuracy, Elo rating, and improvability in \cref{fig:elo_bar_chart}.

DirGNN, LSGNN, and SGFormer improve over the original baseline set on average, with the largest gains observed on heterophilous datasets.
Among them, DirGNN and SGFormer even outperform RGVT when using a linear predictor.
However, neither surpasses RGVT with an MLP predictor on average, which remains the strongest configuration overall.

Although the performance gaps between RGVT and these recent baselines are smaller than those observed in the main experiments, we emphasize that the primary advantage of RGVT lies in efficient adaptation across unseen datasets rather than maximizing performance on a specific benchmark through per-dataset optimization.
Unlike conventional \acp{GNN}, RGVT is transferred from a single pretrained model without dataset-specific retraining, while strong \acp{GNN} baselines typically require extensive hyperparameter tuning and retraining to achieve competitive performance.

Notably, evaluating these additional baselines required nearly three days on four H200 GPUs, whereas RGVT adapts across all 27 datasets in only 152 seconds (\cref{appendix:complexity}).
These results highlight that RGVT achieves competitive accuracy while providing substantially lower adaptation cost in practical deployment settings.

\newpage
\begin{table*}[t]
\centering
\caption{Performance (\%) comparison with six individually tuned GNNs, including the two best-performing baselines from our original set and four newly included recent methods, across 28 node classification benchmarks. The best and second-best results are highlighted in \textbf{bold} and \underline{underline}, respectively. RGVT with an MLP predictor achieves the highest average accuracy.}
\resizebox{0.95\linewidth}{!}{
\begin{tabular}{l|cc|cccc|cc}
\toprule
\multirow{2}{*}{\textbf{Dataset}} & \multirow{2}{*}{\text{GPRGNN}} & \multirow{2}{*}{\text{UniMP}} & \multirow{2}{*}{\text{NodeFormer}} & \multirow{2}{*}{\text{LSGNN}} & \multirow{2}{*}{\text{DirGNN}} & \multirow{2}{*}{\text{SGFormer}} & \textbf{RGVT} & \textbf{RGVT} \\
 &  &  &  &  &  &  & \textbf{+ Linear} & \textbf{+ MLP} \\
\midrule
Actor & $34.51_{\pm 0.31}$ & $\underline{35.97_{\pm 1.39}}$ & $34.37_{\pm 0.89}$ & $34.32_{\pm 1.25}$ & $33.78_{\pm 1.88}$ & $\mathbf{36.21_{\pm 1.54}}$ & $33.87_{\pm 0.50}$ & $34.00_{\pm 1.14}$ \\
AirBrazil & $31.54_{\pm 4.21}$ & $44.62_{\pm 9.65}$ & $43.85_{\pm 10.39}$ & $48.46_{\pm 3.44}$ & $45.38_{\pm 8.77}$ & $46.92_{\pm 3.22}$ & $\underline{52.31_{\pm 6.99}}$ & $\mathbf{66.15_{\pm 10.67}}$ \\
AirEU & $34.37_{\pm 1.65}$ & $39.25_{\pm 3.14}$ & $42.88_{\pm 2.28}$ & $38.88_{\pm 2.31}$ & $42.38_{\pm 3.26}$ & $46.13_{\pm 4.11}$ & $\underline{46.25_{\pm 1.93}}$ & $\mathbf{47.00_{\pm 2.70}}$ \\
AirUSA & $51.57_{\pm 0.59}$ & $47.64_{\pm 1.46}$ & $43.06_{\pm 4.35}$ & $45.77_{\pm 1.38}$ & $53.37_{\pm 0.81}$ & $51.75_{\pm 2.47}$ & $\mathbf{57.08_{\pm 3.38}}$ & $\underline{56.79_{\pm 2.26}}$ \\
AmzComp & $85.55_{\pm 0.31}$ & $84.06_{\pm 0.70}$ & $74.32_{\pm 0.62}$ & $82.62_{\pm 1.00}$ & $\underline{86.16_{\pm 0.47}}$ & $\mathbf{86.42_{\pm 0.15}}$ & $84.30_{\pm 0.23}$ & $83.72_{\pm 0.46}$ \\
AmzPhoto & $\underline{92.81_{\pm 0.18}}$ & $92.15_{\pm 0.25}$ & $84.06_{\pm 1.06}$ & $91.15_{\pm 0.42}$ & $92.30_{\pm 0.20}$ & $\mathbf{92.82_{\pm 0.10}}$ & $92.02_{\pm 0.30}$ & $92.15_{\pm 0.36}$ \\
AmzRatings & $48.74_{\pm 0.28}$ & $\underline{51.30_{\pm 0.77}}$ & $42.72_{\pm 0.88}$ & $51.07_{\pm 0.78}$ & $49.29_{\pm 0.53}$ & $\mathbf{52.69_{\pm 0.36}}$ & $42.98_{\pm 0.13}$ & $48.61_{\pm 1.53}$ \\
BlogCatalog & $\mathbf{88.34_{\pm 0.27}}$ & $80.70_{\pm 3.24}$ & $77.57_{\pm 0.48}$ & $85.33_{\pm 0.84}$ & $82.31_{\pm 0.62}$ & $\underline{86.41_{\pm 0.93}}$ & $80.75_{\pm 3.08}$ & $84.61_{\pm 1.59}$ \\
Chameleon & $40.93_{\pm 1.24}$ & $41.34_{\pm 3.16}$ & $36.08_{\pm 5.26}$ & $38.77_{\pm 3.13}$ & $\underline{42.92_{\pm 5.32}}$ & $39.35_{\pm 2.51}$ & $41.34_{\pm 0.99}$ & $\mathbf{44.74_{\pm 2.17}}$ \\
Citeseer & $70.30_{\pm 0.62}$ & $68.56_{\pm 0.19}$ & $\underline{71.30_{\pm 0.80}}$ & $68.42_{\pm 2.24}$ & $66.50_{\pm 1.07}$ & $65.92_{\pm 0.20}$ & $\mathbf{71.96_{\pm 0.38}}$ & $70.02_{\pm 1.34}$ \\
CoCS & $91.83_{\pm 0.21}$ & $89.84_{\pm 0.39}$ & $\underline{92.25_{\pm 0.78}}$ & $91.45_{\pm 0.31}$ & $91.42_{\pm 0.13}$ & $\mathbf{92.81_{\pm 0.23}}$ & $91.32_{\pm 0.42}$ & $92.10_{\pm 0.14}$ \\
CoPhysics & $93.20_{\pm 0.10}$ & $91.53_{\pm 0.76}$ & $93.81_{\pm 0.23}$ & $93.29_{\pm 0.37}$ & $\mathbf{94.01_{\pm 0.07}}$ & $\underline{93.93_{\pm 0.17}}$ & $92.71_{\pm 0.35}$ & $92.62_{\pm 0.60}$ \\
Cora & $\mathbf{82.08_{\pm 0.66}}$ & $79.18_{\pm 1.13}$ & $76.70_{\pm 1.06}$ & $75.62_{\pm 1.03}$ & $78.04_{\pm 1.51}$ & $79.14_{\pm 1.09}$ & $\underline{81.32_{\pm 0.64}}$ & $81.18_{\pm 1.15}$ \\
Cornell & $68.65_{\pm 3.63}$ & $62.16_{\pm 4.27}$ & $72.97_{\pm 5.06}$ & $\mathbf{78.38_{\pm 3.31}}$ & $72.43_{\pm 6.73}$ & $\underline{77.30_{\pm 3.08}}$ & $73.51_{\pm 4.44}$ & $74.05_{\pm 1.48}$ \\
DBLP & $78.41_{\pm 0.15}$ & $73.44_{\pm 0.39}$ & $75.08_{\pm 0.45}$ & $69.38_{\pm 0.55}$ & $74.91_{\pm 0.50}$ & $73.44_{\pm 0.89}$ & $\mathbf{81.29_{\pm 1.01}}$ & $\underline{81.08_{\pm 1.22}}$ \\
Deezer & $56.89_{\pm 0.26}$ & $56.32_{\pm 0.88}$ & $56.37_{\pm 0.67}$ & $56.03_{\pm 0.29}$ & $55.27_{\pm 1.43}$ & $56.58_{\pm 0.36}$ & $\mathbf{57.79_{\pm 0.29}}$ & $\underline{57.58_{\pm 0.61}}$ \\
Minesweeper & $84.10_{\pm 0.15}$ & $\mathbf{88.25_{\pm 0.17}}$ & $80.00_{\pm 0.00}$ & $84.04_{\pm 0.46}$ & $\underline{86.30_{\pm 0.25}}$ & $83.46_{\pm 0.50}$ & $79.77_{\pm 0.27}$ & $79.98_{\pm 0.18}$ \\
Pubmed & $77.44_{\pm 0.44}$ & $76.46_{\pm 0.65}$ & $75.96_{\pm 2.38}$ & $75.44_{\pm 0.84}$ & $75.88_{\pm 0.77}$ & $77.28_{\pm 0.69}$ & $\mathbf{79.02_{\pm 1.10}}$ & $\underline{78.72_{\pm 0.70}}$ \\
Questions & $97.21_{\pm 0.02}$ & $97.17_{\pm 0.03}$ & $97.21_{\pm 0.08}$ & $\underline{97.26_{\pm 0.09}}$ & $97.23_{\pm 0.04}$ & $\underline{97.26_{\pm 0.07}}$ & $97.13_{\pm 0.02}$ & $97.13_{\pm 0.02}$ \\
Roman & $74.24_{\pm 0.30}$ & $\underline{80.31_{\pm 0.56}}$ & $65.69_{\pm 0.35}$ & $76.65_{\pm 0.60}$ & $\mathbf{89.17_{\pm 0.31}}$ & $76.35_{\pm 0.40}$ & $67.71_{\pm 0.87}$ & $70.18_{\pm 0.28}$ \\
Squirrel & $35.80_{\pm 0.96}$ & $40.18_{\pm 1.97}$ & $35.87_{\pm 1.52}$ & $34.90_{\pm 1.59}$ & $\mathbf{42.75_{\pm 2.58}}$ & $40.56_{\pm 1.33}$ & $39.38_{\pm 0.81}$ & $\underline{41.77_{\pm 2.06}}$ \\
Texas & $73.51_{\pm 6.16}$ & $73.51_{\pm 3.52}$ & $\underline{80.95_{\pm 5.58}}$ & $\mathbf{82.04_{\pm 3.79}}$ & $76.59_{\pm 5.19}$ & $80.38_{\pm 6.51}$ & $75.14_{\pm 2.96}$ & $78.92_{\pm 1.21}$ \\
Tolokers & $78.41_{\pm 0.06}$ & $\underline{81.46_{\pm 0.38}}$ & $78.33_{\pm 0.24}$ & $80.61_{\pm 0.83}$ & $\mathbf{82.18_{\pm 0.82}}$ & $81.32_{\pm 0.34}$ & $78.10_{\pm 0.29}$ & $79.23_{\pm 0.42}$ \\
Wiki & $74.60_{\pm 0.37}$ & $72.93_{\pm 1.12}$ & $77.02_{\pm 0.31}$ & $\mathbf{78.01_{\pm 0.71}}$ & $75.22_{\pm 0.85}$ & $\underline{77.97_{\pm 1.34}}$ & $72.76_{\pm 0.98}$ & $74.54_{\pm 0.76}$ \\
Wisconsin & $64.71_{\pm 2.40}$ & $65.88_{\pm 9.76}$ & $\underline{85.49_{\pm 4.92}}$ & $\mathbf{87.06_{\pm 5.82}}$ & $78.82_{\pm 4.68}$ & $84.71_{\pm 4.02}$ & $76.86_{\pm 1.64}$ & $71.76_{\pm 1.75}$ \\
WikiCS & $79.46_{\pm 0.31}$ & $\underline{79.80_{\pm 0.13}}$ & $72.80_{\pm 0.71}$ & $78.91_{\pm 0.93}$ & $79.62_{\pm 0.80}$ & $\mathbf{79.96_{\pm 0.70}}$ & $79.59_{\pm 0.16}$ & $79.59_{\pm 0.19}$ \\
OGBN-Arxiv & $69.45_{\pm 0.41}$ & $\mathbf{71.78_{\pm 0.12}}$ & $53.53_{\pm 0.23}$ & $68.12_{\pm 0.10}$ & $69.64_{\pm 0.23}$ & $70.82_{\pm 0.13}$ & $70.14_{\pm 0.28}$ & $\underline{71.11_{\pm 0.28}}$ \\
FullCora & $\underline{64.31_{\pm 0.11}}$ & $62.26_{\pm 0.11}$ & $47.75_{\pm 0.58}$ & $58.93_{\pm 0.58}$ & $59.52_{\pm 0.18}$ & $60.02_{\pm 0.32}$ & $\mathbf{64.35_{\pm 0.77}}$ & $62.35_{\pm 2.22}$ \\
\midrule
Avg. Acc. & $68.68_{\pm 0.94}$ & $68.86_{\pm 1.80}$ & $66.71_{\pm 1.86}$ & $69.68_{\pm 1.39}$ & $70.48_{\pm 1.79}$ & $\underline{71.00_{\pm 1.35}}$ & $70.03_{\pm 1.26}$ & $\mathbf{71.13_{\pm 1.41}}$ \\
\bottomrule
\end{tabular}
}
\label{tbl:vs_recent_gnn_full}
\end{table*}

\begin{figure}[h]
    \centering
    \includegraphics[width=0.8\linewidth]{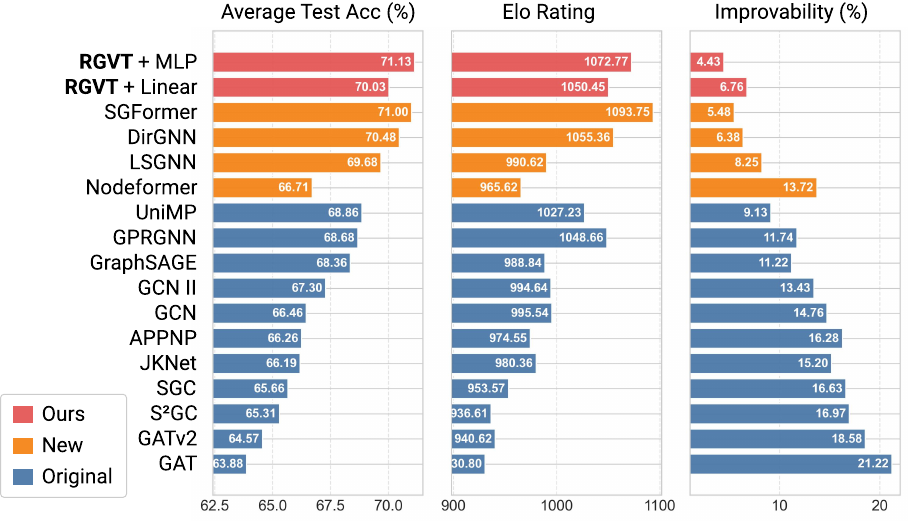}
    \vspace{-2mm}
    \caption{
    Bar plot summarizing experimental results across 28 node classification benchmarks. Results include 12 original supervised baselines (blue), 4 newly added supervised baselines (orange), and RGVT with linear and MLP predictors (red). RGVT with an MLP predictor ranks 1st in average test accuracy and improvability, and 2nd in Elo rating.
    }
    \label{fig:elo_bar_chart}
\end{figure}
\clearpage
\section{Comparison with More Recent GFMs}

Following GraphAny~\cite{zhaofully}, several fully inductive \acp{GFM} have recently been proposed to train a single model once and transfer it across arbitrary graph datasets without dataset-specific retraining. TS-Net~\cite{finkelshtein2025equivarianceeverywhere} generalizes GraphAny by enforcing equivariance to both feature and node permutations while maintaining invariance to label permutations. TAG~\cite{hayler2025bringing} represents each node using neighborhood-smoothed features and positional embeddings, and performs prediction through subsampling and ensembles of tabular foundation models (TFMs). NodePFN~\cite{choilearning} further extends TFMs to graph learning by incorporating graph message passing into the backbone architecture and introducing synthetic graph priors capable of generating varying levels of label homophily. We therefore additionally compare RGVT against these recent fully inductive \acp{GFM}. For fairness, we evaluate RGVT under their experimental pipelines and directly adopt the reported results of each method.

First, we emphasize that although these recent works also report strong improvements over tuned GNN baselines, the GNN baselines used in our experiments are substantially stronger, as shown in \Cref{tab:node_cls_gcn_variants}. In particular, our strongest baseline outperforms the strongest baseline reported in these works by 6.48\% on average. This highlights both the extensive effort invested in our baseline optimization and the increased difficulty of outperforming our GNN baselines.

Second, as shown in \Cref{tab:vs_GFMs}, RGVT with an MLP predictor outperforms both TS-Net and TAG on average. While TAG remains relatively competitive, it requires substantially heavier adaptation procedures than RGVT. In particular, TAG relies on an ensemble of 14 predictors, including 10 TFMs whose complexity scales quadratically with feature dimensionality, making deployment substantially more expensive on high-dimensional datasets.

Third, as shown in \Cref{tab:vs_NodePFN}, RGVT with an MLP predictor also outperforms NodePFN on average, although NodePFN remains competitive on several heterophilous datasets. However, NodePFN additionally requires extensive per-dataset preprocessing selection, including dimensionality reduction, feature smoothing, and ensemble configurations, resulting in approximately 400 candidate configurations per dataset.

Overall, these results demonstrate that RGVT achieves state-of-the-art fully inductive graph learning performance while remaining more efficient than recent GFMs.

\section{Dataset Details}
\label{appendix:dataset}
Our dataset selection and splitting strategy largely follows prior work GraphAny \citep{zhaofully}. 
We access datasets through PyG \citep{fey2019fast}, DGL \citep{wang2019deep}, and the heterophilous graph collection from the Yandex-Research repository \url{https://github.com/yandex-research/heterophilous-graphs}. 
For the Chameleon and Squirrel datasets, we adopt the filtered versions provided in this repository, as the original datasets have been reported to contain issues \citep{platonov2023a}. 
Detailed statistics for each dataset, together with their assigned feature-type groups used in the main text, are summarized in \Cref{tbl:dataset}.

\section{Full Experiment Results}
\label{sec:full_results}
Full experimental results are provided on pages 28–30.

\newpage
\newpage

\begin{table*}[t]
\centering
\caption{
Average test accuracy (\%) over 5 runs on 23 node classification benchmarks. We compare our GNN baselines with those reported in prior fully inductive works (GraphAny, TS-Net, TAG, and NodePFN) under identical datasets and split protocols. The best and second-best results are highlighted in \textbf{bold} and \underline{underline}, respectively. Our GCN and GAT outperform prior counterparts in both average accuracy and rank, while GPRGNN and UniMP further surpass the strongest baselines by 6.34\% and 6.48\%, achieving second and first place, respectively.
}
\label{tab:node_cls_gcn_variants}
\resizebox{\textwidth}{!}{%
\begin{tabular}{lcc|cc|cccc}
\toprule
Dataset  & GCN & GAT & GCN & GAT & GCN & GAT & GPRGNN & UniMP \\
\midrule
Source & \multicolumn{2}{c|}{GraphAny, NodePFN} & \multicolumn{2}{c|}{TS-Net, TAG} & \multicolumn{4}{c}{Ours} \\
\midrule
Actor & 28.55$_{\pm0.68}$ & 27.30$_{\pm0.22}$ & 32.03$_{\pm0.29}$ & 32.59$_{\pm0.83}$ & 30.37$_{\pm0.44}$ & 28.57$_{\pm0.82}$ & \underline{34.51$_{\pm0.31}$} & \textbf{35.97$_{\pm1.39}$} \\
AirBrazil & 42.31$_{\pm7.98}$ & \underline{57.69$_{\pm14.75}$} & 32.31$_{\pm7.50}$ & 35.38$_{\pm4.21}$ & \textbf{63.08$_{\pm3.44}$} & 44.62$_{\pm14.54}$ & 31.54$_{\pm4.21}$ & 44.62$_{\pm9.65}$ \\
AirEU & \textbf{41.88$_{\pm3.60}$} & 32.50$_{\pm8.45}$ & 39.12$_{\pm6.44}$ & 39.00$_{\pm4.30}$ & \underline{41.50$_{\pm0.84}$} & 40.00$_{\pm1.93}$ & 34.37$_{\pm1.65}$ & 39.25$_{\pm3.14}$ \\
AirUS & 46.49$_{\pm1.81}$ & 48.47$_{\pm4.17}$ & 43.93$_{\pm1.16}$ & 43.03$_{\pm2.08}$ & \textbf{53.30$_{\pm1.18}$} & 51.24$_{\pm2.90}$ & \underline{51.57$_{\pm0.59}$} & 47.64$_{\pm1.46}$ \\
AmzComp & \underline{85.83$_{\pm0.86}$} & \textbf{87.01$_{\pm0.50}$} & 73.88$_{\pm0.88}$ & 70.94$_{\pm3.40}$ & 84.69$_{\pm0.18}$ & 84.43$_{\pm1.28}$ & 85.55$_{\pm0.31}$ & 84.06$_{\pm0.70}$ \\
AmzPhoto & 91.88$_{\pm0.79}$ & 91.86$_{\pm1.07}$ & 88.95$_{\pm1.08}$ & 80.78$_{\pm3.59}$ & 91.71$_{\pm0.18}$ & 89.40$_{\pm0.71}$ & \textbf{92.81$_{\pm0.18}$} & \underline{92.15$_{\pm0.25}$} \\
AmzRatings & 47.35$_{\pm0.26}$ & 47.18$_{\pm0.42}$ & 40.74$_{\pm0.13}$ & 40.63$_{\pm0.66}$ & \underline{49.17$_{\pm0.43}$} & 48.99$_{\pm0.56}$ & 48.74$_{\pm0.28}$ & \textbf{51.30$_{\pm0.77}$} \\
BlogCatalog & 71.51$_{\pm2.62}$ & 61.98$_{\pm5.99}$ & 84.48$_{\pm0.74}$ & 78.20$_{\pm7.23}$ & \underline{86.00$_{\pm0.52}$} & 55.16$_{\pm4.77}$ & \textbf{88.34$_{\pm0.27}$} & 80.70$_{\pm3.24}$ \\
Citeseer & 63.40$_{\pm0.63}$ & 69.10$_{\pm1.59}$ & 65.06$_{\pm1.30}$ & 63.92$_{\pm0.84}$ & 69.22$_{\pm0.62}$ & \underline{69.46$_{\pm1.36}$} & \textbf{70.30$_{\pm0.62}$} & 68.56$_{\pm0.19}$ \\
CoCS & \textbf{91.83$_{\pm0.71}$} & 88.47$_{\pm0.79}$ & 80.87$_{\pm0.69}$ & 82.28$_{\pm0.86}$ & \underline{90.60$_{\pm0.09}$} & 90.27$_{\pm0.57}$ & \textbf{91.83$_{\pm0.21}$} & 89.84$_{\pm0.39}$ \\
CoPhysics & \textbf{93.93$_{\pm0.37}$} & 93.01$_{\pm0.89}$ & 79.05$_{\pm1.13}$ & 85.92$_{\pm1.10}$ & 92.85$_{\pm0.24}$ & 91.22$_{\pm2.06}$ & \underline{93.20$_{\pm0.10}$} & 91.53$_{\pm0.76}$ \\
Cornell & 35.14$_{\pm6.51}$ & 35.14$_{\pm3.52}$ & 63.78$_{\pm1.48}$ & \textbf{69.73$_{\pm2.26}$} & 45.41$_{\pm3.52}$ & 40.00$_{\pm2.26}$ & \underline{68.65$_{\pm3.63}$} & 62.16$_{\pm4.27}$ \\
DBLP & 73.02$_{\pm2.22}$ & 73.87$_{\pm1.35}$ & 65.01$_{\pm2.40}$ & 67.34$_{\pm2.75}$ & \textbf{79.52$_{\pm1.03}$} & \underline{79.07$_{\pm0.78}$} & 78.41$_{\pm0.15}$ & 73.44$_{\pm0.39}$ \\
Deezer & 53.69$_{\pm2.29}$ & 55.99$_{\pm3.78}$ & 54.91$_{\pm1.81}$ & 55.22$_{\pm2.33}$ & \textbf{57.09$_{\pm0.62}$} & 55.57$_{\pm0.34}$ & \underline{56.89$_{\pm0.26}$} & 56.32$_{\pm0.88}$ \\
FullCora & 61.06$_{\pm0.24}$ & 58.95$_{\pm0.55}$ & 55.85$_{\pm1.04}$ & 59.95$_{\pm0.88}$ & \underline{62.69$_{\pm0.13}$} & 60.71$_{\pm0.16}$ & \textbf{64.31$_{\pm0.11}$} & 62.26$_{\pm0.11}$ \\
Minesweeper & 81.12$_{\pm0.37}$ & 80.08$_{\pm0.04}$ & 84.06$_{\pm0.18}$ & \underline{84.15$_{\pm0.24}$} & 80.16$_{\pm0.26}$ & 81.70$_{\pm0.53}$ & 84.10$_{\pm0.15}$ & \textbf{88.25$_{\pm0.17}$} \\
OGBN-Arxiv & 71.74$_{\pm0.29}$ & \textbf{73.65$_{\pm0.11}$} & 50.94$_{\pm0.38}$ & 57.93$_{\pm3.44}$ & 71.19$_{\pm0.30}$ & \underline{71.89$_{\pm0.24}$} & 69.45$_{\pm0.41}$ & 71.78$_{\pm0.12}$ \\
Pubmed & 76.60$_{\pm0.32}$ & \underline{77.30$_{\pm0.60}$} & 74.56$_{\pm0.13}$ & 75.12$_{\pm0.89}$ & 76.90$_{\pm0.42}$ & 76.26$_{\pm0.70}$ & \textbf{77.44$_{\pm0.44}$} & 76.46$_{\pm0.65}$ \\
Questions & 97.15$_{\pm0.04}$ & 97.11$_{\pm0.02}$ & 97.16$_{\pm0.06}$ & 97.13$_{\pm0.05}$ & 97.06$_{\pm0.02}$ & 97.06$_{\pm0.02}$ & \textbf{97.21$_{\pm0.02}$} & \underline{97.17$_{\pm0.03}$} \\
Roman & 45.08$_{\pm0.43}$ & 43.93$_{\pm0.45}$ & 69.37$_{\pm0.66}$ & 69.80$_{\pm4.18}$ & 45.33$_{\pm0.25}$ & 43.42$_{\pm0.18}$ & \underline{74.24$_{\pm0.30}$} & \textbf{80.31$_{\pm0.56}$} \\
Tolokers & 79.93$_{\pm0.10}$ & 78.50$_{\pm0.55}$ & 78.59$_{\pm0.66}$ & 78.22$_{\pm0.37}$ & 80.60$_{\pm0.12}$ & \textbf{81.67$_{\pm0.18}$} & 78.41$_{\pm0.06}$ & \underline{81.46$_{\pm0.38}$} \\
Wiki & 70.09$_{\pm1.51}$ & 59.63$_{\pm4.16}$ & \underline{74.23$_{\pm0.89}$} & 68.91$_{\pm9.50}$ & 69.24$_{\pm0.76}$ & 58.64$_{\pm1.93}$ & \textbf{74.60$_{\pm0.37}$} & 72.93$_{\pm1.12}$ \\
WikiCS & 79.12$_{\pm0.45}$ & 79.27$_{\pm0.20}$ & 71.97$_{\pm1.70}$ & 74.99$_{\pm0.59}$ & 79.07$_{\pm0.24}$ & 79.44$_{\pm0.40}$ & \underline{79.46$_{\pm0.31}$} & \textbf{79.80$_{\pm0.13}$} \\
\midrule
\textbf{Average accuracy} & 66.47 & 66.00 & 65.25 & 65.70 & 69.42 & 66.03 & \underline{70.69} & \textbf{70.78} \\
\textbf{Average rank} & \textbf{4th} & \textbf{6th} & \textbf{8th} & \textbf{7th} & \textbf{3rd} & \textbf{5th} & \textbf{2nd} & \textbf{1st} \\
\bottomrule
\end{tabular}
}
\end{table*}

\begin{table*}[p]
\centering
\caption{
Performance (\%) comparison across 28 node classification benchmarks against recent fully inductive methods: GraphAny, TS-Net, and TAG. Results on the Chameleon and Squirrel datasets are reproduced using the filtered versions, while all other results are adopted from TAG. For fairness, RGVT is also evaluated using the same TAG experimental pipeline. The best and second-best results are highlighted in \textbf{bold} and \underline{underline}, respectively. RGVT achieves the best performance in both average accuracy and rank.
}
\vspace{2mm}
\label{tab:vs_GFMs}
\resizebox{\textwidth}{!}{%
\begin{tabular}{lcccccccc}
\toprule
\multirow{2}{*}{\textbf{Dataset}} & GraphAny & GraphAny & GraphAny & GraphAny & \multirow{2}{*}{TS-Mean} & \multirow{2}{*}{TS-GAT} & \multirow{2}{*}{TAG} & \textbf{RGVT} \\
 & (Cora) & (Arxiv) & (Products) & (Wisconsin) &  &  &  & \textbf{+MLP}  \\
\midrule
Actor & 27.91$_{\pm0.16}$ & 28.60$_{\pm0.21}$ & 28.99$_{\pm0.61}$ & 29.51$_{\pm0.55}$ & 28.09$_{\pm0.93}$ & 28.80$_{\pm2.12}$ & \underline{31.18$_{\pm0.18}$} & \textbf{32.68$_{\pm1.26}$} \\
AirBrazil & 33.07$_{\pm16.68}$ & 34.61$_{\pm16.09}$ & 34.61$_{\pm16.54}$ & 36.15$_{\pm16.68}$ & 39.23$_{\pm5.70}$ & 36.92$_{\pm10.03}$ & \underline{73.08$_{\pm4.03}$} & \textbf{75.38$_{\pm6.44}$} \\
AirEU & 40.50$_{\pm7.01}$ & 41.50$_{\pm6.50}$ & 41.75$_{\pm6.84}$ & 41.13$_{\pm6.02}$ & 35.88$_{\pm6.91}$ & 38.38$_{\pm6.32}$ & \underline{55.38$_{\pm2.30}$} & \textbf{57.12$_{\pm3.55}$} \\
AirUS & 43.46$_{\pm1.45}$ & 43.64$_{\pm1.83}$ & 43.57$_{\pm2.07}$ & 43.86$_{\pm1.44}$ & 42.34$_{\pm2.12}$ & 41.69$_{\pm2.59}$ & \textbf{60.50$_{\pm0.80}$} & \underline{57.55$_{\pm1.28}$} \\
AmzComp & 82.99$_{\pm1.22}$ & 83.04$_{\pm1.24}$ & 82.90$_{\pm1.25}$ & 82.00$_{\pm1.14}$ & 81.37$_{\pm1.25}$ & 80.23$_{\pm2.44}$ & \textbf{84.33$_{\pm0.33}$} & \underline{84.14$_{\pm1.76}$} \\
AmzPhoto & 90.14$_{\pm0.93}$ & 90.60$_{\pm0.82}$ & \underline{90.64$_{\pm0.82}$} & 90.18$_{\pm0.91}$ & 90.18$_{\pm1.30}$ & 89.66$_{\pm1.23}$ & 89.63$_{\pm0.48}$ & \textbf{92.29$_{\pm0.44}$} \\
AmzRatings & 42.84$_{\pm0.04}$ & 42.74$_{\pm0.12}$ & 42.70$_{\pm0.10}$ & 42.57$_{\pm0.34}$ & 42.27$_{\pm1.40}$ & 41.92$_{\pm0.47}$ & \underline{44.34$_{\pm0.32}$} & \textbf{45.58$_{\pm1.16}$} \\
BlogCatalog & 72.52$_{\pm3.22}$ & 73.63$_{\pm2.95}$ & 74.73$_{\pm3.19}$ & 77.69$_{\pm1.90}$ & 76.30$_{\pm2.92}$ & 78.44$_{\pm5.18}$ & \underline{79.77$_{\pm1.06}$} & \textbf{81.39$_{\pm2.75}$} \\
Chameleon & 32.17$_{\pm4.14}$ & 32.72$_{\pm4.78}$ & 32.75$_{\pm5.15}$ & 32.24$_{\pm4.18}$ & 43.83$_{\pm3.99}$ & 36.40$_{\pm8.26}$ & \textbf{44.98$_{\pm1.37}$} & \underline{44.74$_{\pm2.17}$} \\
Citeseer & \underline{68.90$_{\pm0.07}$} & 68.34$_{\pm0.23}$ & 67.94$_{\pm0.29}$ & 67.50$_{\pm0.44}$ & 68.66$_{\pm0.19}$ & 68.70$_{\pm0.48}$ & 68.08$_{\pm0.61}$ & \textbf{70.12$_{\pm1.47}$} \\
CoCS & 90.47$_{\pm0.63}$ & 90.45$_{\pm0.59}$ & 90.46$_{\pm0.54}$ & 90.85$_{\pm0.63}$ & 90.92$_{\pm0.47}$ & \underline{91.13$_{\pm0.43}$} & 90.49$_{\pm0.50}$ & \textbf{92.28$_{\pm0.37}$} \\
CoPhysics & \underline{92.70$_{\pm0.54}$} & 92.69$_{\pm0.52}$ & 92.66$_{\pm0.52}$ & 92.54$_{\pm0.43}$ & 92.61$_{\pm0.61}$ & 92.23$_{\pm0.61}$ & 92.31$_{\pm0.27}$ & \textbf{93.18$_{\pm0.51}$} \\
Cora & 80.18$_{\pm0.13}$ & 79.38$_{\pm0.16}$ & 79.36$_{\pm0.23}$ & 77.82$_{\pm1.15}$ & 58.08$_{\pm0.48}$ & 22.42$_{\pm12.90}$ & \textbf{83.50$_{\pm0.40}$} & \underline{81.68$_{\pm1.13}$} \\
Cornell & 64.86$_{\pm1.91}$ & 65.94$_{\pm1.48}$ & 64.86$_{\pm0.00}$ & 66.49$_{\pm1.48}$ & 68.65$_{\pm2.42}$ & 71.35$_{\pm4.52}$ & \textbf{75.14$_{\pm2.16}$} & \underline{74.05$_{\pm2.42}$} \\
DBLP & 71.73$_{\pm0.94}$ & 70.90$_{\pm0.88}$ & 70.62$_{\pm0.97}$ & 70.13$_{\pm0.77}$ & 66.42$_{\pm3.65}$ & 64.30$_{\pm3.75}$ & \underline{71.92$_{\pm1.46}$} & \textbf{78.12$_{\pm1.19}$} \\
Deezer & 51.98$_{\pm2.79}$ & 52.11$_{\pm2.79}$ & 52.09$_{\pm2.78}$ & 52.13$_{\pm3.02}$ & 52.31$_{\pm2.52}$ & 52.88$_{\pm2.86}$ & \underline{52.99$_{\pm1.65}$} & \textbf{56.18$_{\pm1.38}$} \\
FullCora & 56.73$_{\pm0.41}$ & \underline{57.25$_{\pm0.43}$} & 57.13$_{\pm0.37}$ & 56.29$_{\pm0.17}$ & 53.58$_{\pm0.73}$ & 52.82$_{\pm0.90}$ & 54.56$_{\pm0.19}$ & \textbf{63.26$_{\pm0.75}$} \\
Minesweeper & 80.46$_{\pm0.15}$ & 80.30$_{\pm0.13}$ & 80.27$_{\pm0.16}$ & 80.13$_{\pm0.09}$ & 80.68$_{\pm0.38}$ & \underline{80.72$_{\pm0.78}$} & \textbf{85.83$_{\pm0.13}$} & 80.00$_{\pm0.00}$ \\
OGBN-Arxiv & 58.62$_{\pm0.05}$ & 58.68$_{\pm0.17}$ & 58.58$_{\pm0.11}$ & 57.79$_{\pm0.56}$ & 56.33$_{\pm2.58}$ & 53.13$_{\pm2.04}$ & \underline{66.70$_{\pm0.21}$} & \textbf{70.33$_{\pm0.35}$} \\
Pubmed & 76.60$_{\pm0.31}$ & 76.36$_{\pm0.17}$ & 76.54$_{\pm0.34}$ & 77.46$_{\pm0.30}$ & 74.98$_{\pm0.56}$ & 75.46$_{\pm0.86}$ & \textbf{78.96$_{\pm0.43}$} & \underline{78.20$_{\pm0.82}$} \\
Questions & 97.06$_{\pm0.03}$ & 97.09$_{\pm0.02}$ & 97.10$_{\pm0.01}$ & 97.11$_{\pm0.00}$ & 97.02$_{\pm0.01}$ & 97.02$_{\pm0.01}$ & \textbf{97.14$_{\pm0.01}$} & \underline{97.13$_{\pm0.06}$} \\
Roman & 64.25$_{\pm0.64}$ & 64.25$_{\pm1.09}$ & 64.66$_{\pm0.84}$ & 64.06$_{\pm0.78}$ & 66.36$_{\pm1.02}$ & 67.76$_{\pm0.60}$ & \textbf{74.12$_{\pm0.28}$} & \underline{70.53$_{\pm0.70}$} \\
Squirrel & 24.87$_{\pm1.68}$ & 25.85$_{\pm1.11}$ & 26.47$_{\pm1.28}$ & 25.58$_{\pm1.14}$ & 39.80$_{\pm0.53}$ & 35.84$_{\pm3.33}$ & \textbf{46.29$_{\pm1.01}$} & \underline{41.77$_{\pm2.06}$} \\
Texas & 71.89$_{\pm1.48}$ & 72.97$_{\pm2.71}$ & 73.52$_{\pm2.96}$ & 73.51$_{\pm1.21}$ & 73.51$_{\pm4.01}$ & 74.05$_{\pm4.10}$ & \textbf{81.08$_{\pm2.09}$} & \underline{78.92$_{\pm2.26}$} \\
Tolokers & 78.20$_{\pm0.02}$ & 78.18$_{\pm0.04}$ & 78.18$_{\pm0.03}$ & 78.24$_{\pm0.03}$ & 78.12$_{\pm0.09}$ & 78.01$_{\pm0.23}$ & \textbf{82.82$_{\pm0.15}$} & \underline{78.32$_{\pm0.35}$} \\
Wiki & 60.56$_{\pm3.62}$ & 62.96$_{\pm3.68}$ & 63.08$_{\pm3.61}$ & 61.10$_{\pm4.36}$ & 69.89$_{\pm1.31}$ & \textbf{72.53$_{\pm1.59}$} & 70.09$_{\pm0.92}$ & \underline{72.26$_{\pm4.66}$} \\
WikiCS & 74.39$_{\pm0.71}$ & 74.95$_{\pm0.61}$ & 75.01$_{\pm0.54}$ & 73.77$_{\pm0.83}$ & 74.16$_{\pm2.07}$ & 73.24$_{\pm2.49}$ & \textbf{79.07$_{\pm0.54}$} & \underline{78.92$_{\pm0.58}$} \\
Wisconsin & 61.18$_{\pm5.08}$ & 65.10$_{\pm3.22}$ & 65.89$_{\pm2.23}$ & 71.77$_{\pm5.98}$ & 61.18$_{\pm11.38}$ & 65.88$_{\pm9.86}$ & \textbf{83.14$_{\pm1.33}$} & \underline{78.04$_{\pm4.47}$} \\
\midrule
\textbf{Average accuracy} & 63.97 & 64.46 & 64.54 & 64.63 & 64.38 & 62.93 & \underline{71.34} & \textbf{71.58} \\
\textbf{Average rank} & \textbf{7th} & \textbf{5th} & \textbf{4th} & \textbf{3rd} & \textbf{6th} & \textbf{8th} & \textbf{2nd} & \textbf{1st} \\
\bottomrule
\end{tabular}
}
\end{table*}

\begin{table*}[p]
\centering
\caption{Performance (\%) comparison against NodePFN on 23 node classification benchmarks. Results for all baselines are adopted from the original paper, while RGVT is re-evaluated under the same pipeline for fairness. The best and second-best results are highlighted in \textbf{bold} and \underline{underline}, respectively. RGVT achieves the best performance in both average accuracy and ranking.
}
\vspace{2mm}
\label{tab:vs_NodePFN}
\resizebox{\textwidth}{!}{%
\begin{tabular}{lccc|ccccc|c}
\toprule
\multirow{2}{*}{\textbf{Dataset}} &
\multirow{2}{*}{MLP} &
\multirow{2}{*}{GCN} &
\multirow{2}{*}{GAT} &
GraphAny & GraphAny & GraphAny & GraphAny &
\multirow{2}{*}{NodePFN} &
\textbf{RGVT} \\
 &  &  &  & (Products) & (Arxiv) & (Wisconsin) & (Cora) &  & \textbf{+ MLP} \\
\midrule
\rowcolor{black!10}\multicolumn{10}{c}{\textbf{Homophily Graphs}} \\
\midrule
AirBrazil & 23.08$_{\pm5.83}$ & 42.31$_{\pm7.98}$ & \underline{57.69$_{\pm14.75}$} & 34.61$_{\pm16.54}$ & 34.61$_{\pm16.09}$ & 36.15$_{\pm16.68}$ & 33.07$_{\pm16.68}$ & \textbf{75.38$_{\pm1.88}$} & \textbf{75.38$_{\pm6.44}$} \\
AirEU & 21.25$_{\pm2.31}$ & 41.88$_{\pm3.60}$ & 32.50$_{\pm8.45}$ & 41.75$_{\pm6.84}$ & 41.50$_{\pm6.50}$ & 41.13$_{\pm6.02}$ & 40.50$_{\pm7.01}$ & \underline{57.00$_{\pm1.21}$} & \textbf{57.12$_{\pm3.55}$} \\
AirUS & 22.88$_{\pm1.46}$ & 46.49$_{\pm1.81}$ & 48.47$_{\pm4.17}$ & 43.57$_{\pm2.07}$ & 43.64$_{\pm1.83}$ & 43.86$_{\pm1.44}$ & 43.46$_{\pm1.40}$ & \textbf{61.66$_{\pm0.31}$} & \underline{57.55$_{\pm1.28}$} \\
Cora & 48.42$_{\pm0.63}$ & 81.40$_{\pm0.70}$ & \underline{81.70$_{\pm1.43}$} & 79.36$_{\pm0.23}$ & 79.38$_{\pm0.16}$ & 77.82$_{\pm1.15}$ & 80.18$_{\pm0.13}$ & \textbf{82.06$_{\pm0.29}$} & 81.68$_{\pm1.13}$ \\
Citeseer & 44.40$_{\pm0.44}$ & 63.40$_{\pm0.63}$ & \underline{69.10$_{\pm1.59}$} & 67.94$_{\pm0.29}$ & 68.34$_{\pm0.23}$ & 67.50$_{\pm0.44}$ & 68.90$_{\pm0.07}$ & 67.30$_{\pm0.83}$ & \textbf{70.12$_{\pm1.47}$} \\
Pubmed & 69.50$_{\pm1.79}$ & 76.60$_{\pm0.32}$ & 77.30$_{\pm0.60}$ & 76.54$_{\pm0.34}$ & 76.36$_{\pm0.17}$ & 77.46$_{\pm0.30}$ & 76.60$_{\pm0.31}$ & \underline{78.00$_{\pm0.24}$} & \textbf{78.20$_{\pm0.82}$} \\
WikiCS & 72.72$_{\pm0.43}$ & \underline{79.12$_{\pm0.45}$} & \textbf{79.27$_{\pm0.20}$} & 75.01$_{\pm0.54}$ & 74.95$_{\pm0.61}$ & 73.77$_{\pm0.83}$ & 74.39$_{\pm0.71}$ & 75.98$_{\pm0.80}$ & 78.92$_{\pm0.58}$ \\
Amazon-Photo & 68.20$_{\pm0.88}$ & \underline{91.88$_{\pm0.79}$} & 91.86$_{\pm1.07}$ & 90.64$_{\pm0.82}$ & 90.60$_{\pm0.82}$ & 90.18$_{\pm0.91}$ & 90.14$_{\pm0.93}$ & 90.53$_{\pm0.13}$ & \textbf{92.29$_{\pm0.44}$} \\
Amazon-Comp & 58.28$_{\pm2.98}$ & \underline{85.83$_{\pm0.86}$} & \textbf{87.01$_{\pm0.50}$} & 82.90$_{\pm1.25}$ & 83.04$_{\pm1.24}$ & 82.00$_{\pm1.14}$ & 82.99$_{\pm1.22}$ & 81.42$_{\pm0.48}$ & 84.14$_{\pm1.76}$ \\
DBLP & 56.27$_{\pm0.62}$ & 73.02$_{\pm2.22}$ & 73.87$_{\pm1.35}$ & 70.62$_{\pm0.97}$ & 70.90$_{\pm0.88}$ & 70.13$_{\pm0.77}$ & 71.73$_{\pm0.94}$ & \underline{74.71$_{\pm0.39}$} & \textbf{78.12$_{\pm1.19}$} \\
Coauthor CS & 85.88$_{\pm0.93}$ & \underline{91.83$_{\pm0.71}$} & 88.47$_{\pm0.79}$ & 90.46$_{\pm0.54}$ & 90.45$_{\pm0.59}$ & 90.85$_{\pm0.63}$ & 90.47$_{\pm0.63}$ & 91.55$_{\pm0.32}$ & \textbf{92.28$_{\pm0.37}$} \\
Coauthor Physics & 87.43$_{\pm1.98}$ & \textbf{93.93$_{\pm0.37}$} & 93.01$_{\pm0.89}$ & 92.66$_{\pm0.52}$ & 92.69$_{\pm0.52}$ & 92.54$_{\pm0.43}$ & 92.70$_{\pm0.54}$ & \underline{93.43$_{\pm0.13}$} & 93.18$_{\pm0.51}$ \\
Deezer & 54.24$_{\pm2.15}$ & 53.69$_{\pm2.29}$ & \underline{55.99$_{\pm3.78}$} & 52.09$_{\pm2.78}$ & 52.11$_{\pm2.79}$ & 52.13$_{\pm3.02}$ & 51.98$_{\pm2.79}$ & 53.45$_{\pm0.65}$ & \textbf{56.18$_{\pm1.38}$} \\
\midrule
\textbf{Average Accuracy} & 54.81 & 70.88 & 72.02 & 69.09 & 69.12 & 68.89 & 69.01 & \underline{75.57} & \textbf{76.55} \\
\textbf{Average Ranking} & 8.31 & 3.38 & \underline{3.31} & 5.92 & 5.69 & 6.15 & 6.00 & 3.38 & \textbf{1.69} \\
\midrule
\rowcolor{black!10}\multicolumn{10}{c}{\textbf{Heterophily Graphs}} \\
\midrule
Cornell & 67.57$_{\pm5.06}$ & 35.14$_{\pm6.51}$ & 35.14$_{\pm3.52}$ & 64.86$_{\pm0.00}$ & 65.94$_{\pm1.48}$ & 66.49$_{\pm1.48}$ & 64.86$_{\pm1.91}$ & \underline{71.89$_{\pm2.76}$} & \textbf{74.05$_{\pm2.42}$} \\
Texas & 48.65$_{\pm4.01}$ & 51.35$_{\pm2.71}$ & 54.05$_{\pm2.41}$ & 73.52$_{\pm2.96}$ & 72.97$_{\pm2.71}$ & 73.51$_{\pm1.21}$ & 71.89$_{\pm1.48}$ & \underline{76.22$_{\pm7.53}$} & \textbf{78.92$_{\pm2.26}$} \\
Wisconsin & 66.67$_{\pm3.51}$ & 37.25$_{\pm1.64}$ & 52.94$_{\pm3.10}$ & 65.89$_{\pm2.23}$ & 65.10$_{\pm3.22}$ & 71.77$_{\pm5.98}$ & 61.18$_{\pm5.08}$ & \textbf{79.22$_{\pm6.97}$} & \underline{78.04$_{\pm4.47}$} \\
Chameleon & 38.87$_{\pm2.21}$ & 41.31$_{\pm3.05}$ & 39.83$_{\pm2.10}$ & 39.45$_{\pm4.20}$ & 37.40$_{\pm3.11}$ & 36.67$_{\pm5.32}$ & 37.99$_{\pm4.54}$ & \textbf{50.13$_{\pm3.30}$} & \underline{44.74$_{\pm2.17}$} \\
Actor & \textbf{33.95$_{\pm0.80}$} & 28.55$_{\pm0.68}$ & 27.30$_{\pm0.22}$ & 28.99$_{\pm0.61}$ & 28.60$_{\pm0.21}$ & 29.51$_{\pm0.55}$ & 27.91$_{\pm0.16}$ & \underline{32.99$_{\pm1.09}$} & 32.68$_{\pm1.26}$ \\
Minesweeper & 80.00$_{\pm0.00}$ & \textbf{81.12$_{\pm0.37}$} & 80.08$_{\pm0.04}$ & 80.27$_{\pm0.16}$ & 80.30$_{\pm0.13}$ & 80.13$_{\pm0.09}$ & 80.46$_{\pm0.15}$ & \underline{80.66$_{\pm0.25}$} & 80.00$_{\pm0.00}$ \\
Tolokers & 78.16$_{\pm0.02}$ & \textbf{79.93$_{\pm0.10}$} & 78.50$_{\pm0.55}$ & 78.18$_{\pm0.03}$ & 78.18$_{\pm0.04}$ & 78.24$_{\pm0.03}$ & 78.20$_{\pm0.02}$ & \underline{78.61$_{\pm0.06}$} & 78.32$_{\pm0.35}$ \\
Amazon-Ratings & \textbf{47.90$_{\pm0.45}$} & \underline{47.35$_{\pm0.26}$} & 47.18$_{\pm0.42}$ & 42.70$_{\pm0.10}$ & 42.74$_{\pm0.12}$ & 42.57$_{\pm0.34}$ & 42.84$_{\pm0.04}$ & 44.68$_{\pm0.48}$ & 45.58$_{\pm1.16}$ \\
Questions & \textbf{97.33$_{\pm0.06}$} & \underline{97.15$_{\pm0.04}$} & 97.11$_{\pm0.02}$ & 97.10$_{\pm0.01}$ & 97.09$_{\pm0.02}$ & 97.11$_{\pm0.00}$ & 97.06$_{\pm0.03}$ & 97.02$_{\pm0.01}$ & 97.13$_{\pm0.06}$ \\
Squirrel & 35.55$_{\pm0.98}$ & 38.67$_{\pm1.84}$ & 38.78$_{\pm2.39}$ & 38.92$_{\pm2.98}$ & 37.73$_{\pm2.31}$ & 36.76$_{\pm3.55}$ & 37.25$_{\pm2.65}$ & \textbf{43.40$_{\pm1.03}$} & \underline{41.77$_{\pm2.06}$} \\
\midrule
\textbf{Average Accuracy} & 59.46 & 53.78 & 55.09 & 60.99 & 60.60 & 61.28 & 59.96 & \textbf{65.48} & \underline{65.12} \\
\textbf{Average Ranking} & 5.00 & 4.50 & 5.60 & 5.20 & 6.00 & 5.60 & 6.30 & \textbf{2.60} & \underline{3.00} \\
\midrule
\textbf{Avg. Accuracy} & 56.83 & 63.44 & 64.66 & 65.57 & 65.42 & 65.58 & 65.08 & \underline{71.19} & \textbf{71.58} \\
\textbf{Avg. Ranking} & 6.87 & 3.87 & 4.30 & 5.61 & 5.83 & 5.91 & 6.13 & \underline{3.04} & \textbf{2.26} \\
\bottomrule
\end{tabular}
}
\end{table*}

\begin{landscape}

\begin{table}[p]
    \centering  
    \caption{Statistics of the 28 node-classification benchmarks used in this work, spanning diverse domains, feature specifications and structural properties.}
    \vspace{4mm}
    \resizebox{\linewidth}{!}{%
    \begin{tabular}{lrrrrrrrrrrr}
        \toprule
        \textbf{Dataset} & \textbf{\#Nodes} & \textbf{\#Edges} & \textbf{\#Features} & \textbf{\#Classes} & \textbf{Feature Type}
        & \textbf{Train/Val/Test Ratios (\%)} & \textbf{\#Train Nodes} & \textbf{\#Val Nodes} & \textbf{\#Test Nodes} & \textbf{Homophily Ratio} & \textbf{Avg. Degree} \\
        \midrule
        AirBrazil & 131 & 2137 & 131 & 4 & One-hot & 61.07/19.08/19.85 & 80 & 25 & 26 & 0.4307 & 16.31 \\
        Texas & 183 & 741 & 1703 & 5 & Binary Sparse & 47.54/31.69/20.22 & 87 & 58 & 37 & 0.0609 & 4.05 \\
        Cornell & 183 & 737 & 1703 & 5 & Binary Sparse & 47.54/32.24/20.22 & 87 & 59 & 37 & 0.1227 & 4.03 \\
        Wisconsin & 251 & 1151 & 1703 & 5 & Binary Sparse & 47.81/31.87/20.32 & 120 & 80 & 51 & 0.1778 & 4.59 \\
        AirEU & 399 & 12385 & 399 & 4 & One-hot & 20.05/39.85/40.10 & 80 & 159 & 160 & 0.4046 & 31.04 \\
        Chameleon & 890 & 18598 & 2325 & 5 & Binary Sparse & 45.96/32.25/21.80 & 409 & 287 & 194 & 0.2361 & 20.9 \\
        AirUSA & 1190 & 28388 & 1190 & 4 & One-hot & 6.72/46.64/46.64 & 80 & 555 & 555 & 0.6978 & 23.86 \\
        Squirrel & 2223 & 96219 & 2089 & 5 & Binary Sparse & 47.37/32.30/20.33 & 1053 & 718 & 452 & 0.2072 & 43.28 \\
        Wiki & 2405 & 25597 & 4973 & 17 & Unsigned Dense & 14.14/42.91/42.95 & 340 & 1032 & 1033 & 0.6097 & 10.64 \\
        Cora & 2708 & 13264 & 1433 & 7 & Sparse & 5.17/18.46/36.93 & 140 & 500 & 1000 & 0.8100 & 4.9 \\
        Citeseer & 3327 & 12431 & 3703 & 6 & Sparse & 3.61/15.03/30.06 & 120 & 500 & 1000 & 0.7355 & 3.74 \\
        BlogCatalog & 5196 & 348682 & 8189 & 6 & Binary Sparse & 2.31/48.85/48.85 & 120 & 2538 & 2538 & 0.4011 & 67.11 \\
        Actor & 7600 & 60918 & 932 & 5 & Binary Sparse & 48.00/32.00/20.00 & 3648 & 2432 & 1520 & 0.2167 & 8.02 \\
        AmzPhoto & 7650 & 245812 & 745 & 8 & Binary Dense & 2.09/48.95/48.95 & 160 & 3745 & 3745 & 0.8272 & 32.13 \\
        Minesweeper & 10000 & 88804 & 7 & 2 & One-hot & 50.00/25.00/25.00 & 5000 & 2500 & 2500 & 0.6828 & 8.88 \\
        WikiCS & 11701 & 442907 & 300 & 10 & Signed Dense & 4.96/15.12/49.97 & 580 & 1769 & 5847 & 0.6543 & 37.85 \\
        Tolokers & 11758 & 1049758 & 10 & 2 & Unsigned Dense & 50.00/25.00/25.00 & 5879 & 2939 & 2940 & 0.5945 & 89.28 \\
        AmzComputer & 13752 & 505474 & 767 & 10 & Binary Dense & 1.45/49.27/49.27 & 200 & 6776 & 6776 & 0.7772 & 36.76 \\
        DBLP & 17716 & 123450 & 1639 & 4 & Binary Sparse & 0.45/49.77/49.77 & 80 & 8818 & 8818 & 0.8279 & 6.97 \\
        CoCS & 18333 & 182121 & 6805 & 15 & Sparse & 1.64/49.18/49.18 & 300 & 9016 & 9017 & 0.8081 & 9.93 \\
        Pubmed & 19717 & 108365 & 500 & 3 & Binary Dense & 0.30/2.54/5.07 & 60 & 500 & 1000 & 0.8024 & 5.5 \\
        FullCora & 19793 & 146635 & 8710 & 70 & Signed Dense & 7.07/46.46/46.47 & 1400 & 9196 & 9197 & 0.5670 & 7.41 \\
        Roman Empire & 22662 & 88516 & 300 & 18 & Signed Dense & 50.00/25.00/25.00 & 11331 & 5665 & 5666 & 0.0469 & 3.91 \\
        Amazon Ratings & 24492 & 210592 & 300 & 5 & Signed Dense & 50.00/25.00/25.00 & 12246 & 6123 & 6123 & 0.3804 & 8.6 \\
        Deezer & 28281 & 213785 & 128 & 2 & Unsigned Dense & 0.14/49.93/49.93 & 40 & 14120 & 14121 & 0.5251 & 7.56 \\
        CoPhysics & 34493 & 530417 & 8415 & 5 & Sparse & 0.29/49.85/49.86 & 100 & 17196 & 17197 & 0.9314 & 15.38 \\
        Questions & 48921 & 356001 & 301 & 2 & Unsigned Dense & 50.00/25.00/25.00 & 24460 & 12230 & 12231 & 0.8396 & 7.28 \\
        OGBN-Arxiv & 169343 & 2484941 & 128 & 40 & Signed Dense & 53.70/17.60/28.70 & 90941 & 29799 & 48603 & 0.6542 & 14.67 \\
        \bottomrule
    \end{tabular}
    }
\label{tbl:dataset}
\end{table}

\end{landscape}

\begin{table}[ht]
\label{tbl:graphany_full}
\centering
\caption{
Performance(\%) comparison against its predictors, and GraphAny across 28 datasets. \textbf{1st}, \underline{2nd} best performance is highlighted. $\Delta$ rows report relative improvements over each comparator. RGVT consistently outperforms both predictors and GraphAny variants, demonstrating robust generalization across diverse datasets.
}

\resizebox{0.90\textwidth}{!}{
\begin{tabular}{l|cccccccc}
\toprule
 \textbf{Method} & Actor & AirBrazil & AirEU & AirUS & AmzComp & AmzPhoto & AmzRatings & BlogCatalog \\
\midrule
 \textbf{\#Train Nodes} & 3648 & 80 & 80 & 80 & 200 & 160 & 12246 & 120 \\
\midrule
 Linear & $\mathbf{35.38_{\pm 0.33}}$ & $25.38_{\pm 6.44}$ & $23.87_{\pm 2.36}$ & $26.16_{\pm 1.27}$ & $70.42_{\pm 0.45}$ & $77.44_{\pm 0.72}$ & $37.99_{\pm 0.10}$ & $70.80_{\pm 0.49}$ \\
 MLP & $\underline{\smash{35.32_{\pm 0.38}}}$ & $25.38_{\pm 4.39}$ & $26.62_{\pm 4.11}$ & $25.37_{\pm 1.22}$ & $69.73_{\pm 0.29}$ & $78.64_{\pm 0.63}$ & $\underline{\smash{45.37_{\pm 0.48}}}$ & $75.60_{\pm 0.59}$ \\
\midrule
 GraphAny (Wiscon) & $30.68_{\pm 2.06}$ & $42.31_{\pm 0.00}$ & $43.00_{\pm 1.28}$ & $45.16_{\pm 0.21}$ & $81.31_{\pm 0.35}$ & $91.50_{\pm 0.14}$ & $42.25_{\pm 0.61}$ & $77.49_{\pm 1.29}$ \\
 GraphAny (Cora) & $29.28_{\pm 1.62}$ & $45.38_{\pm 1.72}$ & $41.87_{\pm 1.59}$ & $45.52_{\pm 0.27}$ & $82.42_{\pm 0.11}$ & $91.10_{\pm 0.11}$ & $42.59_{\pm 0.38}$ & $73.48_{\pm 0.08}$ \\
 GraphAny (Arxiv) & $29.70_{\pm 1.11}$ & $42.31_{\pm 0.00}$ & $43.12_{\pm 0.76}$ & $46.31_{\pm 0.49}$ & $82.17_{\pm 0.10}$ & $91.83_{\pm 0.05}$ & $42.51_{\pm 0.19}$ & $75.18_{\pm 1.19}$ \\
 GraphAny (Best) & $30.68_{\pm 2.06}$ & $45.38_{\pm 1.72}$ & $43.12_{\pm 0.76}$ & $46.31_{\pm 0.49}$ & $82.42_{\pm 0.11}$ & $91.83_{\pm 0.05}$ & $42.59_{\pm 0.38}$ & $77.49_{\pm 1.29}$ \\
\midrule
 \textbf{RGVT}+ Linear (Arxiv) & $33.87_{\pm 0.50}$ & $\underline{\smash{52.31_{\pm 6.99}}}$ & $\underline{\smash{46.25_{\pm 1.93}}}$ & $\mathbf{57.08_{\pm 3.38}}$ & $\mathbf{84.30_{\pm 0.23}}$ & $\underline{\smash{92.02_{\pm 0.30}}}$ & $42.98_{\pm 0.13}$ & $\underline{\smash{80.75_{\pm 3.08}}}$ \\
 $\Delta {\text{ Linear }}$ & \textcolor{blue}{$\downarrow\mathbf{4.27\%}$} & \textcolor{red}{$\uparrow\mathbf{106.11\%}$} & \textcolor{red}{$\uparrow\mathbf{93.76\%}$} & \textcolor{red}{$\uparrow\mathbf{118.20\%}$} & \textcolor{red}{$\uparrow\mathbf{19.71\%}$} & \textcolor{red}{$\uparrow\mathbf{18.83\%}$} & \textcolor{red}{$\uparrow\mathbf{13.14\%}$} & \textcolor{red}{$\uparrow\mathbf{14.05\%}$} \\
 $\Delta {\text{ GraphAny (Best) }}$ & \textcolor{red}{$\uparrow\mathbf{10.40\%}$} & \textcolor{red}{$\uparrow\mathbf{15.27\%}$} & \textcolor{red}{$\uparrow\mathbf{7.26\%}$} & \textcolor{red}{$\uparrow\mathbf{23.26\%}$} & \textcolor{red}{$\uparrow\mathbf{2.28\%}$} & \textcolor{red}{$\uparrow\mathbf{0.21\%}$} & \textcolor{red}{$\uparrow\mathbf{0.92\%}$} & \textcolor{red}{$\uparrow\mathbf{4.21\%}$} \\
\midrule
 \textbf{RGVT}+ MLP (Arxiv) & $34.00_{\pm 1.14}$ & $\mathbf{66.15_{\pm 10.67}}$ & $\mathbf{47.00_{\pm 2.70}}$ & $\underline{\smash{56.79_{\pm 2.26}}}$ & $\underline{\smash{83.72_{\pm 0.46}}}$ & $\mathbf{92.15_{\pm 0.36}}$ & $\mathbf{48.61_{\pm 1.53}}$ & $\mathbf{84.61_{\pm 1.59}}$ \\
 $\Delta {\text{ MLP }}$ & \textcolor{blue}{$\downarrow\mathbf{3.74\%}$} & \textcolor{red}{$\uparrow\mathbf{160.64\%}$} & \textcolor{red}{$\uparrow\mathbf{76.56\%}$} & \textcolor{red}{$\uparrow\mathbf{123.85\%}$} & \textcolor{red}{$\uparrow\mathbf{20.06\%}$} & \textcolor{red}{$\uparrow\mathbf{17.18\%}$} & \textcolor{red}{$\uparrow\mathbf{7.14\%}$} & \textcolor{red}{$\uparrow\mathbf{11.92\%}$} \\
 $\Delta {\text{ GraphAny (Best) }}$ & \textcolor{red}{$\uparrow\mathbf{10.82\%}$} & \textcolor{red}{$\uparrow\mathbf{45.77\%}$} & \textcolor{red}{$\uparrow\mathbf{9.00\%}$} & \textcolor{red}{$\uparrow\mathbf{22.63\%}$} & \textcolor{red}{$\uparrow\mathbf{1.58\%}$} & \textcolor{red}{$\uparrow\mathbf{0.35\%}$} & \textcolor{red}{$\uparrow\mathbf{14.13\%}$} & \textcolor{red}{$\uparrow\mathbf{9.19\%}$} \\
\bottomrule
\end{tabular}}
\vskip 0.1in
\resizebox{0.90\textwidth}{!}{
\begin{tabular}{l|cccccccc}
\toprule
 \textbf{Method} & Chameleon & Citeseer & CoCS & CoPhysics & Cora & Cornell & DBLP & Deezer \\
\midrule
 \textbf{\#Train Nodes} & 409 & 120 & 300 & 100 & 140 & 87 & 80 & 40 \\
\midrule
 Linear & $36.39_{\pm 1.95}$ & $49.22_{\pm 0.28}$ & $85.60_{\pm 0.13}$ & $82.21_{\pm 0.10}$ & $48.60_{\pm 1.76}$ & $\mathbf{74.05_{\pm 1.48}}$ & $52.03_{\pm 0.31}$ & $56.39_{\pm 1.30}$ \\
 MLP & $36.08_{\pm 4.30}$ & $49.74_{\pm 0.94}$ & $87.23_{\pm 0.53}$ & $86.54_{\pm 0.89}$ & $52.56_{\pm 0.72}$ & $71.89_{\pm 3.08}$ & $53.23_{\pm 0.32}$ & $55.40_{\pm 1.15}$ \\
\midrule
 GraphAny (Wiscon) & $31.14_{\pm 6.89}$ & $68.20_{\pm 0.14}$ & $89.42_{\pm 0.10}$ & $91.92_{\pm 0.12}$ & $76.90_{\pm 0.45}$ & $65.95_{\pm 2.42}$ & $70.59_{\pm 0.82}$ & $54.08_{\pm 0.08}$ \\
 GraphAny (Cora) & $30.87_{\pm 6.28}$ & $68.12_{\pm 0.29}$ & $88.77_{\pm 0.03}$ & $92.25_{\pm 0.02}$ & $76.84_{\pm 0.05}$ & $63.24_{\pm 2.42}$ & $72.18_{\pm 0.05}$ & $53.98_{\pm 0.04}$ \\
 GraphAny (Arxiv) & $30.41_{\pm 6.12}$ & $67.82_{\pm 0.54}$ & $89.00_{\pm 0.17}$ & $92.20_{\pm 0.02}$ & $77.70_{\pm 0.20}$ & $64.86_{\pm 0.00}$ & $71.47_{\pm 0.18}$ & $54.20_{\pm 0.13}$ \\
 GraphAny (Best) & $31.14_{\pm 6.89}$ & $68.20_{\pm 0.14}$ & $89.42_{\pm 0.10}$ & $92.25_{\pm 0.02}$ & $77.70_{\pm 0.20}$ & $65.95_{\pm 2.42}$ & $72.18_{\pm 0.05}$ & $54.20_{\pm 0.13}$ \\
\midrule
 \textbf{RGVT}+ Linear (Arxiv) & $\underline{\smash{41.34_{\pm 0.99}}}$ & $\mathbf{71.96_{\pm 0.38}}$ & $\underline{\smash{91.32_{\pm 0.42}}}$ & $\mathbf{92.71_{\pm 0.35}}$ & $\mathbf{81.32_{\pm 0.64}}$ & $\underline{\smash{73.51_{\pm 4.44}}}$ & $\mathbf{81.29_{\pm 1.01}}$ & $\mathbf{57.79_{\pm 0.29}}$ \\
 $\Delta {\text{ Linear }}$ & \textcolor{red}{$\uparrow\mathbf{13.60\%}$} & \textcolor{red}{$\uparrow\mathbf{46.20\%}$} & \textcolor{red}{$\uparrow\mathbf{6.68\%}$} & \textcolor{red}{$\uparrow\mathbf{12.77\%}$} & \textcolor{red}{$\uparrow\mathbf{67.33\%}$} & \textcolor{blue}{$\downarrow\mathbf{0.73\%}$} & \textcolor{red}{$\uparrow\mathbf{56.24\%}$} & \textcolor{red}{$\uparrow\mathbf{2.48\%}$} \\
 $\Delta {\text{ GraphAny (Best) }}$ & \textcolor{red}{$\uparrow\mathbf{32.76\%}$} & \textcolor{red}{$\uparrow\mathbf{5.51\%}$} & \textcolor{red}{$\uparrow\mathbf{2.12\%}$} & \textcolor{red}{$\uparrow\mathbf{0.50\%}$} & \textcolor{red}{$\uparrow\mathbf{4.66\%}$} & \textcolor{red}{$\uparrow\mathbf{11.46\%}$} & \textcolor{red}{$\uparrow\mathbf{12.62\%}$} & \textcolor{red}{$\uparrow\mathbf{6.62\%}$} \\
\midrule
 \textbf{RGVT}+ MLP (Arxiv) & $\mathbf{44.74_{\pm 2.17}}$ & $\underline{\smash{70.02_{\pm 1.34}}}$ & $\mathbf{92.10_{\pm 0.14}}$ & $\underline{\smash{92.62_{\pm 0.60}}}$ & $\underline{\smash{81.18_{\pm 1.15}}}$ & $\mathbf{74.05_{\pm 1.48}}$ 
 & $\underline{\smash{81.08_{\pm 1.22}}}$ & $\underline{\smash{57.58_{\pm 0.61}}}$ \\
 $\Delta {\text{ MLP }}$ & \textcolor{red}{$\uparrow\mathbf{24.00\%}$} & \textcolor{red}{$\uparrow\mathbf{40.77\%}$} & \textcolor{red}{$\uparrow\mathbf{5.58\%}$} & \textcolor{red}{$\uparrow\mathbf{7.03\%}$} & \textcolor{red}{$\uparrow\mathbf{54.45\%}$} & \textcolor{red}{$\uparrow\mathbf{3.00\%}$} & \textcolor{red}{$\uparrow\mathbf{52.32\%}$} & \textcolor{red}{$\uparrow\mathbf{3.94\%}$} \\
 $\Delta {\text{ GraphAny (Best) }}$ & \textcolor{red}{$\uparrow\mathbf{43.67\%}$} & \textcolor{red}{$\uparrow\mathbf{2.67\%}$} & \textcolor{red}{$\uparrow\mathbf{3.00\%}$} & \textcolor{red}{$\uparrow\mathbf{0.40\%}$} & \textcolor{red}{$\uparrow\mathbf{4.48\%}$} & \textcolor{red}{$\uparrow\mathbf{12.28\%}$} & \textcolor{red}{$\uparrow\mathbf{12.33\%}$} & \textcolor{red}{$\uparrow\mathbf{6.24\%}$} \\
\bottomrule
\end{tabular}}
\vskip 0.1in
\resizebox{0.90\textwidth}{!}{
\begin{tabular}{l|cccccccc}
\toprule
 \textbf{Method} & Minesweeper & Pubmed & Questions & Roman & Squirrel & Texas & Tolokers & Wiki \\
\midrule
 \textbf{\#Train Nodes} & 5000 & 60 & 24460 & 11331 & 1053 & 87 & 5879 & 340 \\
\midrule
 Linear & $80.00_{\pm 0.00}$ & $68.68_{\pm 0.90}$ & $97.05_{\pm 0.00}$ & $63.19_{\pm 0.04}$ & $30.62_{\pm 0.51}$ & $78.38_{\pm 0.00}$ & $78.12_{\pm 0.11}$ & $71.13_{\pm 1.14}$ \\
 MLP & $80.00_{\pm 0.00}$ & $70.28_{\pm 1.00}$ & $\underline{\smash{97.12_{\pm 0.09}}}$ & $64.43_{\pm 0.28}$ & $34.73_{\pm 2.40}$ & $\mathbf{79.46_{\pm 1.48}}$ & $\underline{\smash{78.28_{\pm 0.05}}}$ & $72.62_{\pm 1.43}$ \\
\midrule
 GraphAny (Wiscon) & $80.26_{\pm 0.12}$ & $77.50_{\pm 0.23}$ & $97.11_{\pm 0.01}$ & $63.88_{\pm 0.20}$ & $24.97_{\pm 0.82}$ & $74.89_{\pm 2.63}$ & $78.19_{\pm 0.02}$ & $57.75_{\pm 0.46}$ \\
 GraphAny (Cora) & $\mathbf{80.42_{\pm 0.13}}$ & $76.52_{\pm 0.04}$ & $97.08_{\pm 0.01}$ & $63.59_{\pm 0.62}$ & $24.60_{\pm 0.86}$ & $71.56_{\pm 1.93}$ & $78.17_{\pm 0.06}$ & $57.81_{\pm 0.10}$ \\
 GraphAny (Arxiv) & $\underline{\smash{80.34_{\pm 0.15}}}$ & $76.68_{\pm 0.08}$ & $97.10_{\pm 0.01}$ & $63.74_{\pm 0.62}$ & $26.01_{\pm 0.44}$ & $71.52_{\pm 5.78}$ & $78.14_{\pm 0.03}$ & $61.06_{\pm 0.42}$ \\
 GraphAny (Best) & $\mathbf{80.42_{\pm 0.13}}$ & $77.50_{\pm 0.23}$ & $97.11_{\pm 0.01}$ & $63.88_{\pm 0.20}$ & $26.01_{\pm 0.44}$ & $74.89_{\pm 2.63}$ & $78.19_{\pm 0.02}$ & $61.06_{\pm 0.42}$ \\
\midrule
 \textbf{RGVT}+ Linear (Arxiv) & $79.77_{\pm 0.27}$ & $\mathbf{79.02_{\pm 1.10}}$ & $\mathbf{97.13_{\pm 0.02}}$ & $\underline{\smash{67.71_{\pm 0.87}}}$ & $\underline{\smash{39.38_{\pm 0.81}}}$ & $75.14_{\pm 2.96}$ & $78.10_{\pm 0.29}$ & $\underline{\smash{72.76_{\pm 0.98}}}$ \\
 $\Delta {\text{ Linear }}$ & \textcolor{blue}{$\downarrow\mathbf{0.29\%}$} & \textcolor{red}{$\uparrow\mathbf{15.06\%}$} & \textcolor{red}{$\uparrow\mathbf{0.08\%}$} & \textcolor{red}{$\uparrow\mathbf{7.15\%}$} & \textcolor{red}{$\uparrow\mathbf{28.61\%}$} & \textcolor{blue}{$\downarrow\mathbf{4.13\%}$} & \textcolor{blue}{$\downarrow\mathbf{0.03\%}$} & \textcolor{red}{$\uparrow\mathbf{2.29\%}$} \\
 $\Delta {\text{ GraphAny (Best) }}$ & \textcolor{blue}{$\downarrow\mathbf{0.81\%}$} & \textcolor{red}{$\uparrow\mathbf{1.96\%}$} & \textcolor{red}{$\uparrow\mathbf{0.02\%}$} & \textcolor{red}{$\uparrow\mathbf{6.00\%}$} & \textcolor{red}{$\uparrow\mathbf{51.40\%}$} & \textcolor{red}{$\uparrow\mathbf{0.33\%}$} & \textcolor{blue}{$\downarrow\mathbf{0.12\%}$} & \textcolor{red}{$\uparrow\mathbf{19.16\%}$} \\
\midrule
 \textbf{RGVT}+ MLP (Arxiv) & $79.98_{\pm 0.18}$ & $\underline{\smash{78.72_{\pm 0.70}}}$ & $\mathbf{97.13_{\pm 0.02}}$ & $\mathbf{70.18_{\pm 0.28}}$ & $\mathbf{41.77_{\pm 2.06}}$ & $\underline{\smash{78.92_{\pm 1.21}}}$ & $\mathbf{79.23_{\pm 0.42}}$ & $\mathbf{74.54_{\pm 0.76}}$ \\
 $\Delta {\text{ MLP }}$ & \textcolor{blue}{$\downarrow\mathbf{0.02\%}$} & \textcolor{red}{$\uparrow\mathbf{12.01\%}$} & \textcolor{red}{$\uparrow\mathbf{0.01\%}$} & \textcolor{red}{$\uparrow\mathbf{8.92\%}$} & \textcolor{red}{$\uparrow\mathbf{20.27\%}$} & \textcolor{blue}{$\downarrow\mathbf{0.68\%}$} & \textcolor{red}{$\uparrow\mathbf{1.21\%}$} & \textcolor{red}{$\uparrow\mathbf{2.64\%}$} \\
 $\Delta {\text{ GraphAny (Best) }}$ & \textcolor{blue}{$\downarrow\mathbf{0.55\%}$} & \textcolor{red}{$\uparrow\mathbf{1.57\%}$} & \textcolor{red}{$\uparrow\mathbf{0.02\%}$} & \textcolor{red}{$\uparrow\mathbf{9.86\%}$} & \textcolor{red}{$\uparrow\mathbf{60.59\%}$} & \textcolor{red}{$\uparrow\mathbf{5.38\%}$} & \textcolor{red}{$\uparrow\mathbf{1.33\%}$} & \textcolor{red}{$\uparrow\mathbf{22.08\%}$} \\
\bottomrule
\end{tabular}}
\vskip 0.1in
\resizebox{0.63\textwidth}{!}{
\begin{tabular}{l|ccccc}
\toprule
 \textbf{Method} & Wisconsin & WikiCS & OGBN-Arxiv & FullCora & \textbf{Total Avg.} \\
\midrule
 \textbf{\#Train Nodes} & 120 & 580 & 90941 & 1400 & - \\
\midrule
 Linear & $\mathbf{79.22_{\pm 1.07}}$ & $72.50_{\pm 0.04}$ & $52.44_{\pm 0.04}$ & $40.32_{\pm 0.20}$ & $59.41_{\pm 0.84}$ \\
 MLP & $74.90_{\pm 2.15}$ & $72.08_{\pm 0.19}$ & $53.80_{\pm 0.14}$ & $39.75_{\pm 0.44}$ & $60.43_{\pm 1.20}$ \\
\midrule
 GraphAny (Wiscon) & $66.28_{\pm 4.02}$ & $74.61_{\pm 0.80}$ & $57.77_{\pm 0.45}$ & $57.12_{\pm 0.21}$ & $64.72_{\pm 0.96}$ \\
 GraphAny (Cora) & $61.96_{\pm 6.30}$ & $74.91_{\pm 0.92}$ & $58.58_{\pm 0.10}$ & $57.21_{\pm 0.09}$ & $64.30_{\pm 0.94}$ \\
 GraphAny (Arxiv) & $64.31_{\pm 5.08}$ & $\underline{\smash{75.53_{\pm 0.78}}}$ & $58.63_{\pm 0.14}$ & $58.08_{\pm 0.08}$ & $64.71_{\pm 0.89}$ \\
 GraphAny (Best) & $66.28_{\pm 4.02}$ & $\underline{\smash{75.53_{\pm 0.78}}}$ & $58.63_{\pm 0.14}$ & $58.08_{\pm 0.08}$ & $65.30_{\pm 0.93}$ \\
\midrule
 \textbf{RGVT}+ Linear (Arxiv) & $\underline{\smash{76.86_{\pm 1.64}}}$ & $\mathbf{79.59_{\pm 0.16}}$ & $\underline{\smash{70.14_{\pm 0.28}}}$ & $\mathbf{64.35_{\pm 0.77}}$ & $\underline{\smash{70.03_{\pm 1.26}}}$ \\
 $\Delta {\text{ Linear }}$ & \textcolor{blue}{$\downarrow\mathbf{2.98\%}$} & \textcolor{red}{$\uparrow\mathbf{9.78\%}$} & \textcolor{red}{$\uparrow\mathbf{33.75\%}$} & \textcolor{red}{$\uparrow\mathbf{59.60\%}$} & \textcolor{red}{$\uparrow\mathbf{17.88\%}$} \\
 $\Delta {\text{ GraphAny (Best) }}$ & \textcolor{red}{$\uparrow\mathbf{15.96\%}$} & \textcolor{red}{$\uparrow\mathbf{5.38\%}$} & \textcolor{red}{$\uparrow\mathbf{19.63\%}$} & \textcolor{red}{$\uparrow\mathbf{10.80\%}$} & \textcolor{red}{$\uparrow\mathbf{7.24\%}$} \\
\midrule
 \textbf{RGVT}+ MLP (Arxiv) & $71.76_{\pm 1.75}$ & $\mathbf{79.59_{\pm 0.19}}$ & $\mathbf{71.11_{\pm 0.28}}$ & $\underline{\smash{62.35_{\pm 2.22}}}$ & $\mathbf{71.13_{\pm 1.41}}$ \\
 $\Delta {\text{ MLP }}$ & \textcolor{blue}{$\downarrow\mathbf{4.19\%}$} & \textcolor{red}{$\uparrow\mathbf{10.42\%}$} & \textcolor{red}{$\uparrow\mathbf{32.17\%}$} & \textcolor{red}{$\uparrow\mathbf{56.86\%}$} & \textcolor{red}{$\uparrow\mathbf{17.71\%}$} \\
 $\Delta {\text{ GraphAny (Best) }}$ & \textcolor{red}{$\uparrow\mathbf{8.27\%}$} & \textcolor{red}{$\uparrow\mathbf{5.38\%}$} & \textcolor{red}{$\uparrow\mathbf{21.29\%}$} & \textcolor{red}{$\uparrow\mathbf{7.35\%}$} & \textcolor{red}{$\uparrow\mathbf{8.93\%}$} \\
\bottomrule
\end{tabular}}
\vskip -0.0in
\end{table}
\begin{table}[t]
\centering
\caption{
Performance (\%) comparison with 12 individually trained GNN models across 28 datasets.
\textcolor{Red}{1st}, \textcolor{Orange}{2nd}, \textcolor{Emerald}{3rd} best results are highlighted. 
RGVT achieves the best performance on average, ranking 1st with the MLP predictor and 2nd with the linear predictor.
}

\resizebox{\textwidth}{!}{
\begin{tabular}{l|cccccccc}
\toprule 
\textbf{Method} & 
Actor &
AirBrazil &
AirEU &
AirUSA &
AmzComp &
AmzPhoto &
AmzRatings &
BlogCatalog
\\
\midrule
\textbf{\#Train Nodes} & 
3648 &
80 &
80 &
80 &
200 &
160 &
12246 &
120
\\
\midrule
\text{GIN}& $27.75_{\pm1.52}$ & $49.23_{\pm10.67}$ & $40.50_{\pm3.11}$ & $51.32_{\pm1.47}$ & $36.26_{\pm1.10}$ & $28.02_{\pm6.08}$ & $46.01_{\pm2.73}$ & $17.00_{\pm0.51}$ 
\\
\text{GAT}& $28.57_{\pm0.82}$ & $44.62_{\pm14.54}$ & $40.00_{\pm1.93}$ & $51.24_{\pm2.90}$ & $84.43_{\pm1.28}$ & $89.40_{\pm0.71}$ & $48.99_{\pm0.56}$ & $55.16_{\pm4.77}$ 
\\
\text{GATv2} & $29.80_{\pm0.46}$ & $54.62_{\pm8.34}$ & $40.12_{\pm2.23}$ & $47.14_{\pm2.30}$ & $84.51_{\pm0.71}$ & $89.54_{\pm0.39}$ & $49.57_{\pm0.67}$ & $57.94_{\pm3.22}$ 
\\
\text{S$^2$GC}& $28.93_{\pm0.21}$ & $54.62_{\pm3.22}$ & $41.12_{\pm1.20}$ & $50.92_{\pm0.49}$ & $84.70_{\pm0.10}$ & $92.07_{\pm0.27}$ & $42.05_{\pm0.20}$ & $80.09_{\pm0.06}$ 
\\
\text{SGC}& $29.70_{\pm0.08}$ & \textcolor{Red}{$\mathbf{71.54_{\pm 10.39}}$} & \textcolor{Emerald}{$\mathbf{42.62_{\pm 0.81}}$} & \textcolor{Emerald}{$\mathbf{54.49_{\pm 1.45}}$} & $84.39_{\pm0.09}$ & \textcolor{Orange}{$\mathbf{92.26_{\pm 0.16}}$} & $43.07_{\pm0.08}$ & $71.06_{\pm0.04}$ 
\\
\text{JKNet}&$30.53_{\pm0.36}$ & $56.92_{\pm15.72}$ & $41.88_{\pm0.99}$ & $53.26_{\pm2.71}$ & \textcolor{Orange}{$\mathbf{84.96_{\pm 0.30}}$} & $91.41_{\pm0.60}$ & $48.93_{\pm0.40}$ & $72.11_{\pm0.31}$ 
\\
\text{APPNP}&$31.72_{\pm0.54}$ & $36.92_{\pm3.44}$ & $31.25_{\pm0.99}$ & $47.50_{\pm0.52}$ & $84.39_{\pm0.18}$ & $91.87_{\pm0.21}$ & \textcolor{Emerald}{$\mathbf{50.63_{\pm 0.44}}$} & \textcolor{Emerald}{$\mathbf{86.00_{\pm 0.52}}$} 
\\
\text{GCN}&$30.37_{\pm0.44}$ & \textcolor{Emerald}{$\mathbf{63.08_{\pm 3.44}}$} & $41.50_{\pm0.84}$ & $53.30_{\pm1.18}$ & $84.69_{\pm0.18}$ & $91.71_{\pm0.18}$ & $49.17_{\pm0.43}$ & $71.75_{\pm0.24}$ 
\\
\text{GCNII}& $34.22_{\pm0.35}$ & $46.15_{\pm5.44}$ & $41.00_{\pm9.45}$ & $51.24_{\pm1.69}$ & \textcolor{Emerald}{$\mathbf{84.81_{\pm 0.23}}$} & $91.39_{\pm0.30}$ & $50.37_{\pm0.17}$ & \textcolor{Orange}{$\mathbf{87.43_{\pm 0.69}}$} 
\\
\text{SAGE}& \textcolor{Red}{$\mathbf{36.37_{\pm 0.93}}$} & $34.62_{\pm7.20}$ & $37.50_{\pm4.35}$ & $48.18_{\pm1.72}$ & $83.65_{\pm0.19}$ & $90.74_{\pm0.42}$ & \textcolor{Orange}{$\mathbf{50.86_{\pm 0.41}}$} & $80.27_{\pm3.55}$ 
\\
\text{GPRGNN}& 
\textcolor{Emerald}{$\mathbf{34.51_{\pm0.31}}$}& $31.54_{\pm4.21}$ & $34.37_{\pm1.65}$ & $51.57_{\pm0.59}$ & \textcolor{Red}{$\mathbf{85.55_{\pm 0.31}}$} & \textcolor{Red}{$\mathbf{92.81_{\pm 0.18}}$} & $48.74_{\pm0.28}$ & \textcolor{Red}{$\mathbf{88.34_{\pm 0.27}}$} 
\\
\text{UniMP}& \textcolor{Orange}{$\mathbf{35.97_{\pm 1.39}}$} & $44.62_{\pm9.65}$ & $39.25_{\pm3.14}$ & $47.64_{\pm1.46}$ & $84.06_{\pm0.70}$ & \textcolor{Emerald}{$\mathbf{92.15_{\pm 0.25}}$} & \textcolor{Red}{$\mathbf{51.30_{\pm 0.77}}$} & $80.70_{\pm3.24}$ 
\\
\midrule
\text{\textbf{RGVT} + Linear} 
& $33.87_{\pm0.50}$ & $52.31_{\pm6.99}$ & \textcolor{Orange}{$\mathbf{46.25_{\pm 1.93}}$} & \textcolor{Red}{$\mathbf{57.08_{\pm 3.38}}$} & $84.30_{\pm0.23}$ & $92.02_{\pm0.30}$ & $42.98_{\pm0.13}$ & $80.75_{\pm3.08}$ 
\\
\text{\textbf{RGVT} + MLP} 
& $34.00_{\pm1.14}$ & \textcolor{Orange}{$\mathbf{66.15_{\pm 10.67}}$} & \textcolor{Red}{$\mathbf{47.00_{\pm 2.70}}$} & \textcolor{Orange}{$\mathbf{56.79_{\pm 2.26}}$} & $83.72_{\pm0.46}$ & $92.15_{\pm0.36}$ & $48.61_{\pm1.53}$ & $84.61_{\pm1.59}$ 
\\
\bottomrule
\end{tabular}}

\vskip 0.1in

\resizebox{\textwidth}{!}{
\begin{tabular}{l|cccccccc}
\toprule 
\textbf{Method} & 
Chameleon &
Citeseer &
CoCS&
CoPhysics&
Cora&
Cornell&
DBLP&
Deezer
\\
\midrule
\textbf{\#Train Nodes} & 
409 &
120 &
300 &
100 &
140 &
87 &
80 &
40
\\
\midrule
\text{GIN}& $35.15_{\pm3.14}$ & $47.48_{\pm10.03}$ & $84.20_{\pm0.79}$ & $85.98_{\pm4.54}$ & $70.06_{\pm3.28}$ & $37.84_{\pm2.70}$ & $71.68_{\pm1.56}$ & $55.33_{\pm0.51}$ 
\\
\text{GAT}& $39.18_{\pm2.70}$ & $69.46_{\pm1.36}$ & $90.27_{\pm0.57}$ & $91.22_{\pm2.06}$ & $79.06_{\pm1.04}$ & $40.00_{\pm2.26}$ & $79.07_{\pm0.78}$ & $55.57_{\pm0.34}$ 
\\
\text{GATv2}& $38.76_{\pm0.92}$ & $68.60_{\pm0.62}$ & $88.71_{\pm0.98}$ & $89.43_{\pm3.15}$ & $79.98_{\pm1.02}$ & $37.84_{\pm3.82}$ & $78.28_{\pm1.92}$ & $55.69_{\pm0.40}$ 
\\
\text{S$^2$GC} & \textcolor{Emerald}{$\mathbf{41.75_{\pm 0.82}}$} & $67.32_{\pm0.37}$ & $90.75_{\pm0.03}$ & $91.69_{\pm0.04}$ & $78.40_{\pm0.71}$ & $43.78_{\pm2.26}$ & $70.27_{\pm0.07}$ & $54.48_{\pm0.36}$ 
\\
\text{SGC}& \textcolor{Orange}{$\mathbf{43.09_{\pm 0.28}}$} & $67.98_{\pm0.58}$ & $90.28_{\pm0.03}$ & $92.70_{\pm0.03}$ & $78.46_{\pm0.83}$ & $43.78_{\pm1.21}$ & $74.40_{\pm0.05}$ & $54.56_{\pm0.52}$ 
\\
\text{JKNet}& $41.75_{\pm1.09}$ & $68.32_{\pm1.68}$ & $90.28_{\pm0.22}$ & $92.65_{\pm0.15}$ & \textcolor{Orange}{$\mathbf{81.96_{\pm 0.77}}$} & $40.54_{\pm0.00}$ & $75.96_{\pm1.32}$ & $56.30_{\pm0.43}$ 
\\
\text{APPNP}& $40.10_{\pm0.43}$ & $68.68_{\pm0.43}$ & $91.13_{\pm0.09}$ & \textcolor{Red}{$\mathbf{93.45_{\pm 0.07}}$} & \textcolor{Emerald}{$\mathbf{81.38_{\pm 0.95}}$} & $47.03_{\pm6.22}$ & $77.75_{\pm0.28}$ & $55.97_{\pm0.53}$ 
\\
\text{GCN}& $39.90_{\pm1.19}$ & $69.22_{\pm0.62}$ & $90.60_{\pm0.09}$ & \textcolor{Emerald}{$\mathbf{92.85_{\pm 0.24}}$} & $81.26_{\pm0.77}$ & $45.41_{\pm3.52}$ & \textcolor{Emerald}{$\mathbf{79.52_{\pm 1.03}}$} & \textcolor{Emerald}{$\mathbf{57.09_{\pm 0.62}}$} 
\\
\text{GCNII}& $41.55_{\pm2.54}$ & $63.76_{\pm0.92}$ & $91.07_{\pm0.10}$ & $92.76_{\pm0.25}$ & $76.42_{\pm2.62}$ & $37.84_{\pm2.70}$ & $78.44_{\pm1.62}$ & $55.82_{\pm0.43}$ 
\\
\text{SAGE}& $40.31_{\pm1.43}$ & $66.96_{\pm2.24}$ & $90.55_{\pm0.09}$ & $92.10_{\pm0.54}$ & $78.96_{\pm0.72}$ & $67.03_{\pm3.52}$ & $73.47_{\pm0.37}$ & $56.20_{\pm1.07}$ 
\\
\text{GPRGNN}& $40.93_{\pm1.24}$ & \textcolor{Orange}{$\mathbf{70.30_{\pm 0.62}}$} & \textcolor{Orange}{$\mathbf{91.83_{\pm 0.21}}$} & \textcolor{Orange}{$\mathbf{93.20_{\pm 0.10}}$} & \textcolor{Red}{$\mathbf{82.08_{\pm 0.66}}$} & 
\textcolor{Emerald}{$\mathbf{68.65_{\pm3.63}}$}
& $78.41_{\pm0.15}$ & $56.89_{\pm0.26}$ 
\\
\text{UniMP}& $41.34_{\pm3.16}$ & $68.56_{\pm0.19}$ & $89.84_{\pm0.39}$ & $91.53_{\pm0.76}$ & $79.18_{\pm1.13}$ & $62.16_{\pm4.27}$ & $73.44_{\pm0.39}$ & $56.32_{\pm0.88}$ 
\\
\midrule
\text{\textbf{RGVT} + Linear} 
& $41.34_{\pm0.99}$ & \textcolor{Red}{$\mathbf{71.96_{\pm 0.38}}$} & \textcolor{Emerald}{$\mathbf{91.32_{\pm 0.42}}$} & $92.71_{\pm0.35}$ & $81.32_{\pm0.64}$ & \textcolor{Orange}{$\mathbf{73.51_{\pm 4.44}}$} & \textcolor{Red}{$\mathbf{81.29_{\pm 1.01}}$} & \textcolor{Red}{$\mathbf{57.79_{\pm 0.29}}$} 
\\
\text{\textbf{RGVT} + MLP} & 
\textcolor{Red}{$\mathbf{44.74_{\pm 2.17}}$} 
& \textcolor{Emerald}{$\mathbf{70.02_{\pm 1.34}}$} 
& \textcolor{Red}{$\mathbf{92.10_{\pm 0.14}}$} 
& $92.62_{\pm0.60}$ & $81.18_{\pm1.15}$ 
& \textcolor{Red}{$\mathbf{74.05_{\pm 1.48}}$} 
& \textcolor{Orange}{$\mathbf{81.08_{\pm 1.22}}$} & \textcolor{Orange}{$\mathbf{57.58_{\pm 0.61}}$} 
\\
\bottomrule
\end{tabular}}

\label{tbl:vs_GNN_full_1}
\vskip -0.0in
\end{table}

\begin{table*}[t]
\centering
\caption{
Performance (\%) comparison with 12 individually trained GNN models across 28 datasets.
\textcolor{Red}{1st}, \textcolor{Orange}{2nd}, \textcolor{Emerald}{3rd} best results are highlighted. 
RGVT achieves the best performance on average, ranking 1st with the MLP predictor and 2nd with the linear predictor.
}
\vskip 0.1in
\resizebox{\textwidth}{!}{
\begin{tabular}{l|cccccccc}
\toprule 
\textbf{Method} & 
Minesweeper &
Pubmed &
Questions &
Roman &
Squirrel &
Texas &
Tolokers &
Wiki
\\
\midrule
\textbf{\#Train Nodes} & 
5000 &
60 &
24460 &
11331 &
1053 &
87 &
5879 &
340
\\
\midrule
\text{GIN}& $80.18_{\pm0.26}$ & $70.22_{\pm6.97}$ & $97.02_{\pm0.01}$ & $51.41_{\pm1.83}$ & $34.73_{\pm0.00}$ & $68.11_{\pm5.20}$ & $78.16_{\pm0.02}$ & $41.28_{\pm15.08}$ 
\\
\text{GAT}& $81.70_{\pm0.53}$ & $76.26_{\pm0.70}$ & $97.06_{\pm0.02}$ & $43.42_{\pm0.18}$ & $36.46_{\pm0.50}$ & $63.78_{\pm6.22}$ & \textcolor{Orange}{$\mathbf{81.67_{\pm 0.18}}$} & $58.64_{\pm1.93}$ 
\\
\text{GATv2}& $82.60_{\pm0.78}$ & $75.56_{\pm1.28}$ & $97.04_{\pm0.02}$ & $59.97_{\pm0.82}$ & $38.41_{\pm2.13}$ & $62.16_{\pm6.34}$ & $80.73_{\pm1.55}$ & $54.52_{\pm2.90}$ 
\\
\text{S$^2$GC}& $81.24_{\pm0.50}$ & $76.68_{\pm0.04}$ & $97.06_{\pm0.00}$ & $48.73_{\pm0.11}$ & $37.52_{\pm0.25}$ & $62.16_{\pm3.31}$ & $78.65_{\pm0.06}$ & $70.94_{\pm1.00}$ 
\\
\text{SGC}& $81.52_{\pm0.30}$ & \textcolor{Emerald}{$\mathbf{77.54_{\pm 0.05}}$} & $97.09_{\pm0.00}$ & $43.71_{\pm0.06}$ & $38.14_{\pm0.20}$ & $55.68_{\pm1.48}$ & $78.58_{\pm0.04}$ & $70.47_{\pm0.45}$ 
\\
\text{JKNet}& $81.79_{\pm0.31}$ & $75.70_{\pm0.57}$ & $97.06_{\pm0.03}$ & $53.76_{\pm4.05}$ & $34.34_{\pm1.83}$ & $63.24_{\pm6.22}$ & $80.97_{\pm0.28}$ & $70.15_{\pm0.94}$ 
\\
\text{APPNP}& $81.47_{\pm0.11}$ & $76.08_{\pm0.24}$ & $97.08_{\pm0.02}$ & $60.40_{\pm0.31}$ & $35.44_{\pm0.77}$ & $67.57_{\pm1.91}$ & $78.14_{\pm0.04}$ & $73.90_{\pm0.85}$ 
\\
\text{GCN}& $80.16_{\pm0.26}$ & $76.90_{\pm0.42}$ & $97.06_{\pm0.02}$ & $45.33_{\pm0.25}$ & $38.54_{\pm1.64}$ & $65.41_{\pm2.26}$ & $80.60_{\pm0.12}$ & $69.24_{\pm0.76}$ 
\\
\text{GCNII}& \textcolor{Emerald}{$\mathbf{86.04_{\pm 0.41}}$} & $74.76_{\pm2.91}$ & \textcolor{Orange}{$\mathbf{97.18_{\pm 0.06}}$} & \textcolor{Emerald}{$\mathbf{75.80_{\pm 0.41}}$} & $35.97_{\pm0.58}$ & $62.16_{\pm2.70}$ & \textcolor{Red}{$\mathbf{81.79_{\pm 0.66}}$} & \textcolor{Red}{$\mathbf{75.35_{\pm 0.66}}$} 
\\
\text{SAGE}& \textcolor{Orange}{$\mathbf{87.06_{\pm 0.30}}$} & $76.68_{\pm0.91}$ & $97.14_{\pm0.04}$ & \textcolor{Orange}{$\mathbf{77.69_{\pm 0.48}}$} & $38.19_{\pm1.12}$ & \textcolor{Red}{$\mathbf{82.16_{\pm 6.51}}$} & $80.66_{\pm0.29}$ & $72.53_{\pm1.22}$ 
\\
\text{GPRGNN}& $84.10_{\pm0.15}$ & $77.44_{\pm0.44}$ & \textcolor{Red}{$\mathbf{97.21_{\pm 0.02}}$} & $74.24_{\pm0.30}$ & $35.80_{\pm0.96}$ & $73.51_{\pm6.16}$ & $78.41_{\pm0.06}$ & \textcolor{Orange}{$\mathbf{74.60_{\pm 0.37}}$} 
\\
\text{UniMP}& \textcolor{Red}{$\mathbf{88.25_{\pm 0.17}}$} & $76.46_{\pm0.65}$ & \textcolor{Emerald}{$\mathbf{97.17_{\pm 0.03}}$} & \textcolor{Red}{$\mathbf{80.31_{\pm 0.56}}$} & \textcolor{Orange}{$\mathbf{40.18_{\pm 1.97}}$} & $73.51_{\pm3.52}$ & \textcolor{Emerald}{$\mathbf{81.46_{\pm 0.38}}$} & $72.93_{\pm1.12}$ 
\\
\midrule
\text{\textbf{RGVT} + Linear} 
& $79.77_{\pm0.27}$ & \textcolor{Red}{$\mathbf{79.02_{\pm 1.10}}$} & $97.13_{\pm0.02}$ & $67.71_{\pm0.87}$ & \textcolor{Emerald}{$\mathbf{39.38_{\pm 0.81}}$} & 
\textcolor{Emerald}{$\mathbf{75.14_{\pm2.96} }$}
& $78.10_{\pm0.29}$ & $72.76_{\pm0.98}$ 
\\
\text{\textbf{RGVT} + MLP} 
& $79.98_{\pm0.18}$ & \textcolor{Orange}{$\mathbf{78.72_{\pm 0.70}}$} & $97.13_{\pm0.02}$ & $70.18_{\pm0.28}$ & \textcolor{Red}{$\mathbf{41.77_{\pm 2.06}}$} & \textcolor{Orange}{$\mathbf{78.92_{\pm 1.21}}$} & $79.23_{\pm0.42}$ & \textcolor{Emerald}{$\mathbf{74.54_{\pm 0.76}}$} 
\\
\bottomrule
\end{tabular}}

\vskip 0.1in

\resizebox{0.70\textwidth}{!}{
\begin{tabular}{l|cccc|c}
\toprule  
\multirow{1}{*}{\textbf{Method}} & 
Wisconsin &
WikiCS &
OGBN-Arxiv &
FullCora &
\textbf{Total Avg.}
\\
\midrule
\textbf{\#Train Nodes} & 
120 &
580 &
90941 &
1400 &
-
\\
\midrule
\text{GIN}& $54.12_{\pm5.30}$ & $56.85_{\pm13.75}$ & $65.77_{\pm1.08}$ & $57.80_{\pm0.32}$
& $54.98_{\pm3.70}$
\\
\text{GAT}& $51.37_{\pm2.56}$ & $79.44_{\pm0.40}$ & \textcolor{Emerald}{$\mathbf{71.89_{\pm 0.24}}$} & $60.71_{\pm0.16} $ & $63.88_{\pm1.87}$  \\
\text{GATv2}& $53.73_{\pm3.56}$ & \textcolor{Emerald}{$\mathbf{79.78_{\pm 0.53}}$} & \textcolor{Red}{$\mathbf{72.13_{\pm 0.11}}$} & $60.80_{\pm0.12}$ & $64.57_{\pm1.83}$
\\
\text{S$^2$GC}& $53.73_{\pm2.24}$ & \textcolor{Red}{$\mathbf{79.90_{\pm 0.05}}$} & $69.17_{\pm0.02}$ & $59.84_{\pm0.03}$ & $65.31_{\pm0.64}$
\\
\text{SGC}& $53.33_{\pm3.51}$ & $79.40_{\pm0.06}$ & $68.69_{\pm0.03}$ & $59.86_{\pm0.00}$  & $65.66_{\pm0.82}$
\\
\text{JKNet}& $55.29_{\pm1.64}$ & $79.46_{\pm0.26}$ & $71.73_{\pm0.24}$ & $61.99_{\pm1.00}$  & $66.19_{\pm1.59}$
\\
\text{APPNP}& $56.08_{\pm1.07}$ & $78.80_{\pm0.06}$ & $70.85_{\pm0.15}$ & \textcolor{Emerald}{$\mathbf{63.65_{\pm 0.09}}$} & $66.26_{\pm0.77}$
\\
\text{GCN}& $53.33_{\pm1.64}$ & $79.07_{\pm0.24}$ & $71.19_{\pm0.30}$ & $62.69_{\pm0.13}$ & $66.46_{\pm0.82}$
\\
\text{GCNII}& $61.57_{\pm2.63}$ & $79.03_{\pm0.17}$ & \textcolor{Orange}{$\mathbf{72.05_{\pm 0.08}}$} & $58.31_{\pm0.44}$  & $67.30_{\pm1.47}$
\\
\text{SAGE}& $62.75_{\pm3.10}$ & $78.77_{\pm0.48}$ & $71.22_{\pm0.24}$ & $61.35_{\pm0.09}$ & $68.36_{\pm1.55}$  
\\
\text{GPRGNN}& $64.71_{\pm2.40}$ & $79.46_{\pm0.31}$ & $69.45_{\pm0.41}$ & \textcolor{Orange}{$\mathbf{64.31_{\pm 0.11}}$} & $68.68_{\pm0.94}$
\\
\text{UniMP}& \textcolor{Emerald}{$\mathbf{65.88_{\pm9.76}}$} & \textcolor{Orange}{$\mathbf{79.80_{\pm 0.13}}$} & $71.78_{\pm0.12}$ & $62.26_{\pm0.11}$   & \textcolor{Emerald}{$\mathbf{68.86_{\pm1.80}}$}
\\
\midrule
\text{\textbf{RGVT} + Linear} 
& \textcolor{Red}{$\mathbf{76.86_{\pm 1.64}}$} & $79.59_{\pm0.16}$ & $70.14_{\pm0.28}$ & \textcolor{Red}{$\mathbf{64.35_{\pm 0.77}}$} & \textcolor{Orange}{$\mathbf{70.03_{\pm 1.26}}$}
\\
\text{\textbf{RGVT} + MLP} 
& \textcolor{Orange}{$\mathbf{71.76_{\pm1.75}}$} & $79.59_{\pm0.19}$ & $71.11_{\pm0.28}$ & $62.35_{\pm2.22}$ & \textcolor{Red}{$\mathbf{71.13_{\pm 1.41}}$}
\\
\bottomrule
\end{tabular}
}
\vskip -0.0in
\label{tbl:vs_GNN_full_2}
\end{table*}

\end{document}